\documentclass[letterpaper]{article}
\usepackage{aaai2027-arxiv}
\usepackage[hyphens]{url}
\usepackage{graphicx}
\usepackage[numbers,sort&compress]{natbib}
\usepackage{caption}
\DeclareCaptionStyle{ruled}{labelfont=normalfont,labelsep=colon,strut=off}

\usepackage{booktabs}
\usepackage{xcolor}
\usepackage{placeins}
\usepackage{dblfloatfix}
\usepackage{algorithm}
\usepackage{algorithmic}

\usepackage{amsmath}
\usepackage{amssymb}
\usepackage{amsthm}
\newtheorem{theorem}{Theorem}
\newtheorem{lemma}{Lemma}
\newtheorem{proposition}{Proposition}
\newtheorem{corollary}{Corollary}
\newtheorem{assumption}{Assumption}
\theoremstyle{remark}
\newtheorem{remark}{Remark}
\usepackage[hidelinks]{hyperref}

\nocopyright

\ifPDFTeX
\else
  \hypersetup{
    pdftitle={InfluenceField: A Differentiable Field with Interventionally Identifiable Causal Structure for Multimodal World Modeling},
    pdfauthor={Zihao Yang, Zijia Wang, Zhiqiu Huang}
  }
\fi


\newcommand{\R}{\mathbb{R}}
\newcommand{\E}{\mathbb{E}}
\newcommand{\GL}{\mathrm{GL}}
\newcommand{\supp}{\mathrm{supp}}
\newcommand{\diag}{\mathrm{diag}}
\newcommand{\norm}[1]{\left\|#1\right\|}
\newcommand{\Ah}{\hat A}
\newcommand{\Ws}{W_\sigma}

\newcommand{\resultpm}[2]{$#1\pm#2$}
\newcommand{\resultpmc}[2]{$#1{\scriptstyle\pm#2}$}

\title{InfluenceField: A Differentiable Field with Interventionally Identifiable Causal Structure for Multimodal World Modeling}

\author{
    Zihao Yang\equalcontrib
    \thanks{Corresponding author: shil6682@ox.ac.uk}
    \textsuperscript{\rm 1},
    Zijia WANG\equalcontrib\textsuperscript{\rm 2},
    Zhiqiu HUANG\textsuperscript{\rm 3}
}

\affiliations{
    \textsuperscript{\rm 1}Mathematical Institute, University of Oxford\\
    \textsuperscript{\rm 2}Department of Computer Science, University of Oxford\\
    \textsuperscript{\rm 3}School of Mathematical Sciences,
    University of Nottingham Ningbo China
}

\begin{document}

\maketitle

\begin{abstract}
Multimodal large language models often capture visual--linguistic correlations
but struggle to predict how local visual interventions propagate and affect
downstream answers. We introduce InfluenceField, an intervention-aware latent
field inserted between the visual encoder and language decoder. It lifts patch
features into a continuous spatial representation, propagates directed
influence over multiple steps, and predicts local intervention effects through
a shared transition operator. Training jointly optimizes language modeling,
cross-environment invariance, counterfactual rollout supervision, and
structural regularization. For a nonlinear finite-basis population model, we
show that target-aligned interventional supervision, together with a one-step
separation condition on the transition, restricts admissible representations
to within-location reparameterizations, so that the directed dependency graph
of the full transition is recovered exactly. A linear specialization gives an
exact partial-coverage characterization and a finite-loss stability bound, and
the field analysis derives the spatial profile of coefficient interventions
together with a shared-channel calibration result.

On CausalVQA, InfluenceField improves overall accuracy over its backbone by
13.1 percentage points, with the largest gains on the planning and
hypothetical categories. Capacity-matched baselines and structural controls
attribute the gains in robustness and factual--counterfactual consistency to
the causal objectives rather than to added capacity.
\end{abstract}

\section{Introduction}
\label{sec:intro}
An intelligent multimodal system should not only recognize what happens, but also explain why it happens and predict what would happen if conditions changed \citep{pearl2009,chen2024cello}. Such capabilities help make multimodal systems more interpretable, robust, and controllable. Video understanding is particularly demanding because models must integrate spatial content, temporal dependencies, and vision–language interactions \citep{lin2024videollava,maaz2024videochatgpt}. However, current multimodal systems still rely heavily on superficial correlations and remain unreliable under context shifts \citep{chen2024cello,li2025mucr}.  Asked why
the ground is wet, a model may answer ``because the person has an
umbrella,'' mistaking correlation for cause (Fig.~\ref{fig:teaser}a).

\begin{figure}[!htbp]
\centering
\includegraphics[width=\columnwidth]{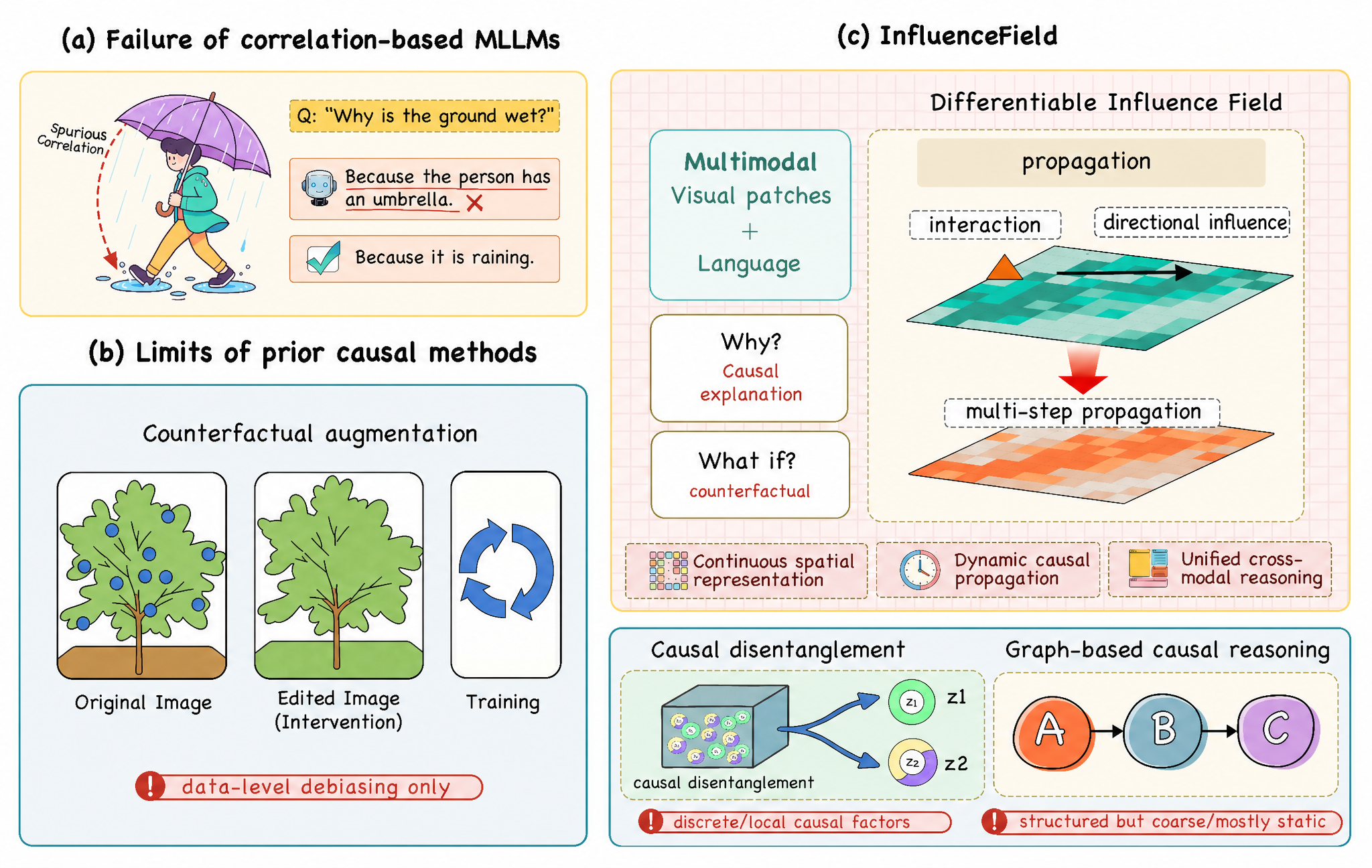}
\caption{(a) Multimodal models often rely on spurious correlations rather than causal mechanisms. (b) Existing causality-guided methods and their limitations. (c) InfluenceField provides unified cross-modal reasoning with a continuous spatial representation and dynamic influence propagation, trained with intervention-aware objectives.}
\label{fig:teaser}
\end{figure}

This limitation is particularly evident in counterfactual reasoning,
where a model must predict the consequences of an intervention in a
specific observed situation rather than extrapolate from observational
correlations. Consider a video in which a ball strikes a row of
dominoes and they fall in sequence. When asked what would have happened
if the middle domino had been removed, the model must trace how this
local intervention would break the chain of spatial interactions and
prevent the downstream dominoes from falling, rather than relying on a
learned correlation between a moving ball and the entire row falling.
Recent approaches incorporate causal information into multimodal
learning through counterfactual data augmentation
\citep{zhang2025cfvlm}, representation-level causal disentanglement
through confounder adjustment \citep{hu2025causalllava}, and multimodal
causal discovery \citep{li2025mllmcd}. However, these approaches do not jointly provide
an explicit spatial state in which local intervention effects can
propagate across regions over multiple reasoning steps.

Beyond these methodological differences, a key theoretical 
question is whether learned causal representations and structures are identifiable. This question is central to causal representation learning \citep{ahuja2023,lachapelle2022}, but remains largely unaddressed in existing multimodal causal methods \citep{hu2025causalllava,li2025mllmcd}.

We propose \textbf{InfluenceField}, a differentiable influence field inserted between the visual encoder and the language decoder. It unifies continuous spatial modeling, multi-step propagation of directed influence, and localized field-space interventions, and is trained with a joint objective combining language modeling, cross-environment invariance, counterfactual rollout supervision, and structural regularization. We also establish sufficient conditions under which target-aligned interventions identify the dependency graph of a nonlinear finite-basis transition up to within-location reparameterization, and a linear specialization that yields finite-coverage and finite-loss guarantees (Sec.~\ref{sec:theory}). Our contributions are:

\begin{itemize}
\item \textbf{Continuous field with directional propagation.} Patch features are lifted into a continuous latent field queryable at arbitrary locations; a state-dependent directional influence matrix with an iterative propagation operator lets local interventions induce multi-step, long-range effects.
\item \textbf{Identifiability analysis.} In a nonlinear finite-basis population
model, target-aligned intervention consistency preserves the complete
transition graph, while the representation is determined up to
within-location diffeomorphisms. The linear specialization gives an exact
partial-coverage characterization and a finite-loss stability bound, and the
field analysis derives the kernel-column intervention profile and its
shared-channel calibration condition.
\item \textbf{Capacity-matched and structure-destruction controls.} Parameter/FLOP-matched non-causal controls, influence-matrix transposition/symmetrization, and shuffled counterfactual supervision isolate the causal contribution; coverage curves, cross-seed stability, operational-influence agreement, bandwidth effects, and the observed $\sqrt{\varepsilon}$ trend probe theory-motivated predictions.
\item \textbf{Empirical validation.} Across causal video QA, multimodal causal discovery, OOD generalization, and counterfactual consistency, InfluenceField outperforms strong baselines, with the largest gains on planning and hypothetical questions.
\end{itemize}

\begin{figure*}[!htbp]
\centering
\includegraphics[width=0.96\textwidth]{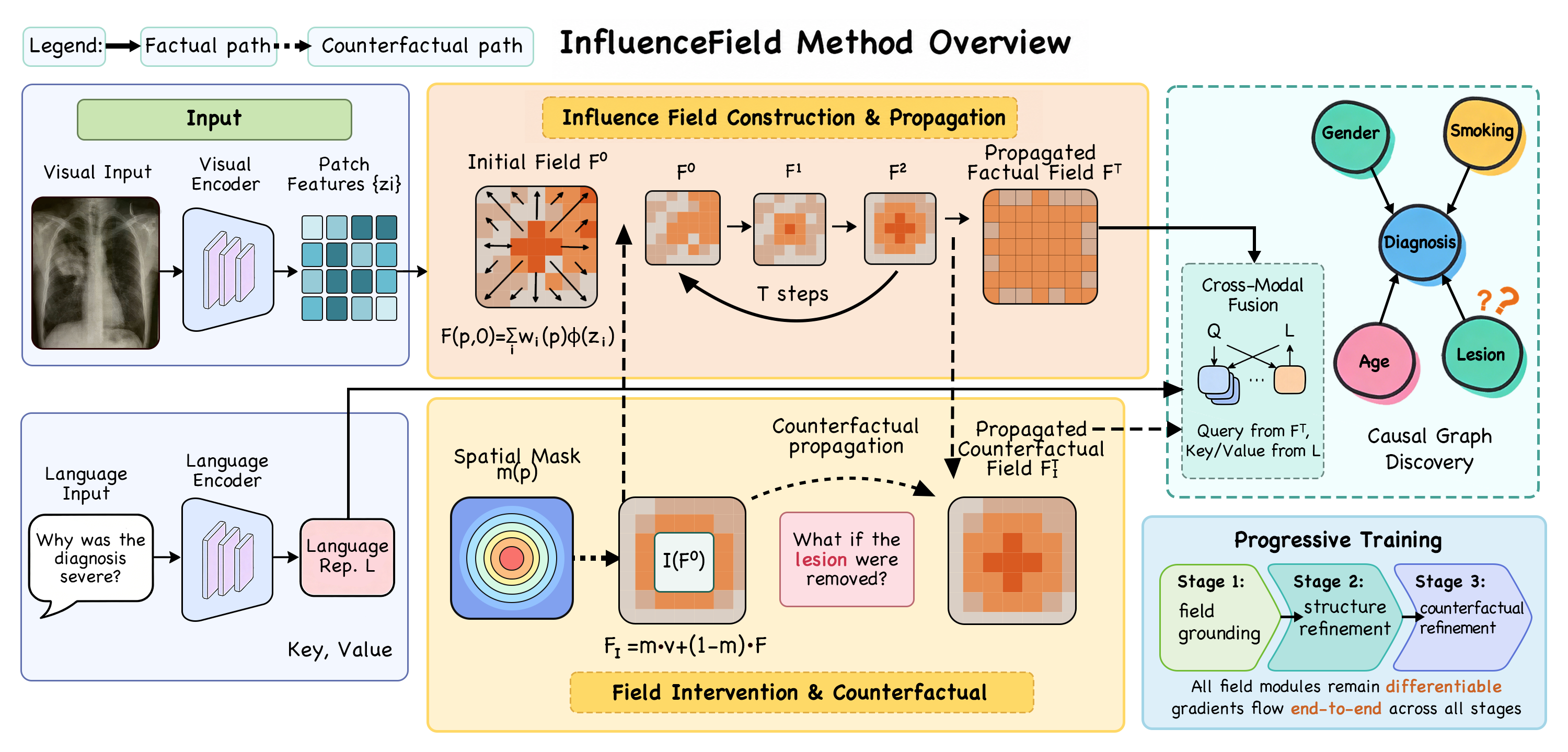}
\caption{Overview of InfluenceField. Visual patches are lifted into a continuous field $\mathcal{F}^0$ by Gaussian interpolation (Eq.~\ref{eq:field-init}), iteratively propagated for $T$ steps via the learned directional influence matrix (Eqs.~\ref{eq:influence}--\ref{eq:propagation}), and optionally intervened upon with a kernel-profile mask (Eq.~\ref{eq:field-int}) whose counterfactual rollout shares the same transition operator. Factual or counterfactual fields are fused with language tokens via cross-modal attention for answer generation; training combines language supervision with invariance, counterfactual rollout, and structural objectives (Eq.~\ref{eq:loss-total}).}
\label{fig:method}
\end{figure*}

\section{Related Work}
\label{sec:related}

\paragraph{Causality-guided multimodal reasoning.}
\emph{Data-level counterfactual augmentation:}
CF-VLM \citep{zhang2025cfvlm} encodes causal supervision through
paired counterfactual examples and specialized objectives, but leaves
the causal structure implicit in the learned representation rather
than instantiating an explicit causal state and transition model.
\emph{Representation-level causal disentanglement:}
Causal-LLaVA \citep{hu2025causalllava} performs causal disentanglement
through confounder adjustment using finite, category-indexed visual and
textual confounder dictionaries; it does not model how intervention
effects propagate across spatial regions over multiple rollout steps.
\emph{Structure-level causal discovery:}
COAT and MLLM-CD \citep{liu2024coat,li2025mllmcd} infer or refine
directed relations from unstructured or multimodal observations.
These methods represent causal structure as factor-level graphs rather
than continuous spatial states governed by a transition operator.
Taken together, these representative approaches neither integrate a
continuous spatial causal state with multi-step language-conditioned
reasoning nor establish conditions for recovering an internal spatial
influence structure.

For evaluation, CELLO \citep{chen2024cello}, CausalVLBench
\citep{komanduri2025}, MuCR \citep{li2025mucr}, and CausalSpatial
\citep{ma2026causalspatial} primarily assess causal reasoning in
image-based settings, whereas CausalVQA \citep{foss2025causalvqa}
focuses on physically grounded videos in which causal effects unfold
over time and serves as our main benchmark.

\paragraph{Causal representation learning and identifiability.}
Identifiability of latent causal representations generally requires
additional supervision or structural assumptions, including
interventions, temporal or auxiliary variables, and mechanism
constraints
\citep{scholkopf2021,ahuja2023,lachapelle2022,hyvarinen2019,khemakhem2020}.
Representative approaches include CausalVAE, CITRIS, and BISCUIT
\citep{yang2020causalvae,lippe2022citris,lippe2023biscuit}.
Prior work establishes nonlinear identification from paired interventions or
temporal supervision, recovering latents up to relabeling and within-variable
reparameterization and handling multidimensional causal factors
\citep{brehmer2022,lippe2022citris}. Complementary work obtains affine or
permutation-and-scaling equivalence classes by exploiting interventional
support geometry or linear latent causal models under linear and general
nonlinear observation maps \citep{ahuja2023,squires2023,buchholz2023}. Our
analysis tailors these ideas to the deterministic finite-basis transition used
for field rollout: the nonlinear result preserves the block support of the
Jacobian of the full transition map, while the linear specialization
characterizes partial coverage, finite-loss stability, and field-induced
intervention geometry.

\paragraph{World models and continuous scene representations.}
Canonical world models learn action-conditioned latent dynamics for prediction and control \citep{ha2018,hafner2019,hafner2020}. However, predictive fit under the observed regime need not transfer to forecasts under interventions \citep{vankadara2022}. Causal dynamics have been studied in object-oriented reinforcement-learning environments \citep{yu2024}, while recent work couples an LLM agent with a CRL-based causal world model for multi-step reasoning and planning \citep{gkountouras2025}. The latter operates on disentangled factor-level variables through a modular simulator. NeRF \citep{mildenhall2021nerf} and D-NeRF \citep{pumarola2021dnerf} provide differentiable continuous-coordinate representations but are not designed for causal interventions or directed influence propagation. InfluenceField shares the multi-step rollout paradigm, but additionally embeds a continuous spatial state with directed influence and localized interventions directly in the visual--language pathway for language-grounded counterfactual reasoning.

\section{Methodology}
\label{sec:method}

\subsection{Overview}
Given multimodal inputs, InfluenceField (Fig.~\ref{fig:method}) learns one shared field representation through four jointly trained components---\emph{field construction}, a \emph{directional influence function}, a \emph{propagation operator}, and a \emph{field intervention module}---supporting factor localization, directed influence inference, and intervention-aware counterfactual prediction.

\subsection{Field Construction}
Let the visual encoder output $N$ patch features $\{z_i\}_{i=1}^N$, $z_i\in\mathbb{R}^C$, at normalized positions $p_i\in[0,1]^2$. We define a field $\mathcal{F}:[0,1]^2\times\{0,\dots,T\}\to\mathbb{R}^C$ initialized by Gaussian interpolation
\begin{equation}
\mathcal{F}(p,0)=\sum_{i=1}^N w_i(p)\,\phi(z_i),
\label{eq:field-init}
\end{equation}
where $\phi$ is a learnable projection shared across patches and
\begin{equation}
w_i(p)=\frac{\exp(-\|p-p_i\|^2/2\sigma^2)}{\sum_{j=1}^N \exp(-\|p-p_j\|^2/2\sigma^2)}
\label{eq:field-kernel}
\end{equation}
is a normalized Gaussian kernel with bandwidth $\sigma>0$. The construction is continuous (queryable at arbitrary positions), differentiable (gradients flow into the field and encoder interface), and smooth (nearby positions share feature mixtures, aiding occlusion interpolation and consistency across object parts). In practice we evaluate the field on a regular $H\times W$ grid, $\mathbf{F}^0\in\mathbb{R}^{K\times C}$, $K=HW$; in our implementation the field grid coincides with the patch grid ($K=N$), so the sampled interpolation matrix is square. 

\subsection{Directional Influence and Propagation}
Given the discretized field $\mathbf{F}^t$, the directed influence from location $i$ to $j$ is
\begin{equation}
G^t_{ij}=S^t_{ij}\cdot\frac{1}{n_h}\sum_{h=1}^{n_h}A^{t,h}_{ij},
\label{eq:influence}
\end{equation}
where $A^{t,h}_{ij}$ is scaled dot-product attention affinity with a learnable directional bias,
\begin{equation}
e^{t,h}_{ij}=\frac{(\mathbf{W}^h_Q\mathbf{F}^t_i)^{\!\top}(\mathbf{W}^h_K\mathbf{F}^t_j)}{\sqrt{d_h}}+b^{\mathrm{dir}}_h,\;
A^{t,h}_{ij}=\frac{\exp(e^{t,h}_{ij})}{\sum_k \exp(e^{t,h}_{ik})},
\label{eq:attn}
\end{equation}
and the sparsity gate $S^t_{ij}=\sigma(\mathrm{MLP}[\mathbf{F}^t_i\,\|\,\mathbf{F}^t_j])$ suppresses weak or implausible edges. The matrix is row-normalized, $\hat G^t_{ij}=G^t_{ij}/(\sum_k G^t_{ik}+\epsilon)$; asymmetry arises from distinct query/key projections and the pairwise gate, so generally $G^t_{ij}\neq G^t_{ji}$, and $\mathbf{G}^t$ evolves with the scene state since it is recomputed each step. Effects propagate via
\begin{equation}
\mathbf{F}^{t+1}=\alpha\,(\hat{\mathbf{G}}^t)^{\!\top}\mathbf{F}^t+(1-\alpha)\,\widetilde{\mathbf{F}}^t,
\label{eq:propagation}
\end{equation}
where $\widetilde{\mathbf{F}}^t=\mathbf{W}_O(\mathrm{MHA}(\mathbf{F}^t))$ is an attention-based refinement branch and $\alpha=\sigma(w_{\mathrm{mix}})\in(0,1)$ is learnable: the first term performs explicit influence-based transport, the second preserves contextual semantics beyond the sparse graph. After $T$ steps, $\mathbf{F}^T=\mathcal{T}^T(\mathbf{F}^0)$; multi-step propagation lets indirect dependencies emerge, with depth playing the role of latent rollout depth rather than frame count. We emphasize---and prove next---that the attention parameterization by itself carries no causal semantics; its causal content is supplied by the interventional training objective.

\subsection{Field Intervention and Counterfactual Rollout}
Following \citet{pearl2009}, we modify local field values inside a spatial mask and roll out with the same operator. Given center $p^*$ and target value $v\in\mathbb{R}^C$,
\begin{equation}
\mathcal{F}_{\mathcal{I}}(p,t)=m(p)\,v+(1-m(p))\,\mathcal{F}(p,t),
\label{eq:field-int}
\end{equation}
with Gaussian mask $m(p)=\exp(-\|p-p^*\|^2/2\sigma^2)$ whose bandwidth is
tied to that of Eq.~\eqref{eq:field-kernel}. Theorem~\ref{thm:field}
derives the exact kernel-column profile of a single coefficient intervention;
the convex blend above approximates that profile up to a smoothness residual
and a finite-grid boundary-normalization term. Intervening in field space
avoids synthesizing photorealistic counterfactual images while preserving
locality and differentiability. We instantiate object removal (learnable null
state), attribute modification (lightweight transformation network), and
custom interventions. After intervention,
$\mathbf{F}^0_{\mathcal{I}}=\mathcal{I}(\mathbf{F}^0)$ and
$\mathbf{F}^k_{\mathcal{I}}=\mathcal{T}^k(\mathbf{F}^0_{\mathcal{I}})$,
$k=1,\dots,T$; factual and counterfactual trajectories share the same
operator and readout (a shared-transition principle), yielding outputs $o$
and $o_{\mathcal{I}}$.

\subsection{Cross-Modal Fusion and Training}
\label{sec:fusion}
The propagated field (query) is fused with language tokens $\mathbf{L}$ (key/value) via $\mathbf{F}_{\mathrm{fused}}=\mathrm{CrossAttn}(\mathbf{F}^T,\mathbf{L},\mathbf{L})$, followed by pooling and an MLP readout. Training proceeds in three stages (field grounding on intervention-rich pretraining data, structure refinement under multi-environment supervision, downstream fine-tuning) with
\begin{equation}
\mathcal{L}=\mathcal{L}_{\mathrm{lm}}+\lambda_1\mathcal{L}_{\mathrm{inv}}+\lambda_2\mathcal{L}_{\mathrm{cf}}+\lambda_3\mathcal{L}_{\mathrm{sp}}+\lambda_4\mathcal{L}_{\mathrm{sm}}.
\label{eq:loss-total}
\end{equation}
$\mathcal{L}_{\mathrm{lm}}$ is standard language modeling. The invariance loss compares propagated fields across environment pairs of the same scene \citep{peters2016}; guided by the analysis below, pairs vary nuisance while preserving the causal parameterization (color jitter and background substitution; viewpoint perturbation is excluded because it changes the field's spatial indexing), and the loss is computed over a foreground mask $B\in\{0,1\}^K$ (broadcast across channels) covering regions unaffected by the nuisance change,
\begin{equation}
\mathcal{L}_{\mathrm{inv}}=\big(\textstyle\sum_k B_k\,C\big)^{-1}\big\|B\odot(\mathbf{F}^T_{e_1}-\mathbf{F}^T_{e_2})\big\|_F^2,
\label{eq:loss-inv}
\end{equation}
so regions whose content legitimately differs (e.g.\ substituted background) are not forced to agree. The counterfactual evolution loss supervises rollout against target post-intervention trajectories,
\begin{equation}
\mathcal{L}_{\mathrm{cf}}=T^{-1}\textstyle\sum_{k=1}^T (KC)^{-1}\|\mathbf{F}^k_{\mathcal{I}}-\mathbf{F}^k_{\mathrm{gt}}\|_F^2,
\label{eq:loss-cf}
\end{equation}
where $\mathbf{F}^k_{\mathrm{gt}}$ encodes the \emph{ground-truth post-intervention observations} of the synthetic pretraining environments (which execute the intervention in the simulator) with an EMA copy of the encoder under stop-gradient; anchoring targets to simulator-produced observations rules out the degenerate solution of ignoring interventions. $\mathcal{L}_{\mathrm{sp}}$ ($\ell_1$ on $\hat{\mathbf{G}}^t$) and $\mathcal{L}_{\mathrm{sm}}$ (neighboring-pair smoothness) regularize structure and field.

\section{Interventional Identifiability of Field Dynamics}
\label{sec:theory}

This section states precisely which structure can be identified from
field-level rollout supervision. We distinguish the dependency graph of the
\emph{complete transition operator} from the internal attention matrix used
to parameterize one of its branches. We first give a population result for
nonlinear dynamics and then specialize it to a linear finite-basis model,
where finite intervention coverage and nonzero-loss guarantees can be made
quantitative. Full proofs are in
Appendices~\ref{sec:nonlinear}--\ref{sec:field}.

\paragraph{Finite-basis field and transition graph.}
Let $\{\varphi_i\}_{i=1}^K$ be linearly independent spatial basis functions
on a domain $\Omega$. For coefficient blocks
$z=(z_1,\ldots,z_K)$, $z_i\in\mathcal Z_i\subseteq\R^C$, define
\begin{equation}
 (\mathcal Bz)(p)=\sum_{i=1}^K\varphi_i(p)z_i,
 \qquad \mathcal H_K:=\operatorname{range}(\mathcal B).
 \label{eq:theory-synthesis}
\end{equation}
Linear independence makes $\mathcal B$ injective, so every field in
$\mathcal H_K$ has unique coefficients. The Gaussian construction of
Eq.~\eqref{eq:field-kernel} is the sampled instance used by InfluenceField.
We assume that the field update closes on $\mathcal H_K$ and define its
coefficient transition by
$\hat f=\mathcal B^{-1}\circ\mathcal T\circ\mathcal B$. On the sampled grid,
this coefficientization is well defined whenever the synthesis matrix is
invertible and each updated grid is reconstructed in the same basis.
Let $z^{t+1}=f^*(z^t)$ on an open connected product domain
$\mathcal Z=\prod_i\mathcal Z_i$, where
$f^*=(f^*_1,\ldots,f^*_K)$ is $C^1$. Its directed
transition-dependency graph is
\begin{equation}
 E^*:=\{(i,j):D_i f^*_j(z)\not\equiv0\text{ on }\mathcal Z\}.
 \label{eq:transition-graph}
\end{equation}
Thus $i\to j$ means that the next state at location $j$ functionally
depends on the current state at location $i$. Because these are time-lagged
dependencies, self-loops and directed cycles are allowed. The graph has a
causal interpretation when $z$ is a sufficient Markov state and the edits
below correspond to valid structural interventions.

Let $y=q(z)$ denote the learned coefficient representation, including the
observation map and encoder. We assume that
$q:\mathcal Z\to\mathcal Y=\prod_i\mathcal Y_i$ is a $C^1$
diffeomorphism onto the product domain $\mathcal Y$ and that corresponding
latent and learned blocks have the same dimension. These are standing
assumptions for the result below, not properties implied by the network
architecture alone.

\paragraph{Target-aligned interventions.}
Fix an anchor $a_i\in\mathcal Z_i$ and let $R_i:\mathcal Z\to\mathcal Z$
replace $z_i$ by $a_i$. The learned edit $Q_i$ may choose its replacement
value as a function of the input, but the map $z\mapsto Q_iq(z)$ is continuous
and changes only learned block $i$. We impose four population conditions.
(N1) The factual source distribution has support $\mathcal Z$, and zero
factual field-state residual gives, for every $z\in\mathcal Z$,
\begin{equation}
 \hat f(q(z))=q(f^*(z));
 \label{eq:nonlinear-factual}
\end{equation}
(N2) for every location $i$, the paired \emph{pre-intervention} source-state
distribution has support $\mathcal Z$, including the original value of
$z_i$; (N3) zero one-step counterfactual \emph{field-state} loss gives
\begin{equation}
 \hat f(Q_iq(z))=q(f^*(R_i z));
 \label{eq:nonlinear-cf}
\end{equation}
for every $z\in\mathcal Z$. Full support and continuity extend zero
population residuals to these pointwise identities. In (N3), the
stop-gradient target encoder in Eq.~\eqref{eq:loss-cf} is idealized as the
same population representation $q$; language-answer loss alone would not
imply this identity. (N4) The transition is one-step separating on the
relevant intervention pairs:
\begin{equation}
 \hat f(Q_iq(z))=\hat f(q(R_i z))
 \ \Longrightarrow\ Q_iq(z)=q(R_i z).
 \label{eq:nonlinear-separation}
\end{equation}
Injectivity of $\hat f$ on
$Q_iq(\mathcal Z)\cup q(R_i\mathcal Z)$ is a simple sufficient condition.
Alternatively, direct alignment of the post-intervention state at step $0$
makes (N4) unnecessary. This separation condition is substantive: a
many-to-one transition can map distinct edited states to the same successor.

Factual fitting alone cannot enforce the desired coordinates. For any
diffeomorphism $M$, the pair $q_M=M\circ q$ and
$\hat f_M=M\circ\hat f\circ M^{-1}$ produces the same factual trajectories;
a coordinate-mixing $M$ can change the transition graph. Target-aligned
interventions remove precisely this ambiguity.

\begin{theorem}[Nonlinear transition-graph identifiability]
\label{thm:nonlinear-ident}
Under the standing finite-basis, product-domain, and regularity assumptions
above and (N1)--(N4), every zero-loss representation is blockwise
identifiable:
there exist $C^1$ diffeomorphisms
$\psi_i:\mathcal Z_i\to\mathcal Y_i$ such that
\begin{equation}
 q(z)=(\psi_1(z_1),\ldots,\psi_K(z_K)).
 \label{eq:block-ident}
\end{equation}
Moreover, for every $z\in\mathcal Z$,
\begin{equation}
 D_i\hat f_j(q(z))=
 D\psi_j(f^*_j(z))\,D_i f^*_j(z)\,
 [D\psi_i(z_i)]^{-1}.
 \label{eq:block-jacobian}
\end{equation}
Consequently, $D_i\hat f_j(q(z))=0$ if and only if
$D_i f^*_j(z)=0$, and the learned and true transition-dependency graphs are
identical. Transition functions and derivative magnitudes remain identified
only up to the within-location reparameterizations $\{\psi_i\}$.
\end{theorem}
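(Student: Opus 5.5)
The plan is to combine (N3) and (N4) into a pointwise intertwining identity for the edits, deduce block-diagonality of $Dq$ from it, and then transport the transition Jacobian by the chain rule.

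\emph{Step 1: the edit identity.} From (N1) applied at the point $R_iz\in\mathcal Z$, which lies in $\mathcal Z$ because $\mathcal Z$ is a product domain and $a_i\in\mathcal Z_i$, we get $\hat f(q(R_iz))=q(f^*(R_iz))$. From (N3) we get $\hat f(Q_iq(z))=q(f^*(R_iz))$. The two left-hand sides therefore coincide, and (N4) gives
\begin{equation*}
Q_iq(z)=q(R_iz)\quad\text{for all } z\in\mathcal Z\text{ and all } i.
\end{equation*}
Since $Q_i$ changes only learned block $i$, for every $j\neq i$ we obtain $q_j(R_iz)=q_j(z)$.

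\emph{Step 2: block structure.} Fix $j$ and $i\neq j$. The identity $q_j(z_{-i},a_i)=q_j(z)$ says that $q_j(z)$ equals a value not depending on $z_i$. Because this holds for every $z$ in the open connected product domain, $D_iq_j\equiv0$. Doing this for all $i\neq j$ shows $q_j(z)=\psi_j(z_j)$ with $\psi_j:\mathcal Z_j\to\mathcal Y_j$ of class $C^1$. Then $Dq$ is block diagonal with blocks $D\psi_j$, and invertibility of $Dq$ forces each $D\psi_j$ to be invertible, since the blocks are square by the equal-dimension assumption. Injectivity and surjectivity of $q$ onto $\prod_j\mathcal Y_j$ transfer to each $\psi_j$: for injectivity fix the other coordinates, and for surjectivity use the product structure. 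By the inverse function theorem each $\psi_j$ is a $C^1$ diffeomorphism, which proves Eq.~\eqref{eq:block-ident}.

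\emph{Step 3: Jacobian transport.} By (N1), $\hat f=q\circ f^*\circ q^{-1}$ on $\mathcal Y$, so $\hat f_j(y)=\psi_j\bigl(f^*_j(\psi_1^{-1}(y_1),\ldots,\psi_K^{-1}(y_K))\bigr)$. Differentiating in $y_i$ at $y=q(z)$ gives Eq.~\eqref{eq:block-jacobian}, using $D(\psi_i^{-1})(\psi_i(z_i))=[D\psi_i(z_i)]^{-1}$. The two outer factors are invertible, so $D_i\hat f_j(q(z))=0$ if and only if $D_if^*_j(z)=0$. Since $q$ is a bijection, $D_i\hat f_j\not\equiv0$ on $\mathcal Y$ exactly when $D_if^*_j\not\equiv0$ on $\mathcal Z$, and the two graphs agree. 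The residual freedom is only the $\psi_i$: any blockwise $\psi$ together with $\hat f=\psi\circ f^*\circ\psi^{-1}$ satisfies (N1)--(N4) when $Q_i$ is chosen to replace block $i$ by $\psi_i(a_i)$. This shows that magnitudes are determined only up to these reparameterizations.

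\emph{Main obstacle.} The delicate point is Step 1. One has to check that (N1) may be applied at $R_iz$, which needs $R_iz\in\mathcal Z$ and the support of the factual distribution, and that the premise of (N4) is obtained exactly in the form stated. The passage from zero population loss to pointwise identities is assumed in the statement via support and continuity, so I will only note it. In Step 2 the one subtle item is that constancy in $z_i$ along the anchor substitution gives independence from $z_i$ globally. This is immediate here because the identity holds for each fixed $z$ and directly exhibits $q_j(z)$ as a function of $z_{-i}$ alone, so no connectedness argument is actually needed.
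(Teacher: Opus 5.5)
Your proposal is correct and matches the paper's proof step for step. Applying (N1) at $R_iz$ together with (N3) and the separation condition (N4) gives $Q_iq(z)=q(R_iz)$; the anchor substitution then makes each $q_j$ independent of $z_i$ for $i\neq j$; and the chain rule on the conjugacy yields Eq.~\eqref{eq:block-jacobian}. Your extra check is a small addition the paper only asserts: blockwise $\psi$ with $\hat f=\psi\circ f^*\circ\psi^{-1}$ and edits replacing block $i$ by $\psi_i(a_i)$ satisfy (N1)--(N4), so the residual within-location ambiguity is genuine.
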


\emph{Proof sketch.}
Applying Eq.~\eqref{eq:nonlinear-factual} to $R_i z$ and comparing with
Eq.~\eqref{eq:nonlinear-cf}, left-cancellability yields
$Q_iq(z)=q(R_i z)$. Since $Q_i$ leaves every block $j\neq i$ unchanged,
$q_j(z)=q_j(R_i z)$ for all contexts. Repeating this argument for every
$i\neq j$ shows that $q_j$ depends only on $z_j$, proving
Eq.~\eqref{eq:block-ident}. Differentiating the conjugacy
$\hat f\circ q=q\circ f^*$ then gives Eq.~\eqref{eq:block-jacobian}; its two
outer factors are nonsingular, so they preserve the zero--nonzero status and
rank of every Jacobian block. \hfill$\square$

One anchor value per node is sufficient here because its edit is paired with
every pre-intervention state in the product domain. This is fundamentally
different from observing one finite training pair per node, which is not
sufficient for global nonlinear identification: a smooth coordinate-mixing
diffeomorphism can agree with the desired map on any finite set and differ
elsewhere. Known target alignment removes the usual permutation ambiguity; if
targets are observed only up to an unknown relabeling, the conclusion holds up
to that permutation. This result is tailored to the deterministic finite-basis
field transition and should be viewed within the broader literature on
intervention-based nonlinear causal representation learning
\citep{brehmer2022,lippe2022citris}, rather than as a general nonlinear causal
representation theorem.

The same argument yields exact multi-step rollout for each calibrated
intervention:
$\hat f^k(Q_iq(z))=q((f^*)^k(R_i z))$. An unseen replacement value
$a\in\mathcal Z_i$ is realizable when the learned edit uses the calibrated
value $\psi_i(a)$; graph recovery alone does not assign semantic units to
intervention values.

\begin{proposition}[Finite-loss intervention equivariance]
\label{prop:nonlinear-finite}
Suppose a uniform $\mu>0$ satisfies
$\|\hat f(u)-\hat f(v)\|\ge\mu\|u-v\|$ for every relevant state pair. Define
$r^i_{\mathrm f}(z)=\hat f(q(R_i z))-q(f^*(R_i z))$ and
$r^i_{\mathrm c}(z)=\hat f(Q_iq(z))-q(f^*(R_i z))$. Then
\begin{equation}
 \sum_{j\ne i}\|q_j(z)-q_j(R_i z)\|^2
 \le \frac{2}{\mu^2}\bigl(\|r^i_{\mathrm f}(z)\|^2+
 \|r^i_{\mathrm c}(z)\|^2\bigr).
 \label{eq:nonlinear-finite}
\end{equation}
If the two squared residuals have means at most
$\varepsilon_{\mathrm f,i}$ and $\varepsilon_{\mathrm c,i}$ under the paired
source distribution for node $i$, the root-mean-square off-target change is at
most $\sqrt{2(\varepsilon_{\mathrm f,i}+\varepsilon_{\mathrm c,i})}/\mu$.
This controls approximate intervention equivariance, not nonlinear Jacobian
support; support recovery from finite loss additionally requires
derivative-level control and a minimum edge strength. Direct step-$0$
alignment yields the analogous bound without the inverse-Lipschitz factor.
\end{proposition}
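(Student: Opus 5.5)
The plan mirrors the proof sketch of Theorem~\ref{thm:nonlinear-ident}, with exact cancellation replaced by the inverse-Lipschitz estimate. Fix $i$ and $z$, and set $u=Q_iq(z)$ and $v=q(R_iz)$. The two residuals share the target $q(f^*(R_iz))$, so subtracting them gives
\begin{equation}
\hat f(u)-\hat f(v)=r^i_{\mathrm c}(z)-r^i_{\mathrm f}(z).
\end{equation}
We assume the pair $(u,v)$ is among the relevant state pairs. The hypothesis then gives $\mu\|u-v\|\le\|r^i_{\mathrm c}(z)\|+\|r^i_{\mathrm f}(z)\|$. Squaring and using $(a+b)^2\le 2a^2+2b^2$ yields
\begin{equation}
\|u-v\|^2\le\frac{2}{\mu^2}\bigl(\|r^i_{\mathrm f}(z)\|^2+\|r^i_{\mathrm c}(z)\|^2\bigr).
\end{equation}

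Next, we discard the on-target block. By construction $Q_i$ changes only learned block $i$, so $u_j=q_j(z)$ for every $j\neq i$, while $v_j=q_j(R_iz)$. Hence
\begin{equation}
\sum_{j\ne i}\|q_j(z)-q_j(R_iz)\|^2=\sum_{j\ne i}\|u_j-v_j\|^2\le\|u-v\|^2,
\end{equation}
which is Eq.~\eqref{eq:nonlinear-finite}. The main point to check is this identification of off-target coordinates of $u$ with those of $q(z)$. It requires that the block decomposition of $\mathcal Y$ in which $Q_i$ acts is the same one used to read off $q_j$, and this holds by the product-domain standing assumption.

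For the averaged statement, we take expectations over the paired source distribution for node $i$. Linearity gives
\begin{equation}
\E\sum_{j\ne i}\|q_j(z)-q_j(R_iz)\|^2\le\frac{2}{\mu^2}\bigl(\varepsilon_{\mathrm f,i}+\varepsilon_{\mathrm c,i}\bigr).
\end{equation}
Taking square roots gives the stated root-mean-square bound $\sqrt{2(\varepsilon_{\mathrm f,i}+\varepsilon_{\mathrm c,i})}/\mu$.

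The remarks following the inequality need only brief justification. A bound on state differences says nothing about Jacobian blocks, since small $C^0$ perturbations can carry arbitrary derivatives; hence support recovery needs derivative-level control and a minimum edge strength. Under direct step-$0$ alignment, the residual is $u-q(R_iz)$ itself, so the same block-restriction step applies without dividing by $\mu$.

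The only genuine obstacle is the measurability and relevance bookkeeping. The lower-Lipschitz hypothesis must hold on the union $Q_iq(\mathcal Z)\cup q(R_i\mathcal Z)$ appearing in (N4), and we will state this explicitly as what ``relevant state pair'' means.
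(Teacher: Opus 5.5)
Your proposal is correct and follows the paper's proof essentially step for step. You subtract the residuals to obtain $\hat f(Q_iq(z))-\hat f(q(R_iz))=r^i_{\mathrm c}(z)-r^i_{\mathrm f}(z)$, apply the inverse-Lipschitz bound with the triangle inequality and $(a+b)^2\le 2a^2+2b^2$, restrict to off-target blocks because $Q_i$ changes only block $i$, and take expectations; your remarks on Jacobian support and direct step-$0$ alignment likewise match the paper's accompanying discussion.
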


\paragraph{Linear specialization.}
The nonlinear result requires population coverage across contexts. Under the
stronger assumption of linear dynamics and mixing, finitely many experiments
suffice and the guarantees become quantitative. Let scalar state
$Z^t\in\R^K$ evolve as $Z^{t+1}=A^*Z^t$, with edge $i\to j$ defined by
$A^*_{ji}\neq0$. Observations are $X^t=g(Z^t)$ with $g$ injective; the learner
uses $F^t=h(X^t)=HZ^t$, $H\in\GL(K)$, and
$F^{t+1}=\hat A F^t$. This corresponds to the frozen-$G$,
$\alpha=1$, no-refinement limit of Eq.~\eqref{eq:propagation}. We assume:
(A1) the stated linear dynamics; (A2) linear invertible mixing; (A3) factual
initial states span $\R^K$; (A4) every node receives one known-site,
nontrivial hard intervention; (A5) $A^*$ is nonsingular; and (A6) the factual
and corresponding one-step counterfactual population losses are zero. These
assumptions characterize the surrogate and are not consequences of
$\mathcal L_{\mathrm{inv}}$ or $\mathcal L_{\mathrm{lm}}$.

\begin{lemma}[Observational non-identifiability]
\label{lem:nonident}
Under factual fitting alone, the zero-loss solutions form the similarity
orbit $\{(H',H'A^*H'^{-1}):H'\in\GL(K)\}$. Consequently, transition support,
sparsity, and directionality are not identifiable in general. In particular,
if $A^*$ is diagonalizable over $\R$, an observationally equivalent solution
is diagonal and hence symmetric.
\end{lemma}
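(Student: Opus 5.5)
The plan is to work directly with the factual zero-loss condition of (A6) restricted to the factual trajectories, and to characterize its solution set. A learner is a pair $(H',\hat A)$ with $H'\in\GL(K)$. Its features are $F^t=H'Z^t$, and zero factual loss means $\hat A H' Z^0 = H' A^* Z^0$ for every factual initial state $Z^0$. By (A3) these initial states span $\R^K$, so the identity holds on a spanning set and hence as a matrix identity: $\hat A H' = H' A^*$, that is, $\hat A = H' A^* H'^{-1}$. Conversely, every pair $(H', H'A^*H'^{-1})$ reproduces the trajectories $F^{t+1}=\hat A F^t$ exactly, since $H'A^*H'^{-1}H'Z^t = H'Z^{t+1}$. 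This gives both inclusions of the similarity-orbit claim. (Multi-step trajectories add no constraints, because they follow from the one-step identity by induction.)

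The next step is to show that support, sparsity, and directionality are not invariants of this orbit. Similarity preserves only the spectrum and the Jordan structure, and not the zero pattern, so a single explicit witness suffices for the phrase ``in general''. I would use $K=2$ with $A^*=\begin{pmatrix}1&0\\1&2\end{pmatrix}$, which has the single edge $1\to2$. Its eigenvalues $1,2$ are distinct, so it is diagonalizable; taking $H'$ to be the inverse eigenvector matrix gives $\hat A=\diag(1,2)$, whose support is empty off the diagonal. A permutation similarity $H'=P$ instead reverses the edge direction, producing $2\to1$. Both solutions lie in the orbit and fit the factual data perfectly.

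For the final sentence, suppose $A^*=V\Lambda V^{-1}$ with $V$ and $\Lambda$ real. I choose $H'=V^{-1}$, which lies in $\GL(K)$ because $V$ is invertible. Then $\hat A = V^{-1}A^*V=\Lambda$, which is diagonal and therefore symmetric. The only point requiring care is the realness of $V$: over $\mathbb{C}$ the choice $H'$ would leave $\GL(K)$, which is why the hypothesis is real diagonalizability. I expect no real obstacle here. The only subtle step is the spanning argument of (A3), which upgrades pointwise trajectory agreement to the matrix identity. Note also that (A5) is not needed for this lemma.
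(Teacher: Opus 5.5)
Your proof is correct and takes essentially the paper's approach. The differences are cosmetic: the paper passes from zero population loss to $\hat A'H'=H'A^*$ via $\mathrm{tr}(M\Sigma_0M^\top)\ge\lambda_{\min}(\Sigma_0)\norm{M}_F^2$, which is the rigorous form of your spanning-set step (a nonzero $u\perp\ker M$ would force $u^\top\Sigma_0u=0$); its witness shears $\diag(\lambda_1,\lambda_2)$ to create either cross-edge orientation, whereas yours removes and reverses an existing edge; and the real-diagonalization step is identical.
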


Hence an attention-based influence matrix---however sparse or asymmetric---carries no causal guarantee by itself; interventional supervision supplies exactly the missing constraint:

\begin{theorem}[Linear identification from finite interventions]
\label{thm:ident}
Under (A1)--(A6), $H$ is diagonal and
$\hat A_{ij}=(h_i/h_j)A^*_{ij}$. Hence
$\supp(\hat A)=\supp(A^*)$, including edge directions, while weights are
identified up to node-wise scaling. Every learned coordinate intervention
has an exact latent coordinate counterpart, and its subsequent rollout is
exact. A specified unseen latent target additionally requires calibration of
its representation-space value.
\end{theorem}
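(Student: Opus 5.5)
The plan is to mirror the proof of Theorem~\ref{thm:nonlinear-ident} in the linear setting, where every identity becomes a matrix identity and finitely many experiments suffice. First I will record what (A6) gives. Zero factual loss together with (A3) gives $\hat A H Z = H A^* Z$ for a spanning set of initial states $Z$. Hence $\hat A H = H A^*$, that is, $\hat A = H A^* H^{-1}$; this is the similarity orbit of Lemma~\ref{lem:nonident}.

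Next I will fix the counterfactual model. For node $i$, the hard intervention is $R_i Z = (I-e_ie_i^\top)Z + a_i e_i$ with $a_i$ nontrivial, by (A4). The learned coordinate edit $Q_i$ replaces learned coordinate $i$ of $F=HZ$ by some value $c_i$ and leaves the other coordinates unchanged. Zero one-step counterfactual loss gives $\hat A\,Q_i(HZ) = H A^* R_i Z$. The factual identity applied to $R_iZ$ gives $\hat A H R_i Z = H A^* R_i Z$. Because $A^*$ is nonsingular by (A5), $\hat A$ is nonsingular as a similar matrix. This invertibility plays the role of (N4), so we can cancel and obtain $Q_i(HZ) = H R_i Z$.

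Now I compare off-target coordinates. For each $j\neq i$, coordinate $j$ of the left side equals $(HZ)_j$, while coordinate $j$ of the right side equals $(H R_i Z)_j$. Subtracting yields $H_{ji}(Z_i - a_i)=0$. This must hold for the intervened source states, and if those states vary in $Z_i$ (at least one has $Z_i\neq a_i$, which is what ``nontrivial'' should guarantee), then $H_{ji}=0$. Ranging over all $i$ and all $j\neq i$ shows that $H$ is diagonal, $H=\diag(h_1,\dots,h_K)$, and each $h_i\neq0$ because $H\in\GL(K)$. Substituting into $\hat A = HA^*H^{-1}$ gives $\hat A_{ij} = (h_i/h_j)A^*_{ij}$. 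Since every ratio $h_i/h_j$ is nonzero, the supports coincide entrywise, and so do the directions because the indices are not swapped. The weights are determined only up to the node-wise scaling $h$.

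For the remaining claims, the identity $Q_i(HZ)=HR_iZ$ shows that the learned edit is exactly the latent hard intervention with value $a_i$, viewed in the coordinates given by $H$; it forces $c_i = h_i a_i$. Induction with $\hat A^k H = H (A^*)^k$ then gives exact rollout, $\hat A^k Q_i(HZ) = H(A^*)^kR_iZ$. For an unseen target value $a$, the correct representation-space value is $h_i a$, which requires knowing $h_i$; this is the calibration caveat in the statement.

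The main obstacle is making ``nontrivial hard intervention'' precise enough to force $H_{ji}=0$. The argument needs the paired pre-intervention states to include some state with $Z_i\neq a_i$; since linearity allows one such state to suffice, that is the form I would adopt. It also needs the learned edit to be a coordinate replacement, which is why I take $Q_i$ to alter only coordinate $i$.
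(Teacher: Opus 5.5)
Your proposal is correct and follows essentially the paper's argument: factual fitting with (A3) gives $\hat A=HA^*H^{-1}$, invertibility from (A5) lets you cancel to obtain $Q_i(HZ)=HR_iZ$ (the paper's $H^{-1}\tilde F^{0,i}=\tilde Z^{0,i}$), and reading off the off-target coordinates to get $He_i\propto e_i$ is equivalent to the paper's relation $a_iH^{-1}e_i=b_ie_i$, after which diagonality, the ratio formula, calibration, and exact rollout follow as you describe. Your reading of ``nontrivial'' as a single paired source state with $Z^{0,i}_i\neq v_i$ matches the paper's (A4).
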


\emph{Proof sketch.} Factual optimality and (A3) give $\hat A=HA^*H^{-1}$. Zero one-step counterfactual loss for a covered intervention on node $i$ forces the learner's field-space edit to map, through $H^{-1}$, onto the latent edit: the direction $e_i$ acts as a probe that the learner can match only if $e_i$ is an eigenvector of $H$. Coverage of every node diagonalizes $H$. \hfill$\square$

The scaling cancels in field space
($\hat A^k\tilde F^0=HA^{*k}\tilde Z^0$), so it is latent unit freedom rather
than rollout error. With unknown \emph{site correspondence}, the remaining
ambiguity is a permutation followed by node-wise scaling
\citep{squires2023,ahuja2023}. Under partial coverage, predictions for
covered-node edits remain exact, but the covered subgraph is determined only
when no covered node feeds an uncovered one. Appendix~\ref{sec:partial} gives
the exact solution set and shows that generic full-graph identification
requires full coverage.

\begin{proposition}[Finite-loss stability of the linear specialization]
\label{prop:approx}
Assume (A1)--(A5), with (A3) strengthened quantitatively (training initial-state second moment bounded below), intervention states and magnitudes bounded, and the encoder scale normalized to $\|H\|_2=1$. In place of (A6), suppose the factual and per-intervention \emph{one-step} counterfactual population losses are bounded by $\varepsilon\leq\varepsilon_0$ (an explicit threshold), and every covered intervention has magnitude $|v-Z^0_i|\geq\delta>0$. Then there is a diagonal $D$ with
\begin{equation}
\big\|D^{-1}\hat A D-A^*\big\|_F\;\leq\;C\,\kappa(H)^{3}\,\frac{\sqrt{\varepsilon}}{\delta},
\label{eq:approx-bound}
\end{equation}
where $C$ is explicit, depending on $K$, the training-state conditioning, and the norms of $A^*$, but not on $\varepsilon$, $\delta$, or $H$. If moreover the smallest nonzero $|A^*_{ij}|$ exceeds twice the bound, entrywise thresholding of $D^{-1}\hat A D$ recovers $\mathrm{supp}(A^*)$ exactly.
\end{proposition}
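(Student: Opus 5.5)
The plan is a perturbation version of the proof of Theorem~\ref{thm:ident}, organized in three stages: (i) approximate similarity from factual loss, (ii) approximate diagonality of $H$ from counterfactual loss, and (iii) transfer of both errors to $D^{-1}\hat A D$.

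\textbf{Step 1: factual control.} Write $\Sigma=\E[Z^0Z^{0\top}]\succeq\lambda_0 I$; this is the quantitative form of (A3). The factual one-step loss is
\begin{equation}
\E\|\hat A HZ^0-HA^*Z^0\|^2=\operatorname{tr}\bigl(E\Sigma E^\top\bigr),
\qquad E:=\hat A H-HA^*.
\end{equation}
Hence $\|E\|_F\le\sqrt{\varepsilon/\lambda_0}$. Set $\tilde A:=H^{-1}\hat A H$. Then
\begin{equation}
\tilde A-A^*=H^{-1}E,
\qquad
\|\tilde A-A^*\|_F\le\|H^{-1}\|_2\sqrt{\varepsilon/\lambda_0}.
\end{equation}
Under the normalization $\|H\|_2=1$ we have $\|H^{-1}\|_2=\kappa(H)$.

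\textbf{Step 2: counterfactual control of off-diagonal mixing.} This is the main obstacle. For an intervention on node $i$, the latent edit is $R_iZ=Z+(v-Z_i)e_i$. The learner's edit changes field coordinate $i$ only, so $Q_iF=HZ+u\,e_i$ for some learned scalar $u$.

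The one-step counterfactual residual is $\hat A(HZ+ue_i)-HA^*R_iZ$. Subtracting the factual residual at the same $Z$ leaves
\begin{equation}
u\,\hat A e_i-(v-Z_i)\,HA^*e_i
\end{equation}
plus a term of size $O(\sqrt\varepsilon)$ in mean square. Substituting $\hat A=H\tilde A H^{-1}$ and multiplying by $\hat A^{-1}$ gives
\begin{equation}
u\,e_i-(v-Z_i)\,H\tilde A^{-1}A^*e_i=O(\sqrt\varepsilon).
\end{equation}
Invertibility comes from (A5): it holds for $\tilde A$ once $\varepsilon\le\varepsilon_0$ is small enough that $\|\tilde A-A^*\|\le\tfrac12\sigma_{\min}(A^*)$. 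That requirement fixes $\varepsilon_0$.

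Since $\tilde A^{-1}A^*=I+O(\kappa\sqrt\varepsilon)$, this reads
\begin{equation}
(v-Z_i)\,He_i\approx u\,e_i .
\end{equation}
So the off-diagonal part of column $i$ of $H$, i.e.\ $(I-e_ie_i^\top)He_i$, satisfies
\begin{equation}
\|(I-e_ie_i^\top)He_i\|\lesssim\frac{\|\hat A^{-1}\|\sqrt\varepsilon+\kappa\sqrt\varepsilon}{\delta}.
\end{equation}
Dividing by $|v-Z_i|\ge\delta$ is where $\delta$ enters. Boundedness of intervention states controls the cross terms, and $\|\hat A^{-1}\|\le\kappa(H)\|\tilde A^{-1}\|$ supplies one more factor of $\kappa(H)$.

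The delicate part is taking expectations correctly. The residual bound holds only in $L^2$ over $Z$, while $v-Z_i$ varies with $Z$. I would project onto the complement of $e_i$ before averaging, then use $\E(v-Z_i)^2\ge\delta^2$. Collecting all columns gives $H=D+N$, with $D$ the diagonal part of $H$ and
\begin{equation}
\|N\|_F\le C_1\,\kappa(H)^2\sqrt\varepsilon/\delta .
\end{equation}

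\textbf{Step 3: transfer to $\hat A$.} Write
\begin{equation}
D^{-1}\hat A D-A^*=D^{-1}(\hat A H-HA^*)H^{-1}D+\bigl(D^{-1}HA^*H^{-1}D-A^*\bigr).
\end{equation}
The first term is $D^{-1}EH^{-1}D$ and is bounded using Step 1. For the second term, substitute $H=D(I+D^{-1}N)$, which turns it into the commutator-like expression
\begin{equation}
(I+M)A^*(I+M)^{-1}-A^*,\qquad M:=D^{-1}N .
\end{equation}
Its norm is at most $2\|A^*\|\,\|M\|/(1-\|M\|)$, and $\|M\|<\tfrac12$ for $\varepsilon\le\varepsilon_0$. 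For the diagonal factors we use $\|D^{-1}\|\le\kappa(H)$, since the diagonal entries of $H$ are bounded below by roughly $\sigma_{\min}(H)$ minus the small off-diagonal part. Combining the factors of $\kappa$ gives the $\kappa(H)^3\sqrt\varepsilon/\delta$ bound; the factual term carries no $1/\delta$ and is absorbed using $\delta\le$ the bounded intervention magnitude.

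The thresholding claim then follows by the triangle inequality. If every nonzero $|A^*_{ij}|$ exceeds twice the bound, entries of $D^{-1}\hat A D$ in the true support exceed the bound, while entries off the support are at most the bound. Thresholding at the bound therefore recovers $\supp(A^*)$, and $D$-conjugation preserves support.
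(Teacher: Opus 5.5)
Your proof is correct. It keeps the paper's outer skeleton: your Step~1 is the paper's Step~1, and your expansion of $(I+M)A^*(I+M)^{-1}-A^*$ is the analogue of the paper's Step~5. The middle step, however, works on the other side of $H$.

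The paper multiplies the counterfactual identity by $(HA^*)^{-1}$ and obtains $\|a_iH^{-1}e_i-b_ie_i\|\le\tau_i$, i.e.\ approximate diagonality of $H^{-1}$. Because the off-diagonal part of $H^{-1}e_i$ is multiplied by the \emph{learned} scalar $a_i$, the paper must bootstrap $|a_i|\le 4R$ via (S1),(S2$'$) and $|a_i|\ge\delta/(2\kappa)$ via (S3). Dividing by the latter costs a factor $\kappa$. In return, $D=\Gamma^{-1}$ with $\Gamma$ the diagonal of $H^{-1}$ satisfies $\|\Gamma^{-1}\|_2\le 4/3$ for free, since $\|H^{-1}e_i\|\ge1$ when $\|H\|_2=1$.

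You multiply by $\hat A^{-1}$ instead. Indeed the exact identity $\hat A(ue_i-bHe_i)=r_{\mathrm c}-E\tilde Z$ holds, with $r_{\mathrm c}$ the counterfactual residual, $b=v-Z_i$ and $\tilde Z=Z+be_i$. Projecting onto $e_i^\perp$ kills the learned scalar, so you divide by the \emph{true} magnitude $|b|\ge\delta$ and never need to control the learned edit. The price is $\|D^{-1}\|_2\lesssim\kappa$ for $D$ the diagonal of $H$. This needs its own smallness condition, e.g.\ off-diagonal column norms below $1/(2\kappa)$. You should list it explicitly in $\varepsilon_0$, together with invertibility of $\tilde A$ and $\|M\|_2\le\tfrac12$; this makes $\varepsilon_0$ depend on $\delta$ and $\kappa$, as the paper's does.

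Your bookkeeping is also looser than necessary. With $\|\hat A^{-1}\|_2\le 2\kappa/\sigma_{\min}(A^*)$, the column bound you display is $O(\kappa\sqrt\varepsilon/\delta)$, not $O(\kappa^2\sqrt\varepsilon/\delta)$. Carried through, your route gives $O(\kappa^2\sqrt\varepsilon/\delta)$, which implies the stated $\kappa^3$ bound.

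Finally, the averaging issue you flag does not arise in the paper's formalization. There, (A4) fixes one source state per node and $\mathcal L_{\mathrm c,i}$ is a pointwise loss. Your project-then-average argument also covers the distributional reading.
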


Because Eq.~\eqref{eq:loss-cf} averages over rollout depth, intervention
sites, field entries, and examples, its scalar value is not itself the
$\varepsilon$ in Proposition~\ref{prop:approx}. We measure the required
one-step factual and maximum per-site counterfactual errors separately.
The sufficient threshold $\varepsilon_0$ is conservative and is not reached
by our checkpoints. Figure~\ref{fig:ident}(b) therefore tests the predicted
$\sqrt{\varepsilon}$ functional form outside the certified regime rather than
claiming empirical verification of the bound. Under field-structured mixing,
$H=W_\sigma\otimes\Phi$ and
$\kappa(H)=\kappa(W_\sigma)\kappa(\Phi)$, so the certified upper bound worsens
polynomially with field conditioning; empirical monotonicity is a separate
observation.

\begin{theorem}[Finite-basis intervention geometry]
\label{thm:field}
Let the field be sampled on the patch grid and let
$W_\sigma\in\R^{K\times K}$ be the known, invertible Gaussian interpolation
matrix. In the linear shared-channel model
$F=W_\sigma Z\Phi^{\!\top}$, $\Phi\in\GL(C)$, with nonsingular location
dynamics $Z^{t+1}=A^*Z^t$, assume $K\ge2$ and that the collection of factual
coefficient-state columns spans $\R^K$. Then (i) factual equality implies
$A^*=W_\sigma^{-1}\hat A W_\sigma$; (ii) replacing coefficient block $i$
changes the sampled field along the unique profile $W_\sigma e_i$, whereas a
hard replacement of sampled row $i$ is not a single-coefficient intervention
and has a positive one-step loss floor for every nonzero coefficient edit;
and (iii) if \emph{known latent
channel offsets} spanning $\R^C$ and their learned offsets are paired at one
site, the shared map $\Phi$ is identified and the calibration transfers to
all sites.
\end{theorem}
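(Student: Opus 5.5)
The plan is to treat the three parts separately, working throughout in the vectorized model $\mathrm{vec}(F)=(\Phi\otimes W_\sigma)\,\mathrm{vec}(Z)$, where the field is the shared-channel linear image of the coefficient state.

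\emph{Part (i).} Factual equality means the learned operator $\hat A$ reproduces the sampled trajectories. For every factual coefficient state $Z$ this gives
\[
\hat A\,W_\sigma Z\Phi^{\!\top}=W_\sigma A^*Z\Phi^{\!\top}.
\]
I would right-multiply by $(\Phi^{\!\top})^{-1}$, which is possible because $\Phi\in\GL(C)$. The result is $(\hat A W_\sigma-W_\sigma A^*)Z=0$ for all factual $Z$. The columns of these $Z$ span $\R^K$ by hypothesis, so $\hat A W_\sigma=W_\sigma A^*$. Since $W_\sigma$ is invertible, this is (i). This mirrors Lemma~\ref{lem:nonident}, except that $W_\sigma$ is known rather than free.

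\emph{Part (ii), exact profile.} Replacing coefficient block $i$ by a target row $u\in\R^{1\times C}$ turns $Z$ into $Z+e_i(u-Z_{i\cdot})$. The field therefore changes by
\[
W_\sigma e_i\,(u-Z_{i\cdot})\Phi^{\!\top},
\]
a rank-one update whose spatial factor is exactly the column $W_\sigma e_i$. Uniqueness follows because $W_\sigma$ is injective, so distinct $i$ give distinct, linearly independent profiles. Conversely, any field change of the form $w\,c^{\!\top}$ that comes from a single-coefficient edit must have $w\propto W_\sigma e_i$.

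\emph{Part (ii), loss floor.} A hard replacement of sampled row $i$ changes the field by $e_i r^{\!\top}$, which in coefficient space is $W_\sigma^{-1}e_i r^{\!\top}$. For $K\ge2$ the Gaussian kernel has strictly positive off-diagonal entries, so $W_\sigma e_i$ is not parallel to $e_i$. Equivalently, $W_\sigma^{-1}e_i$ is not parallel to $e_i$, and no row edit equals a nonzero coefficient edit. For the one-step loss, $\hat A=W_\sigma A^*W_\sigma^{-1}$ and nonsingular $A^*$ make $\hat A$ injective. The one-step discrepancy is $\hat A(e_i r^{\!\top}-W_\sigma e_i d^{\!\top})$, where $d^{\!\top}=(u-Z_{i\cdot})\Phi^{\!\top}\ne0$. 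Its squared Frobenius norm is at least
\[
\sigma_{\min}(\hat A)^2\min_r\|e_i r^{\!\top}-W_\sigma e_i d^{\!\top}\|_F^2 .
\]
The inner minimum is $\|d\|^2$ times the squared distance from $W_\sigma e_i$ to $\mathrm{span}(e_i)$, namely $\sum_{k\ne i}(W_\sigma)_{ki}^2>0$. This gives a positive floor for every nonzero coefficient edit.

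\emph{Part (iii).} Suppose known latent offsets $c_1,\dots,c_C$ spanning $\R^C$ are applied at site $i$. Their learned field offsets are $W_\sigma e_i c_k^{\!\top}\Phi^{\!\top}$. Left-multiplying by the pseudo-inverse of the known vector $W_\sigma e_i$ recovers the rows $c_k^{\!\top}\Phi^{\!\top}$. Stacking them into $C_{\mathrm{mat}}^{\!\top}\Phi^{\!\top}$, with $C_{\mathrm{mat}}$ invertible, determines $\Phi$. Because $\Phi$ is shared across locations, any site $j$ and latent target then map to the field through $W_\sigma e_j(\cdot)\Phi^{\!\top}$, so the calibration transfers to all sites.

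\emph{Main obstacle.} I expect the loss floor in part (ii) to be the step needing the most care. The exact point is the non-collinearity of $W_\sigma e_i$ and $e_i$. I would establish it from the strict positivity of the normalized Gaussian weights for $K\ge2$, and then bound the minimum over $r$ by a projection argument together with the injectivity of $\hat A$.
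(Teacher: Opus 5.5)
Your proposal is correct and follows the paper's proof essentially step for step. Part (i) is the same spanning-plus-invertibility argument, part (ii) rests on the same fact (the strictly positive column $W_\sigma e_i$ is not collinear with $e_i$ when $K\ge 2$), and part (iii) is the same inversion of $C$ paired offsets to recover $\Phi$. The differences are minor: for the floor you factor out $\hat A$ and compute the field-space distance $\|d\|^2\sum_{k\ne i}(W_\sigma)_{ki}^2$, whereas the paper factors out $W_\sigma A^*$ and uses $\sin^2\angle(W_\sigma^{-1}e_i,e_i)$, and in (iii) your claim that the learned offsets equal $W_\sigma e_i c_k^{\top}\Phi^{\top}$ needs the justification the paper gives, namely zero one-step loss plus injectivity of $\hat A$.
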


Claim (iii) is a shared-parameter calibration result, not a statement that one
ordinary intervention suffices in the deployed $C{=}256$ model. The convex
blend in Eq.~\eqref{eq:field-int} approximates the ideal kernel-column edit:
it contains a smoothness residual, and finite-grid row normalization creates
a boundary mismatch (Appendix~\ref{sec:field}). A centered additive edit has
the exact profile. The principle is kernel matching rather than Gaussianity.

\paragraph{What is identified in the deployed parameterization.}
For Eq.~\eqref{eq:propagation}, write $\bar G(y)=\hat G^t$ and let $r$ denote
the refinement branch. The block Jacobian of the complete transition is
\begin{equation}
\begin{aligned}
 D_i\hat f_j(y)={}&\alpha\,\bar G_{ij}(y)I_C
 +(1-\alpha)D_i r_j(y)\\
 &+\alpha\sum_{p=1}^K
 y_p\bigl(\nabla_{y_i}\bar G_{pj}(y)\bigr)^{\!\top}.
\end{aligned}
 \label{eq:full-transition-jacobian}
\end{equation}
Theorem~\ref{thm:nonlinear-ident} therefore identifies the dependency graph
of the complete transition $\hat f$, not the support of raw $\bar G$. The
latter is an influence parameterization whose agreement with operational
interventions is evaluated empirically; it coincides with the transition
matrix only up to orientation: in the frozen-$G$, $\alpha{=}1$,
no-refinement linearization, $\hat A=\bar G^{\top}$.

\paragraph{Scope.}
The nonlinear result is a population statement on the fixed
finite-dimensional space $\mathcal H_K$; it does not identify an unrestricted
continuous operator, neural-network parameters, or raw attention weights.
Finite isolated interventions cannot establish global nonlinear
identifiability without additional function-class restrictions. The linear
results provide finite-coverage and finite-loss refinements but rely on a
stronger surrogate. The theory-aligned control in Table~\ref{tab:controls}
and the operational-influence comparison in
Section~\ref{sec:empirical-ident} measure parts of this theory--architecture
gap; neither is a proof that the nonlinear network satisfies the population
assumptions.

\section{Experiments}
\label{sec:exp}
We address five questions. \textbf{Q1:} Does InfluenceField improve causal video reasoning? \textbf{Q2:} Are the gains due to causal objectives rather than capacity, and does the model use directional influence? \textbf{Q3:} Are the theory-motivated recovery and stability trends observed empirically? \textbf{Q4:} Does the learned field remain robust under distribution shift and intervention? \textbf{Q5:} Does it recover causal structure on discovery benchmarks?

\subsection{Setup}
\label{sec:implementation}
\paragraph{Datasets and baselines.}
We pretrain on CLEVRER \citep{yi2019clevrer} and Causal3DIdent \citep{vonkugelgen2021}/CITRIS \citep{lippe2022citris} for intervention-rich grounding, then fine-tune downstream. For QA, Qwen3-VL-8B \citep{bai2025qwen3vl} is the partially frozen backbone, compared against GPT-4o \citep{openai2024gpt4o}, Gemini 2.5 Flash \citep{comanici2025gemini25}, InternVL2.5 \citep{chen2024internvl}, LLaVA-OneVision \citep{li2025llavaov}, Perception-LM \citep{cho2025perceptionlm}, and Qwen2.5-VL \citep{bai2025qwen25vl}. For graph-structure tasks, backbone MLLMs (GPT-4o, Gemini 2.0 \citep{deepmind2024gemini20}, LLaMA~4 \citep{meta2025llama4}, Grok-2v \citep{xai2024grok2}) serve as frozen feature providers, compared against META/COAT \citep{liu2024coat} and MLLM-CD \citep{li2025mllmcd}; only InfluenceField and light heads are trained on top of frozen features.
\textbf{Benchmarks:} CausalVQA \citep{foss2025causalvqa} contains
793 QA pairs spanning Anticipation, Counterfactual, Descriptive,
Hypothetical, and Planning; \emph{Reasoning} pools the four
non-descriptive categories. We additionally evaluate zero-shot transfer on
the NExT-QA validation set \citep{xiao2021nextqa}. \textbf{Discovery:} We use
MAG and Lung Cancer following MLLM-CD. Unless otherwise stated, downstream
results summarize three runs and synthetic-identifiability results summarize
five; uncertainties are reported in the corresponding table cells or
captions.

\paragraph{Implementation details.}
Patch features are projected onto a $14\times14$ field with $C{=}256$
channels and $4$ attention heads. Unless noted, $T{=}3$ and $\sigma{=}1.0$;
the intervention mask uses the same bandwidth, motivated by the ideal
kernel-column profile in Theorem~\ref{thm:field} but subject to the residuals
described in Appendix~\ref{sec:field}. Loss weights are in
Appendix~\ref{app:hyper}. The synthetic experiments in
Fig.~\ref{fig:ident} probe the linear specialization through coverage,
finite-loss trends, and kernel conditioning; they do not establish the
population assumptions of Theorem~\ref{thm:nonlinear-ident}.

\paragraph{Protocol details.}
\emph{OOD splits.} We evaluate four controlled shifts derived from the
CausalVQA validation pool: \textbf{Scene Shift} ($n{=}214$) varies
background and viewpoint configurations; \textbf{Composition Shift}
($n{=}189$) evaluates novel object--action combinations; \textbf{Template
Shift} ($n{=}793$) uses held-out question templates; and
\textbf{Intervention Shift} ($n{=}176$) changes
intervention-type--target combinations. Exact construction rules and split
manifests are provided in Appendix~\ref{app:ood}. At the observed
accuracies, the per-split binomial standard errors are approximately
1.7--3.8 percentage points.
\emph{Coherence metrics.} Each counterfactual question is paired with its factual counterpart on the same video; annotators mark whether answers must agree, whether and to what the answer must change, and the admissible outcome set. \textbf{Consistency} is the fraction of pairs satisfying the annotated agreement relation; \textbf{Causal Flip Accuracy} is, among answer-changing pairs, the fraction changing to the annotated answer; \textbf{Invalid Transition Rate} is the fraction of counterfactual answers outside the admissible set.

\begin{table}[!htbp]
\centering
\small
\setlength{\tabcolsep}{2.0pt}
\begin{tabular}{lccccccc}
\toprule
Method & Ant. & Ctrf. & Desc. & Hypo. & Plan. & Reas. & All \\
\midrule
GPT-4o & 37.0 & 47.2 & 64.6 & 34.3 & 53.9 & 43.2 & 51.0 \\
Gemini 2.5 Flash & 45.2 & 49.5 & 76.0 & 52.2 & 68.8 & 53.5 & 61.7 \\
InternVL2.5 & 42.8 & 43.8 & 48.3 & 40.3 & 58.8 & 47.1 & 47.5 \\
LLaVA-OneVision & 33.7 & 46.1 & 55.2 & 23.9 & 51.8 & 39.6 & 45.3 \\
Perception-LM & 41.8 & 48.3 & 51.7 & 46.3 & 61.7 & 49.1 & 50.1 \\
Qwen2.5-VL & 35.6 & 46.1 & 61.8 & 40.3 & 48.9 & 41.8 & 49.1 \\
Qwen3-VL-8B (base) & 40.5 & 49.2 & 66.4 & 46.9 & 56.7 & 47.4 & 54.3 \\
\textbf{InfluenceField} & \textbf{52.5} & \textbf{61.0} & \textbf{79.0} & \textbf{60.5} & \textbf{73.2} & \textbf{60.8} & \textbf{67.4} \\
\midrule
Human & 83.5 & 76.6 & 90.6 & 75.9 & 84.0 & 81.4 & 84.7 \\
\bottomrule
\end{tabular}
\caption{Causal video QA on CausalVQA (\%). Best automated results in bold. Automated results are means over three seeds; per-cell standard deviations range from 0.36 to 1.04.}
\label{tab:main}
\end{table}

\subsection{Q1: Causal Video Reasoning}
InfluenceField reaches 67.4 overall (Table~\ref{tab:main}), 13.1 points above its backbone, and outperforms every automated baseline including Gemini~2.5 Flash. The largest gains are on Planning ($56.7{\to}73.2$) and Hypothetical ($46.9{\to}60.5$), the two categories that most directly require action--consequence reasoning; aggregate Reasoning rises from 47.4 to 60.8. On zero-shot NExT-QA (Table~\ref{tab:nextqa}), which the model never sees during training, the gain concentrates on causal questions (Causal $+4.4$, Temporal $+1.4$, Descriptive $+0.4$), indicating that the mechanism transfers rather than that the model exploits CausalVQA-specific regularities.

\begin{table}[!htbp]
\centering
\small
\setlength{\tabcolsep}{4pt}
\begin{tabular}{lcccc}
\toprule
Method & Causal & Temporal & Desc. & All \\
\midrule
Qwen3-VL-8B (base) & 71.4 & 68.2 & 80.5 & 72.3 \\
\textbf{InfluenceField} & \textbf{75.8} & \textbf{69.6} & 80.9 & \textbf{74.9} \\
\bottomrule
\end{tabular}
\caption{Zero-shot accuracy on NExT-QA validation by question type; published baselines are quoted in Appendix~\ref{app:pubnextqa}. Results are means over three seeds; per-cell standard deviations range from 0.27 to 0.63.}
\label{tab:nextqa}
\end{table}

\subsection{Q2: Capacity-Matched and Structure Controls}
\label{sec:controls}
The component ablation (Table~\ref{tab:ablation}) shows the field and propagation alone contribute substantially; Table~\ref{tab:controls} isolates the causal contribution. \textbf{C1} replaces the propagation module with a parameter/FLOP-matched transformer block trained with $\mathcal{L}_{\mathrm{lm}}$ only. \textbf{C2} keeps the architecture but removes $\mathcal{L}_{\mathrm{inv}}/\mathcal{L}_{\mathrm{cf}}$, matching compute and feeding invariance-pair augmentations as ordinary augmentation. \textbf{C3} destroys structure at inference: symmetrized $(\hat G{+}\hat G^{\!\top})/2$, transposed $\hat G^{\!\top}$, or frozen random $\hat G$ (evaluable on OOD splits since inference-time). \textbf{C4} trains with shuffled intervention--outcome pairings. We also report the theory-aligned configurations of the Scope paragraph.

\begin{table*}[!htbp]
\centering
\small
\setlength{\tabcolsep}{3.5pt}
\begin{tabular}{lccccccc}
\toprule
Variant & All & Desc. & Plan. & Ctrf. & Drop$\downarrow$ & Cons.$\uparrow$ & Flip$\uparrow$ \\
\midrule
Full model & \resultpmc{67.4}{0.49} & \resultpmc{79.0}{0.91} & \resultpmc{73.2}{0.51} & \resultpmc{61.0}{0.98} & \resultpmc{5.9}{0.63} & \resultpmc{78.6}{0.49} & \resultpmc{59.7}{0.53} \\
\midrule
C1 matched Transf. & \resultpmc{59.1}{0.35} & \resultpmc{74.0}{0.58} & \resultpmc{62.5}{1.04} & \resultpmc{52.4}{0.38} & \resultpmc{7.2}{0.32} & \resultpmc{68.0}{0.47} & \resultpmc{47.2}{0.44} \\
C2 no causal losses & \resultpmc{62.9}{0.41} & \resultpmc{72.6}{0.60} & \resultpmc{67.5}{0.43} & \resultpmc{55.6}{0.47} & \resultpmc{7.3}{0.36} & \resultpmc{69.5}{1.00} & \resultpmc{49.1}{0.71} \\
\midrule
C3 symmetrized & \resultpmc{63.8}{0.30} & \resultpmc{78.2}{0.40} & \resultpmc{67.3}{1.04} & \resultpmc{55.5}{1.01} & \resultpmc{7.0}{0.71} & \resultpmc{72.4}{0.78} & \resultpmc{51.3}{1.04} \\
C3 transposed & \resultpmc{59.3}{0.33} & \resultpmc{77.2}{0.68} & \resultpmc{62.1}{0.63} & \resultpmc{52.7}{0.63} & \resultpmc{7.6}{0.47} & \resultpmc{70.3}{0.44} & \resultpmc{47.8}{1.02} \\
C3 random frozen & \resultpmc{57.8}{0.73} & \resultpmc{75.0}{0.42} & \resultpmc{60.2}{0.96} & \resultpmc{50.9}{0.36} & \resultpmc{8.3}{0.68} & \resultpmc{66.4}{1.04} & \resultpmc{45.1}{0.37} \\
\midrule
C4 shuffled CF & \resultpmc{64.0}{0.47} & \resultpmc{78.3}{0.35} & \resultpmc{69.7}{0.57} & \resultpmc{56.3}{0.89} & \resultpmc{7.2}{0.71} & \resultpmc{70.4}{0.67} & \resultpmc{49.3}{0.72} \\
\midrule
Frozen-$\hat G$ rollout & \resultpmc{67.1}{0.76} & \resultpmc{78.8}{0.42} & \resultpmc{72.7}{0.34} & \resultpmc{60.5}{0.49} & \resultpmc{6.1}{0.66} & \resultpmc{78.0}{1.01} & \resultpmc{59.0}{0.81} \\
$\alpha{=}1$ (no refine) & \resultpmc{66.3}{0.91} & \resultpmc{78.1}{0.35} & \resultpmc{71.9}{0.55} & \resultpmc{59.7}{0.48} & \resultpmc{6.2}{0.32} & \resultpmc{77.1}{0.74} & \resultpmc{58.3}{0.44} \\
$\alpha{=}1$ + frozen-$\hat G$ & \resultpmc{65.9}{0.46} & \resultpmc{77.9}{0.91} & \resultpmc{71.5}{0.42} & \resultpmc{59.2}{0.73} & \resultpmc{6.3}{0.33} & \resultpmc{76.6}{0.49} & \resultpmc{57.7}{0.87} \\
\bottomrule
\end{tabular}
\caption{Capacity-matched (C1/C2), structure-destruction (C3),
shuffled-supervision (C4), and theory-aligned controls. Entries are mean
$\pm$ standard deviation over three runs. Drop is computed per run as the
in-domain All accuracy minus the unweighted mean over the four OOD splits;
the corresponding per-split results are reported in
Table~\ref{tab:ctrl-ood}.}
\label{tab:controls}
\end{table*}

The controls address a natural concern: that the model has learned a useful \emph{relational} representation rather than structure with interventional content. First, the +13.1 in-domain gain decomposes as follows. A parameter-matched transformer block (C1) accounts for +4.8; the field-and-propagation architecture adds +2.3 (Table~\ref{tab:ablation}, DIF+DP: 61.4); the causal objectives contribute the remaining +6.0 (+4.5 relative to C2, which additionally sees the invariance-pair data as augmentation). More telling is what the non-causal variants fail to recover: C2 regains only about a fifth of the Consistency and Flip improvements (69.5 and 49.1, against 66.8/46.8 for the backbone and 78.6/59.7 for the full model), and its OOD drop (7.3) equals the backbone's. Robustness and coherence therefore come from the causal objectives specifically. Second, destroying edge \emph{directionality} at inference (C3-transposed) degrades the intervention-sensitive categories (Planning $-11.1$, Counterfactual $-8.3$) while leaving Descriptive nearly intact ($-1.8$), and accuracy falls monotonically from full to symmetrized to transposed to random $\hat G$. The model therefore uses the specific learned directions---the empirical counterpart of Lemma~\ref{lem:nonident}; whether those directions are also correct is tested in Section~\ref{sec:empirical-ident}. Third, shuffling the intervention--outcome pairings (C4) lowers Counterfactual accuracy to slightly below the no-$\mathcal{L}_{\mathrm{cf}}$ level (56.3 vs.\ 59.1 in Table~\ref{tab:ablation}) and returns Flip to the no-causal-loss level (49.3 vs.\ 49.1 for C2): $\mathcal{L}_{\mathrm{cf}}$ works through the \emph{content} of the pairs, not as a generic regularizer. Finally, enforcing the linear-theory-aligned restrictions
($\alpha{=}1$ and rollout-constant $\hat G$) reduces overall accuracy by
1.5 points ($67.4{\to}65.9$). This control measures the empirical cost of
narrowing the deployed architecture toward the setting analyzed by the
linear theory.

\begin{figure}[!htbp]
\centering
\includegraphics[width=\columnwidth]{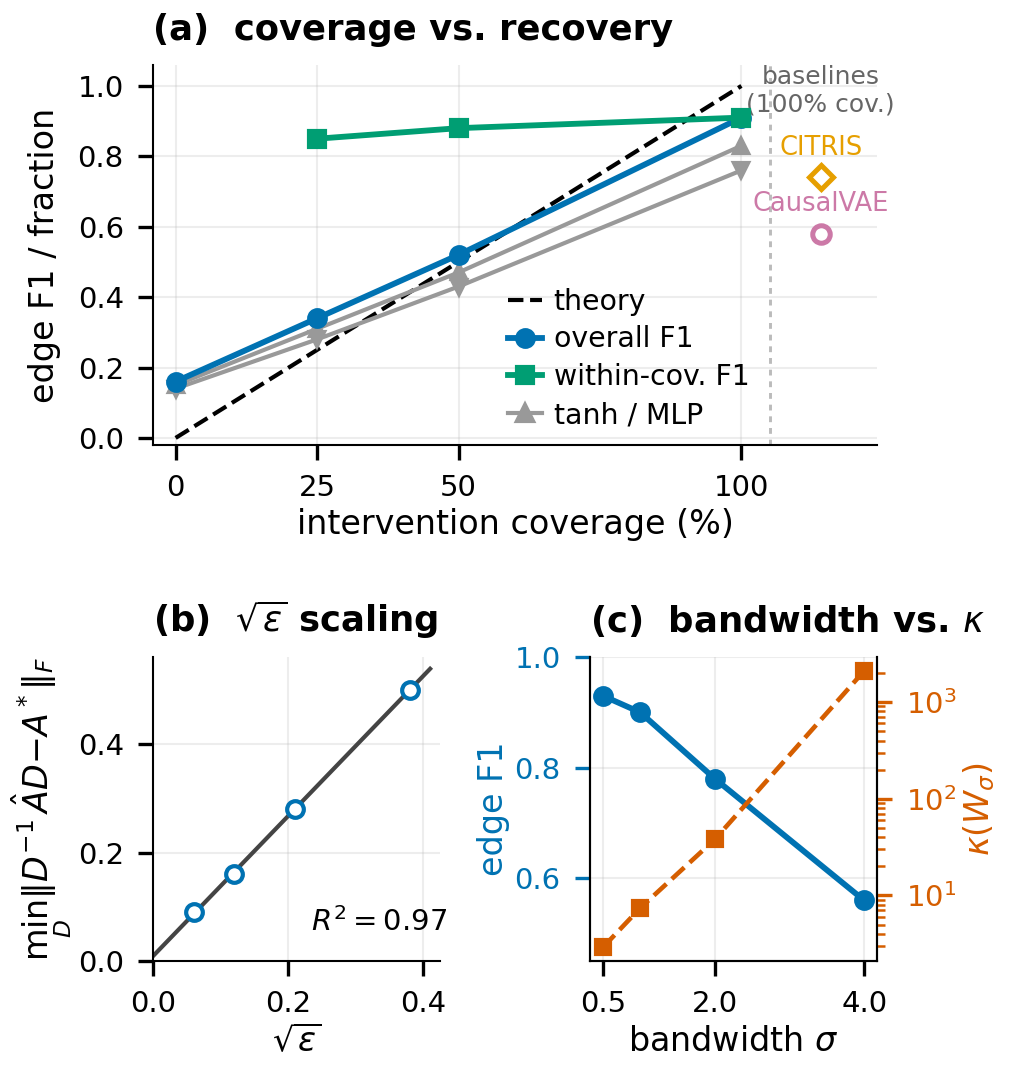}
\caption{Identifiability on a synthetic simulator with known $A^*$ (five seeds; protocols in Appendix~\ref{app:ident-protocol}). \textbf{(a)} Coverage vs.\ recovery (Thm.~\ref{thm:ident} and its partial-coverage refinement): the dashed line is the theory-predicted identifiable-pair fraction and refers to the \emph{overall} curve; within-covered F1 counts only pairs inside the covered set. \textbf{(b)} Empirical finite-loss $\sqrt{\varepsilon}$ trend (Prop.~\ref{prop:approx}) over early-stopping checkpoints. \textbf{(c)} Recovery vs.\ bandwidth as $\kappa(W_\sigma)$ grows (Thm.~\ref{thm:field}).}
\label{fig:ident}
\end{figure}

\subsection{Q3: Empirical Identifiability}
\label{sec:empirical-ident}
On a synthetic latent simulator with known $A^*$ (grid $K{=}64$, linear
dynamics, rendered observations), which satisfies (A1) by construction and
(A2) up to encoder training error,
Fig.~\ref{fig:ident} provides controlled diagnostics against the linear
predictions; panel (a) additionally probes robustness beyond the assumptions
with two nonlinear-dynamics variants, and adapts two causal-representation
baselines to the simulator with the same intervention information (protocol in
Appendix~\ref{app:ident-protocol}).

The observed trends are consistent with the linear analysis, subject to its
stated scope. (a) Recovery tracks coverage as Theorem~\ref{thm:ident} and its
partial-coverage refinement (Appendix~\ref{sec:partial}) predict:
near-chance at zero coverage, high F1 within the covered subgraph at partial
coverage, and near-exact recovery at full coverage. The tanh and MLP variants
are robustness stress tests rather than verifications of
Theorem~\ref{thm:nonlinear-ident}; in particular, the unconstrained MLP does
not have a globally fixed Jacobian support. Cross-seed correlation of
$\hat G$ is 0.91 with $\mathcal{L}_{\mathrm{cf}}$ versus 0.43 without,
consistent with reduced variability along observationally equivalent
solutions. The learned $\hat G$ also agrees with operational influence
(intervening at site $i$ and recording $\|\Delta F^1_j\|$), with Spearman
$\rho{=}0.74$ and 82\% direction agreement. (b) Across five seed-specific
early-stopping trajectories (20 checkpoints per seed; $N{=}100$), aligned
recovery error follows an empirical
$\sqrt{\varepsilon}$ trend while $\kappa(H)$ remains approximately constant.
Because these checkpoints lie outside the certified
$\varepsilon_0$ regime, this supports the functional form but does not verify
Proposition~\ref{prop:approx}. (c) Recovery degrades as
$\kappa(W_\sigma)$ grows. The optimized hard-row loss is $0.061$, close to
its exact minimum of $0.059$, whereas the kernel-profile edit attains
$0.007$, consistent with
Theorem~\ref{thm:field}(ii). Although $\sigma{=}0.5$ gives marginally better
synthetic recovery, it reduces downstream QA by 0.9 points
(Appendix~\ref{app:sensitivity}, Table~\ref{tab:sensitivity}, panel (a)); we
therefore use $\sigma{=}1.0$.

\subsection{Q4: Robustness and Consistency}

\begin{table}[!htbp]
\centering
\small
\setlength{\tabcolsep}{1.8pt}
\begin{tabular}{lccccccc}
\toprule
Method & In-d. & Scene & Comp. & Templ. & Interv. & OOD & Drop$\downarrow$ \\
\midrule
GPT-4o & 51.0 & 45.1 & 44.6 & 42.3 & 41.6 & 43.4 & 7.6 \\
Gemini 2.5 Fl. & 61.7 & 57.5 & 53.9 & 55.8 & 52.6 & 55.0 & 6.8 \\
Qwen3-VL-8B & 54.3 & 48.9 & 47.4 & 47.5 & 44.0 & 47.0 & 7.3 \\
\textbf{Ours} & \textbf{67.4} & \textbf{62.8} & \textbf{61.6} & \textbf{61.9} & \textbf{59.6} & \textbf{61.5} & \textbf{5.9} \\
\bottomrule
\end{tabular}
\caption{OOD generalization (abridged; full table incl.\ InternVL2.5, LLaVA-OneVision, Perception-LM, Qwen2.5-VL, with per-split standard deviations, in Appendix~\ref{app:ood}, Table~\ref{tab:ood-full}).}
\label{tab:ood}
\end{table}

\begin{table}[!htbp]
\centering
\small
\setlength{\tabcolsep}{4pt}
\begin{tabular}{lcccc}
\toprule
Method & Ctrf.\ Acc. & Cons.$\uparrow$ & Inv.Tr.$\downarrow$ & Flip$\uparrow$ \\
\midrule
GPT-4o & 47.2 & 61.8 & 14.9 & 42.7 \\
Gemini 2.5 Flash & 49.5 & 65.1 & 13.5 & 45.6 \\
Qwen2.5-VL & 46.1 & 59.4 & 15.8 & 41.3 \\
Qwen3-VL-8B & 49.2 & 66.8 & 12.9 & 46.8 \\
\textbf{Ours} & \textbf{61.0} & \textbf{78.6} & \textbf{8.1} & \textbf{59.7} \\
\bottomrule
\end{tabular}
\caption{Factual--counterfactual consistency. Results are means over three seeds; per-cell standard deviations range from 0.41 to 1.01.}
\label{tab:consist}
\end{table}

InfluenceField attains the highest mean OOD accuracy (61.5) and the
smallest average degradation (5.9 points) across the four evaluation shifts
(Table~\ref{tab:ood}). The degradation is smallest under Scene Shift (4.6
points) and largest under Intervention Shift (7.8 points), indicating that
generalization to altered interventions remains the most difficult axis.
Because the four splits differ in both size and construction, we treat small
between-split differences descriptively and use the unweighted mean across
shifts as the primary robustness summary. On coherence
(Table~\ref{tab:consist}), Consistency rises $66.8{\to}78.6$, Invalid
Transitions fall $12.9{\to}8.1$, and Flip Accuracy reaches 59.7; control C4
attributes these gains chiefly to the \emph{content} of counterfactual
rollout supervision.

\paragraph{Failure analysis.}
Residual errors on intervention-sensitive questions concentrate in three regimes: interactions involving occluded or off-screen objects, dependencies whose effective horizon exceeds the propagation depth $T$, and spatially ambiguous intervention targets, jointly accounting for 61\% of remaining errors; qualitative cases and a per-regime breakdown are in Appendix~\ref{app:failure}.

\subsection{Q5: Multimodal Causal Discovery}
Averaged over four backbone MLLMs on Lung Cancer, InfluenceField reaches NF
$0.94$ and AF $0.73$, compared with $0.90$/$0.68$ for our reproduction of
MLLM-CD, while reducing ESHD from $5.50$ to $4.30$; on MAG, NF improves
$0.89{\to}0.93$, AF $0.57{\to}0.64$, and ESHD
$13.42{\to}11.10$ (full per-backbone tables with standard deviations in
Appendix~\ref{app:discovery}). These benchmarks are small-scale and their
factors are not spatially indexed, so we use them as structure-oriented
validation complementary to---not covered by---the spatial-node guarantees
of Section~\ref{sec:theory}.

\subsection{Ablation and Efficiency}
\begin{table}[!htbp]
\centering
\small
\setlength{\tabcolsep}{2.0pt}
\begin{tabular}{ccccccccccc}
\toprule
DIF & DP & CI & CF & Ant. & Ctrf. & Desc. & Hypo. & Plan. & Reas. & All \\
\midrule
 & & & & 40.5 & 49.2 & 66.4 & 46.9 & 56.7 & 47.4 & 54.3 \\
\checkmark & & & & 44.1 & 51.8 & 70.1 & 50.6 & 61.2 & 51.1 & 58.0 \\
\checkmark & \checkmark & & & 47.8 & 55.4 & 72.0 & 54.8 & 66.9 & 55.4 & 61.4 \\
\checkmark & \checkmark & \checkmark & & 50.1 & 59.1 & 75.0 & 57.2 & 70.1 & 58.2 & 64.3 \\
\checkmark & \checkmark & \checkmark & \checkmark & \textbf{52.5} & \textbf{61.0} & \textbf{79.0} & \textbf{60.5} & \textbf{73.2} & \textbf{60.8} & \textbf{67.4} \\
\bottomrule
\end{tabular}
\caption{Component ablation (backbone Qwen3-VL-8B). DIF = differentiable influence field; DP = dynamic propagation; CI = invariance; CF = counterfactual supervision. Results are means over three seeds; per-cell standard deviations range from 0.35 to 1.01.}
\label{tab:ablation}
\end{table}

Table~\ref{tab:ablation} shows that DIF and DP provide the representational
and mechanistic foundation ($54.3{\to}61.4$), while CI and CF add the causal
constraints ($+2.9$ and $+3.1$). Accuracy saturates at $T{=}3$ (Planning
$66.4{\to}73.2$ from $T{=}0$) and dips slightly at $T{=}4$; under matched
profiling, the latency overhead over the backbone is 7\%
(Appendix~\ref{app:sensitivity}).

\section{Conclusion}
InfluenceField equips MLLMs with a continuous differentiable field that
propagates directed influence and admits field-level intervention. Our
analysis separates two regimes: a nonlinear population result identifies the
dependency graph of the complete finite-basis transition exactly under the
stated conditions, while the representation remains determined only up to
within-location diffeomorphisms. A linear specialization supplies
partial-coverage and finite-loss guarantees and characterizes the
kernel-column intervention profile. Empirically, the objectives improve
causal reasoning, robustness, and factual--counterfactual coherence, with
controls attributing the gains to the causal objectives specifically.
\textbf{Limitations:} the nonlinear theorem assumes exact full-support
population consistency and one-step transition separation; it identifies the complete transition
graph rather than raw attention weights. The quantitative finite-loss and
coverage results remain linear, field-level edits approximate physical
interventions, and dense propagation is quadratic in the number of
locations. Future work includes finite-sample nonlinear guarantees,
Jacobian-calibrated influence parameterizations, locally banded fields, and
embodied settings.

\clearpage
\appendix

\section*{Technical Appendix}
\noindent This appendix provides complete proofs for the identifiability
results stated in Section~\ref{sec:theory}. Appendix~\ref{sec:setup} fixes
notation and formalizes the losses. Appendix~\ref{sec:nonlinear} proves the
nonlinear population result (Theorem~\ref{thm:nonlinear-ident}) and its
finite-loss equivariance bound. Appendix~\ref{sec:nonident} proves
linear observational non-identifiability (Lemma~\ref{lem:nonident}).
Appendix~\ref{sec:ident} proves exact identifiability from full interventional
coverage (Theorem~\ref{thm:ident}). Appendix~\ref{sec:partial} gives the exact
characterization of partial coverage. Appendix~\ref{sec:approx} proves the
finite-loss result (Proposition~\ref{prop:approx}) with explicit constants and
an explicit regime of validity. Appendix~\ref{sec:field} proves the
field-structured mixing results (Theorem~\ref{thm:field}).
Appendix~\ref{sec:protocols} gives protocols and complete results,
Appendix~\ref{sec:impl} implementation details, Appendix~\ref{sec:addresults}
additional results, and Appendix~\ref{sec:discussion} further discussion.

\section{Notation and Formal Setup}
\label{sec:setup}

\paragraph{Model.} The latent state $Z^t\in\R^K$ evolves as $Z^{t+1}=A^*Z^t$
with $A^*\in\R^{K\times K}$; $\supp(A^*)$ is the true directed graph (cycles
allowed). Observations are $X^t=g(Z^t)$ with $g$ injective. The learner encodes
$F^t=h(X^t)$ and rolls out $F^{t+1}=\Ah F^t$. The assumptions below instantiate
the linear interventional causal-representation setting studied by
\citet{squires2023}, \citet{ahuja2023} and \citet{buchholz2023}, specialized to
a temporal transition operator as in \citet{lippe2022citris}.

\paragraph{Assumptions.}
\begin{itemize}\itemsep0pt
\item[(A1)] Linear latent dynamics $Z^{t+1}=A^*Z^t$.
\item[(A2)] The composite encoding is linear and invertible: $F^t=HZ^t$ for
some unknown $H\in\GL(K)$.
\item[(A3)] The training initial-state distribution spans $\R^K$:
$\Sigma_0:=\E[Z^0(Z^0)^\top]\succ 0$.
\item[(A4)] For each node $i\in[K]$ there is an intervention experiment
$\mathcal{E}_i=(Z^{0,i},v_i)$: the environment applies $\mathrm{do}(Z_i=v_i)$
at step $0$ to initial state $Z^{0,i}$, with $v_i\neq Z^{0,i}_i$, and rolls out
under $A^*$. The learner knows the \emph{site correspondence} and forms
$\tilde F^{0,i}=F^{0,i}+(w_i-F^{0,i}_i)e_i$ for a scalar $w_i$ of its choice.
\item[(A5)] $A^*$ is nonsingular.
\item[(A6)] Zero population loss (exact-recovery sections only).
\end{itemize}

\paragraph{Losses.} Write $\tilde Z^{0,i}=Z^{0,i}+(v_i-Z^{0,i}_i)e_i$ and
$\tilde Z^{k,i}=A^{*k}\tilde Z^{0,i}$. The factual one-step population loss and
the per-node one-step counterfactual losses are
\begin{equation}
\begin{aligned}
\mathcal{L}_{\mathrm{f}}&:=\E\,\norm{\Ah HZ^0-HA^*Z^0}_2^2,\\
\mathcal{L}_{\mathrm{c},i}&:=\norm{\Ah\tilde F^{0,i}-H A^*\tilde Z^{0,i}}_2^2 .
\end{aligned}
\label{eq:losses}
\end{equation}
Throughout, $\norm{\cdot}_2$ on matrices is the spectral norm,
$\norm{\cdot}_F$ the Frobenius norm, $\sigma_{\min}(\cdot)$ the smallest
singular value, and $\kappa(\cdot)$ the condition number.

\medskip
\noindent\textit{Relation to the trained objective.}
Equation~\eqref{eq:loss-cf} averages over rollout depth and examples and
normalizes each field error by $KC$; the training loader may additionally
average nonuniformly over intervention sites. The losses in
Eq.~\eqref{eq:losses} are unnormalized one-step population errors, and the
finite-loss theorem requires their maximum over sites. Consequently, no
dimension-free inequality identifies its $\varepsilon$ with the reported
scalar $\mathcal L_{\mathrm{cf}}$. In
Appendix~\ref{app:ident-protocol} we instead measure, with one fixed
normalization used on both sides of the analysis,
$\varepsilon=\max\{\mathcal{L}_{\mathrm{f}},
\max_i\mathcal{L}_{\mathrm{c},i}\}$ directly.
\par

\medskip
\noindent\textit{$\mathcal{L}_{\mathrm{f}}$ has no direct counterpart in Eq.~\eqref{eq:loss-total}.}
Proposition~\ref{prop:approx-explicit} assumes a bound on the one-step
\emph{factual} field
loss $\E\norm{\Ah F^0-F^1}^2$. The trained objective of Eq.~\eqref{eq:loss-total}
contains no term that penalizes this quantity directly; it is controlled only
indirectly, through $\mathcal{L}_{\mathrm{lm}}$,
$\mathcal{L}_{\mathrm{inv}}$ and the $k\ge1$ terms of
$\mathcal{L}_{\mathrm{cf}}$. We therefore \emph{measure}
$\mathcal{L}_{\mathrm{f}}$ explicitly at every checkpoint rather than inferring
it from the training loss. All statements in this section are moreover about
population losses; no empirical-to-population generalization argument is
supplied, and the synthetic protocol uses a held-out evaluation set large
enough that the Monte-Carlo error on $\varepsilon$ is negligible relative to
the reported spread.
\par

\section{Nonlinear Transition Identifiability}
\label{sec:nonlinear}

We now prove Theorem~\ref{thm:nonlinear-ident}. Throughout this section,
$\mathcal Z=\prod_{i=1}^K\mathcal Z_i$ and
$\mathcal Y=\prod_{i=1}^K\mathcal Y_i$ are open connected product domains,
$q:\mathcal Z\to\mathcal Y$ is a $C^1$ diffeomorphism, and
$f^*:\mathcal Z\to\mathcal Z$ and
$\hat f:\mathcal Y\to\mathcal Y$ are $C^1$. For each $i$, the map
$R_i$ replaces $z_i$ by a fixed anchor $a_i\in\mathcal Z_i$, while the
continuous map $z\mapsto Q_iq(z)$ changes only block $i$ in learned
coordinates. The factual and each paired pre-intervention source distribution
have support $\mathcal Z$, including all original values of the intervened
coordinate. Thus zero population field-state residuals imply the pointwise
identities below by continuity.

\begin{lemma}[Nonlinear factual ambiguity]
\label{lem:nonlinear-ambiguity}
If $\hat f\circ q=q\circ f^*$, then for every $C^1$ diffeomorphism
$M:\mathcal Y\to\mathcal Y$,
\[
 q_M=M\circ q,\qquad
 \hat f_M=M\circ\hat f\circ M^{-1}
\]
satisfy $\hat f_M\circ q_M=q_M\circ f^*$. The block transition graph is
therefore not identifiable from factual trajectories in general.
\end{lemma}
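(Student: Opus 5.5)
The plan is to split the lemma into two parts: the conjugacy identity, which is an algebraic check, and the non-identifiability claim, which needs an explicit witness.

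\textbf{Conjugacy identity.} Substituting the definitions and using $M^{-1}\circ M=\mathrm{id}_{\mathcal Y}$ together with the hypothesis $\hat f\circ q=q\circ f^*$ gives
\[
\hat f_M\circ q_M=M\circ\hat f\circ M^{-1}\circ M\circ q=M\circ\hat f\circ q=M\circ q\circ f^*=q_M\circ f^*.
\]
We should also check that the new pair still satisfies the standing assumptions, so that it is an admissible competitor.
\begin{itemize}
\item $q_M$ is a $C^1$ diffeomorphism onto $\mathcal Y$, because it is a composition of two such maps.
\item $\hat f_M$ is $C^1$ on $\mathcal Y$ and maps $\mathcal Y$ into itself.
\end{itemize}
Factual trajectories $q_M((f^*)^k(z))=\hat f_M^k(q_M(z))$ are therefore reproduced exactly. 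Consequently no factual loss can distinguish $(q,\hat f)$ from $(q_M,\hat f_M)$.

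\textbf{Non-identifiability of the block graph.} Here I will give an explicit witness in which the graph changes. Differentiating the conjugacy gives
\[
D\hat f_M(M(y))=DM(\hat f(y))\,D\hat f(y)\,[DM(y)]^{-1}.
\]
When $M$ mixes blocks, the block pattern of $D\hat f$ is not preserved. The simplest witness takes $\mathcal Y=\R^{KC}$ (or a product domain invariant under the chosen linear map) with $K\ge2$, and picks a true system whose graph has an absent edge.
\begin{itemize}
\item Let $\hat f$ be linear and block-diagonal, $\hat f(y)=(\Lambda_1y_1,\Lambda_2y_2,\dots)$ with $\Lambda_1\neq\Lambda_2$ (say scalar multiples of the identity with distinct scalars). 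Then $D_1\hat f_2\equiv0$.
\item Let $M$ be the block shear $M(y)=(y_1,\,y_2+y_1,\,y_3,\dots)$.
\end{itemize}
A direct computation of $M\hat f M^{-1}$ gives the $(2,1)$ block $\Lambda_1-\Lambda_2\neq0$. The edge $1\to2$ is therefore absent in one representation and present in the other. This example is the same mechanism as Lemma~\ref{lem:nonident} (the similarity orbit), lifted to the nonlinear setting.

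\textbf{Main obstacle: product domains.} The only delicate point is keeping $M$ a self-diffeomorphism of the product domain $\mathcal Y$ when $\mathcal Y$ is not all of Euclidean space. In that case I would do two things:
\begin{itemize}
\item choose $M$ as a compactly supported perturbation of the identity, namely the time-one flow of a vector field supported in a small ball inside $\mathcal Y$ that couples blocks $1$ and $2$;
\item argue that, generically, the conjugated Jacobian block $D_1(\hat f_M)_2$ becomes nonzero somewhere.
\end{itemize}
Since the lemma only claims non-identifiability ``in general'', exhibiting one admissible instance suffices. The linear shear on $\R^{KC}$ is that instance.
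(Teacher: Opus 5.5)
Your proposal is correct and follows the paper's proof: the identity is checked by direct substitution, and non-identifiability is shown by conjugating a diagonal linear system with distinct rates by a linear shear. The paper uses scalar blocks and $M(z_1,z_2)=(z_1+z_2,z_2)$, which creates a spurious edge $2\to1$, while your block shear creates $1\to2$; your product-domain digression is unnecessary, since the full Euclidean space is itself an admissible open connected product domain, which is where the paper's example also lives.
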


\begin{proof}
The equality follows by direct substitution. To see that graph support need
not be invariant, take
$f^*(z_1,z_2)=(\lambda_1z_1,\lambda_2z_2)$ with
$\lambda_1\ne\lambda_2$ and
$M(z_1,z_2)=(z_1+z_2,z_2)$. Then
\[
 M\circ f^*\circ M^{-1}(y_1,y_2)
 =(\lambda_1y_1+(\lambda_2-\lambda_1)y_2,\lambda_2y_2),
\]
which contains the cross-edge $2\to1$ although $f^*$ does not.
\end{proof}

\begin{lemma}[Rollout consistency implies intervention equivariance]
\label{lem:nonlinear-equivariance}
Assume Eqs.~\eqref{eq:nonlinear-factual} and
\eqref{eq:nonlinear-cf}. If $\hat f$ is left-cancellable for
$Q_iq(z)$ and $q(R_i z)$, then
\begin{equation}
 Q_iq(z)=q(R_i z)
 \quad\text{for every }z\in\mathcal Z\text{ and }i\in[K].
 \label{eq:nonlinear-equivariance}
\end{equation}
\end{lemma}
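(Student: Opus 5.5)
The proof is a direct substitution argument followed by one cancellation. Fix $i\in[K]$ and $z\in\mathcal Z$. First I evaluate the factual identity Eq.~\eqref{eq:nonlinear-factual} at the point $R_i z$ rather than at $z$. This needs $R_i z\in\mathcal Z$, which holds because $\mathcal Z=\prod_j\mathcal Z_j$ is a product domain and $a_i\in\mathcal Z_i$. The substitution gives $\hat f(q(R_i z))=q(f^*(R_i z))$.

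Second, by (N3), for the same $z$ the counterfactual identity Eq.~\eqref{eq:nonlinear-cf} gives $\hat f(Q_iq(z))=q(f^*(R_i z))$. Its pointwise validity for every $z$ comes from the full support in (N2) together with continuity of $z\mapsto Q_iq(z)$, $q$, $\hat f$, $f^*$ and $R_i$. A continuous residual that vanishes almost everywhere on its support vanishes on the closure of that support, hence on all of $\mathcal Z$. I would state this reduction explicitly, since it is how (N1)--(N3) become pointwise identities.

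Third, the two right-hand sides coincide, so $\hat f(Q_iq(z))=\hat f(q(R_i z))$. Left-cancellability of $\hat f$ on the pair $\{Q_iq(z),q(R_i z)\}$ is exactly the separation hypothesis (N4), Eq.~\eqref{eq:nonlinear-separation}. It yields $Q_iq(z)=q(R_i z)$. Since $i$ and $z$ were arbitrary, Eq.~\eqref{eq:nonlinear-equivariance} follows.

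There is no real obstacle in this argument. The only point requiring care is the domain bookkeeping: $Q_iq(z)$ and $q(R_i z)$ must both lie in $\mathcal Y$, where $\hat f$ is defined. The second lies there because $q$ maps into $\mathcal Y$. For the first, I would use the standing convention that the learned edit maps into $\mathcal Y$ (it changes only block $i$ within $\mathcal Y_i$). I would also remark that if injectivity of $\hat f$ on $Q_iq(\mathcal Z)\cup q(R_i\mathcal Z)$ is assumed instead, it implies the left-cancellation used here.
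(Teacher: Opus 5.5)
Your proof is correct and follows the paper's argument: you evaluate Eq.~\eqref{eq:nonlinear-factual} at $R_i z$, observe that it has the same right-hand side as Eq.~\eqref{eq:nonlinear-cf}, and cancel $\hat f$. The paper adds only one remark: if the edited step-$0$ state is supervised directly, the conclusion is itself the supervision condition, so no cancellability assumption is needed.
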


\begin{proof}
Applying Eq.~\eqref{eq:nonlinear-factual} to $R_i z$ gives
\[
 \hat f(q(R_i z))=q(f^*(R_i z)).
\]
Equation~\eqref{eq:nonlinear-cf} gives the same right-hand side for
$\hat f(Q_iq(z))$. Left-cancellability yields
Eq.~\eqref{eq:nonlinear-equivariance}. If the edited state is supervised
directly, Eq.~\eqref{eq:nonlinear-equivariance} is the supervision condition
and no cancellability assumption is needed.
\end{proof}

\begin{proof}[Proof of Theorem~\ref{thm:nonlinear-ident}]
Lemma~\ref{lem:nonlinear-equivariance} shows that $q(R_i z)$ differs from
$q(z)$ only in learned block $i$. Hence
\begin{equation}
 q_j(R_i z)=q_j(z)\qquad(j\ne i).
 \label{eq:offtarget-invariance}
\end{equation}
Fix $z_{-i}$ and choose arbitrary $s,t\in\mathcal Z_i$. Applying
Eq.~\eqref{eq:offtarget-invariance} to $(z_{-i},s)$ and $(z_{-i},t)$ gives
\[
 q_j(z_{-i},s)=q_j(z_{-i},a_i)=q_j(z_{-i},t).
\]
Thus $q_j$ is independent of $z_i$ whenever $i\ne j$. Repeating this for all
$i$ yields $q_j(z)=\psi_j(z_j)$. The Jacobian $Dq$ is therefore block
diagonal. Since $q$ is a diffeomorphism, every diagonal block
$D\psi_j$ is nonsingular. Global injectivity and surjectivity of $q$ on the
product domains imply that each $\psi_j$ is a diffeomorphism.

Differentiating $\hat f(q(z))=q(f^*(z))$ gives
\[
 D\hat f(q(z))Dq(z)=Dq(f^*(z))Df^*(z).
\]
Both instances of $Dq$ are invertible and block diagonal. Extracting block
$(j,i)$ after right-multiplication by $Dq(z)^{-1}$ proves the displayed
identity. Left and right multiplication by nonsingular matrices preserves
rank and, in particular, whether a block is zero. Taking the condition of
being nonzero somewhere on $\mathcal Z$ establishes graph equality.
\end{proof}

\begin{corollary}[Calibrated multi-step rollout]
\label{cor:nonlinear-rollout}
For every supervised anchor edit and every $k\ge1$ whose trajectory remains
in $\mathcal Z$,
\[
 \hat f^k(Q_iq(z))=q((f^*)^k(R_i z)).
\]
For a new target $a\in\mathcal Z_i$, the same conclusion holds when the
learned replacement value is $\psi_i(a)$.
\end{corollary}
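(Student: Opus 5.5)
The plan is to prove the first claim by induction on $k$, using only the pointwise identities already established in the proof of Theorem~\ref{thm:nonlinear-ident}. The two ingredients are the factual conjugacy $\hat f\circ q=q\circ f^*$ on $\mathcal Z$ from (N1), and the intervention equivariance $Q_iq(z)=q(R_iz)$ from Lemma~\ref{lem:nonlinear-equivariance}. The equivariance holds under (N2)--(N4) for every $z\in\mathcal Z$.

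\emph{Base case.} For $k=0$ the claim $\hat f^0(Q_iq(z))=q(R_iz)$ is exactly Eq.~\eqref{eq:nonlinear-equivariance}. For $k=1$ it is Eq.~\eqref{eq:nonlinear-cf}.

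\emph{Inductive step.} Suppose $\hat f^k(Q_iq(z))=q((f^*)^k(R_iz))$. By hypothesis the latent trajectory point $w:=(f^*)^k(R_iz)$ lies in $\mathcal Z$. Applying $\hat f$ to both sides and using the factual conjugacy at $w$ gives
\[
\hat f^{k+1}(Q_iq(z))=\hat f(q(w))=q(f^*(w))=q((f^*)^{k+1}(R_iz)).
\]
The only point needing care is that the conjugacy is available at $w$. It holds on all of $\mathcal Z$ because (N1) gives full factual support and the identity extends by continuity. This is why the statement restricts to trajectories remaining in $\mathcal Z$. I expect this domain bookkeeping to be the only real obstacle. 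In particular, it should be stated that $f^*$ maps into $\mathcal Z$ along the trajectory, which we assume.

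\emph{New target $a\in\mathcal Z_i$.} Let $R_i^a$ replace $z_i$ by $a$, and let the learned edit $Q_i^a$ replace block $i$ by $\psi_i(a)$. By Theorem~\ref{thm:nonlinear-ident}, $q(z)=(\psi_1(z_1),\dots,\psi_K(z_K))$. Therefore $Q_i^aq(z)$ has block $j\neq i$ equal to $\psi_j(z_j)$ and block $i$ equal to $\psi_i(a)$, which is exactly $q(R_i^az)$. Thus the equivariance identity holds for the new target without any counterfactual supervision at $a$. The same induction, starting from $\hat f^0(Q_i^aq(z))=q(R_i^az)$ and using only the factual conjugacy, then yields $\hat f^k(Q_i^aq(z))=q((f^*)^k(R_i^az))$ for all $k$ whose trajectory stays in $\mathcal Z$.
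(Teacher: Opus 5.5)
Your proposal is correct and follows the paper's own argument: Lemma~\ref{lem:nonlinear-equivariance} identifies the edited initial state with $q(R_iz)$, and induction on $k$ using the factual conjugacy $\hat f\circ q=q\circ f^*$ completes the proof. For a new target, you use the block form $q=(\psi_1,\dots,\psi_K)$ to obtain $Q_i^aq(z)=q(R_i^az)$; this correctly supplies a step the paper leaves implicit. Your domain caveat is harmless but automatic, since the appendix assumes $f^*:\mathcal Z\to\mathcal Z$.
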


\begin{proof}
By Lemma~\ref{lem:nonlinear-equivariance}, the edited initial state equals
$q(R_i z)$. The result follows by induction from factual consistency.
\end{proof}

\begin{proof}[Proof of Proposition~\ref{prop:nonlinear-finite}]
The residual definitions give
\[
 \hat f(Q_iq(z))-\hat f(q(R_i z))
 =r^i_{\mathrm c}(z)-r^i_{\mathrm f}(z).
\]
The inverse-Lipschitz inequality and the triangle inequality imply
\[
 \mu\|Q_iq(z)-q(R_i z)\|
 \le\|r^i_{\mathrm c}(z)\|+\|r^i_{\mathrm f}(z)\|.
\]
After squaring, use $(a+b)^2\le2a^2+2b^2$. Because $Q_i$ changes only block
$i$, the off-target blocks of $Q_iq(z)-q(R_i z)$ are exactly
$q_j(z)-q_j(R_i z)$, $j\ne i$. Taking expectations proves the second claim.
If step-$0$ alignment is supervised directly, the same off-target sum is
bounded by the squared alignment residual itself, so no inverse-Lipschitz
factor is needed.
\end{proof}

\medskip
\noindent\textit{Why this is not a nonlinear support bound.}
Small function-value error does not control derivatives without additional
regularity. For example, uniformly small high-frequency perturbations can
have derivatives bounded away from zero. Turning
Proposition~\ref{prop:nonlinear-finite} into Jacobian-support recovery
would therefore require derivative-level control and a beta-min condition.
The finite-dimensional linear specialization in Appendix~\ref{sec:approx}
admits a parameter bound because these additional difficulties disappear.
\par

\medskip
\noindent\textit{Finite samples.}
Without a restricted function class, finitely many factual and intervened
states cannot imply global nonlinear identification: in dimension at least
two, a smooth diffeomorphism can be chosen to equal the identity in
neighborhoods of all finitely many supervised states while mixing coordinates
elsewhere. The full-support condition in (N2) is therefore not a finite-sample
generalization claim.
\par

\section{Observational Non-Identifiability}
\label{sec:nonident}

\begin{proof}[Proof of Lemma~\ref{lem:nonident}]
For (i), zero loss means
\[
 \E\norm{(\Ah'H'-H'A^*)Z^0}^2=0.
\]
By (A3), for any $M$,
\[
\begin{aligned}
 \E\norm{MZ^0}^2
 &=\mathrm{tr}(M\Sigma_0M^\top)\\
 &\ge\lambda_{\min}(\Sigma_0)\norm{M}_F^2.
\end{aligned}
\]
Hence zero loss holds if and only if $\Ah'H'=H'A^*$. As $H'$ ranges over
$\GL(K)$, $\Ah'$ ranges over the full orbit; conversely every such pair
attains zero loss.
For (ii), take $A^*=\diag(\lambda_1,\lambda_2)$,
$\lambda_1\ne\lambda_2$; conjugation by an upper or lower triangular shear
introduces, respectively, either orientation of a cross-edge while preserving
all factual trajectories. Thus these graph properties are not
similarity-invariant in general. Exceptional matrices such as scalar
multiples of the identity have a trivial similarity orbit.

For (iii), write $A^*=PDP^{-1}$ with $D$ real diagonal. Taking
$H'=P^{-1}$ gives $\Ah'=D$.
\end{proof}

\section{Exact Identifiability from Full Coverage}
\label{sec:ident}

\begin{proof}[Proof of Theorem~\ref{thm:ident}]
\emph{Step 1.} Zero factual loss and (A3) give $\Ah=HA^*H^{-1}$.

\emph{Step 2.} Fix $i$. Zero counterfactual loss means
$\Ah\tilde F^{0,i}=HA^*\tilde Z^{0,i}$; substituting $\Ah=HA^*H^{-1}$ and
multiplying by $(HA^*)^{-1}$ (invertible by (A5)),
\begin{equation}
H^{-1}\tilde F^{0,i}=\tilde Z^{0,i}.
\label{eq:probe}
\end{equation}
With $a_i:=w_i-(HZ^{0,i})_i$ and $b_i:=v_i-Z^{0,i}_i\neq0$ we have
$\tilde F^{0,i}=HZ^{0,i}+a_ie_i$ and $\tilde Z^{0,i}=Z^{0,i}+b_ie_i$;
substituting and cancelling $Z^{0,i}$,
\begin{equation}
a_i\,H^{-1}e_i=b_i\,e_i .
\label{eq:eigen}
\end{equation}
Since $b_i\neq0$, $a_i\neq0$ and $H^{-1}e_i=(b_i/a_i)e_i$.

\emph{Step 3.} By (A4) this holds for every $i$, so $H^{-1}$, hence $H$, is
diagonal, giving (i) and (ii).

\emph{Step 4.} For any field-space coordinate intervention on
$S_{\mathcal I}\subseteq[K]$, diagonality of $H$ makes $H^{-1}\tilde F^0$ a
latent coordinate intervention $\tilde Z^0$, and
$\Ah^k\tilde F^0=HA^{*k}\tilde Z^0=H\tilde Z^k$ for all $k\ge1$.
\end{proof}

\begin{corollary}
\label{cor:multistep}
Zero one-step counterfactual loss for $\mathcal{E}_i$ implies zero $k$-step
counterfactual loss for all $k\ge1$.
\end{corollary}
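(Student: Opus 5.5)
The plan is to use the conclusion of Theorem~\ref{thm:ident} (Step~4 of its proof), which already contains the multi-step identity, and reduce the $k$-step loss to it. The $k$-step counterfactual loss for $\mathcal{E}_i$ is
\[
\norm{\Ah^k\tilde F^{0,i}-H\tilde Z^{k,i}}_2^2=\norm{\Ah^k\tilde F^{0,i}-HA^{*k}\tilde Z^{0,i}}_2^2 ,
\]
and the aim is to show that $\Ah^k\tilde F^{0,i}=HA^{*k}\tilde Z^{0,i}$.

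\emph{Step 1.} Zero factual loss together with (A3) gives $\Ah=HA^*H^{-1}$, exactly as in Step~1 of the proof of Theorem~\ref{thm:ident}. The full diagonality of $H$ is not needed here, because the statement concerns a single experiment $\mathcal{E}_i$. Induction on $k$ then gives $\Ah^k=HA^{*k}H^{-1}$, since the inner factors $H^{-1}H$ telescope.

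\emph{Step 2.} Zero one-step counterfactual loss for $\mathcal{E}_i$ gives $\Ah\tilde F^{0,i}=HA^*\tilde Z^{0,i}$. Substituting $\Ah=HA^*H^{-1}$ and cancelling the invertible factor $HA^*$ (invertible by (A2) and (A5)) yields Eq.~\eqref{eq:probe}, namely $H^{-1}\tilde F^{0,i}=\tilde Z^{0,i}$. This is the only place where nonsingularity of $A^*$ enters. Without (A5), a one-step match could hold even though the initial edited states differ by a vector in $\ker A^*$. In that case the claim would still follow for $k\ge1$, because $\Ah^k=HA^{*k-1}H^{-1}\Ah$ and any such kernel discrepancy is annihilated after the first step. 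I would note this but rely on (A5) for the clean argument.

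\emph{Step 3.} Combining the two steps gives
\[
\Ah^k\tilde F^{0,i}=HA^{*k}H^{-1}\tilde F^{0,i}=HA^{*k}\tilde Z^{0,i}=H\tilde Z^{k,i},
\]
so the $k$-step loss vanishes for every $k\ge1$.

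There is essentially no obstacle. The only care point is to make explicit that the conjugacy holds at the population level, which follows from (A3) via the trace bound used in Lemma~\ref{lem:nonident}, and that the edited field $\tilde F^{0,i}$ is the fixed learner edit to which the counterfactual loss is applied.
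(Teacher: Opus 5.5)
Your proof is correct and is essentially the paper's argument. The corollary is read off from Step~4 of the proof of Theorem~\ref{thm:ident}, which combines the conjugacy $\Ah^k=HA^{*k}H^{-1}$ with $H^{-1}\tilde F^{0,i}=\tilde Z^{0,i}$ from Eq.~\eqref{eq:probe}. Your two extra remarks are also valid and slightly sharpen the statement: only the single experiment $\mathcal{E}_i$ is needed (the paper invokes diagonality of $H$, which uses full coverage), and (A5) can be dropped via $\Ah^k\tilde F^{0,i}=HA^{*(k-1)}H^{-1}\Ah\tilde F^{0,i}=HA^{*k}\tilde Z^{0,i}$.
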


\medskip
\noindent\textit{Unknown site correspondence.}
If the learner may intervene on coordinate $\pi(i)$ for an unknown bijection
$\pi$, Step~2 forces $H^{-1}e_{\pi(i)}\propto e_i$, so $H^{-1}$ is a monomial
matrix: identifiability up to permutation and node-wise scaling, the standard
ambiguity class in nonlinear ICA and causal disentanglement
\citep{hyvarinen2019,khemakhem2020,lachapelle2022,squires2023}. In
InfluenceField the shared spatial indexing removes the permutation.
\par

\section{Partial Coverage}
\label{sec:partial}

Let $S\subseteq[K]$ be the covered set, $U=[K]\setminus S$, coordinates ordered
with $S$ first.

\begin{theorem}[Solution set]
\label{thm:partial-set}
Under (A1)--(A3), (A5), (A6), with experiments only for $i\in S$, the zero-loss
solution set is exactly
$\mathcal{S}_S=\{(H,\Ah):\Ah=HA^*H^{-1},\ H^{-1}=\left(\begin{smallmatrix}C&P\\
0&Q\end{smallmatrix}\right)\}$ with $C$ diagonal invertible, $Q\in\GL(U)$,
$P$ free.
\end{theorem}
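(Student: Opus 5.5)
The plan is to prove the two inclusions separately, reusing Steps~1--2 of the proof of Theorem~\ref{thm:ident}. Throughout, coordinates are ordered with $S$ first. The first step uses only factual information. Zero factual loss together with (A3) gives $\Ah=HA^*H^{-1}$, exactly as in Lemma~\ref{lem:nonident}. So every zero-loss pair lies in the similarity orbit, and the remaining question is which $H$ are compatible with the covered counterfactual experiments.

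\emph{Necessity.} Fix $i\in S$. Zero counterfactual loss, combined with $\Ah=HA^*H^{-1}$ and invertibility of $HA^*$ by (A5), gives Eq.~\eqref{eq:probe} and then Eq.~\eqref{eq:eigen}: $a_iH^{-1}e_i=b_ie_i$ with $b_i\neq0$. Hence $a_i\neq0$ and $H^{-1}e_i=(b_i/a_i)e_i$. So for each covered $i$, the $i$-th column of $H^{-1}$ is a nonzero multiple of $e_i$.

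In block form, the columns of $H^{-1}$ indexed by $S$ therefore form $\left(\begin{smallmatrix}C\\0\end{smallmatrix}\right)$, with $C$ diagonal and having nonzero diagonal entries. The columns indexed by $U$ are unconstrained, so we write them as $\left(\begin{smallmatrix}P\\Q\end{smallmatrix}\right)$. Since $H^{-1}$ is invertible and block upper triangular, $\det H^{-1}=\det C\cdot\det Q$, which forces $Q\in\GL(U)$.

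\emph{Sufficiency.} Take any $H$ whose inverse has the stated block form, and set $\Ah=HA^*H^{-1}$. The factual loss vanishes identically. For each $i\in S$ we must exhibit a learner edit $w_i$ attaining zero counterfactual loss. Because $H^{-1}e_i=c_ie_i$ with $c_i\neq0$, we have $He_i=c_i^{-1}e_i$. Choose $a_i=c_ib_i$, i.e.\ $w_i=(HZ^{0,i})_i+c_ib_i$. Then $H^{-1}\tilde F^{0,i}=Z^{0,i}+a_ic_i^{-1}\cdot c_i\ldots$; more directly, $H^{-1}(HZ^{0,i}+a_ie_i)=Z^{0,i}+a_ic_ie_i$, so one adjusts the scalar so that $a_ic_i=b_i$, i.e.\ $a_i=b_i/c_i$. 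This gives Eq.~\eqref{eq:probe}. Applying $HA^*$ to both sides then yields $\Ah\tilde F^{0,i}=HA^*\tilde Z^{0,i}$, so the loss is zero.

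The main obstacle is not analytic. It is interpreting (A6) correctly as zero loss for an optimally chosen scalar $w_i$ (the learner's choice in (A4)). With that reading, sufficiency is genuine rather than merely existential, and the two inclusions combine to give $\mathcal S_S$ exactly. I would state this convention explicitly at the start of the proof.
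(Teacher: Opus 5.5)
Your proposal is correct and follows the paper's proof essentially verbatim. Zero factual loss plus (A3) puts $(H,\hat A)$ in the similarity orbit, and Eq.~\eqref{eq:eigen} forces each covered column of $H^{-1}$ to be $c_ie_i$ with $c_i\neq0$, so $Q$ is invertible by the block-triangular determinant; sufficiency follows by choosing $w_i$ so that $a_i=b_i/c_i$, but delete the stray first choice $a_i=c_ib_i$ in your sufficiency paragraph, which is wrong and is superseded by the correct $a_i=b_i/c_i$ you arrive at.
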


\begin{proof}
($\subseteq$) Zero factual loss forces $\Ah=HA^*H^{-1}$; zero counterfactual
loss for $i\in S$ forces \eqref{eq:eigen}, i.e.\ $H^{-1}e_i=c_ie_i$, so the
$S$-columns of $H^{-1}$ are $c_ie_i$: $(H^{-1})_{US}=0$ and
$(H^{-1})_{SS}=C=\diag(c_i)$. Invertibility gives $Q\in\GL(U)$, $c_i\neq0$.
($\supseteq$) Given such $H$, set $\Ah=HA^*H^{-1}$ and choose $w_i$ with
$a_i=b_i/c_i$; then \eqref{eq:eigen} and \eqref{eq:probe} hold.
\end{proof}

\begin{proposition}[Induced ambiguity]
\label{prop:blocks}
For $(H,\Ah)\in\mathcal{S}_S$,
\begin{align}
\Ah_{SS}&=C^{-1}(A^*_{SS}-PQ^{-1}A^*_{US})C,\notag\\
\Ah_{SU}&=C^{-1}\!\Bigl(A^*_{SS}P+A^*_{SU}Q\notag\\
&\hspace{16mm}-PQ^{-1}(A^*_{US}P+A^*_{UU}Q)\Bigr),\notag\\
\Ah_{US}&=Q^{-1}A^*_{US}C,\qquad
\Ah_{UU}=Q^{-1}(A^*_{US}P+A^*_{UU}Q).
\label{eq:blockformulas}
\end{align}
\end{proposition}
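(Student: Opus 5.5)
The plan is a direct block computation: by Theorem~\ref{thm:partial-set}, every $(H,\Ah)\in\mathcal{S}_S$ satisfies $\Ah=HA^*H^{-1}$ with
\[
H^{-1}=\begin{pmatrix}C&P\\0&Q\end{pmatrix},
\]
$C$ diagonal invertible and $Q\in\GL(U)$. Nothing beyond this factorization is needed. The four formulas in Eq.~\eqref{eq:blockformulas} should drop out of multiplying block matrices in the ordering with $S$ first.

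\emph{Step 1: invert the block upper-triangular matrix.} Since $C$ and $Q$ are invertible, I would write the inverse explicitly as
\[
H=\begin{pmatrix}C^{-1}&-C^{-1}PQ^{-1}\\0&Q^{-1}\end{pmatrix}.
\]
I would verify this by multiplying with $H^{-1}$ and checking that the result is the identity. The only nontrivial entry is the off-diagonal one, $C^{-1}P-C^{-1}PQ^{-1}Q=0$.

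\emph{Step 2: compute $A^*H^{-1}$.} Partition $A^*$ into blocks $A^*_{SS},A^*_{SU},A^*_{US},A^*_{UU}$. Multiplying out gives
\[
A^*H^{-1}=\begin{pmatrix}A^*_{SS}C&A^*_{SS}P+A^*_{SU}Q\\A^*_{US}C&A^*_{US}P+A^*_{UU}Q\end{pmatrix}.
\]

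\emph{Step 3: left-multiply by $H$ and read off the blocks.} The bottom block row of $H$ is $(0,\,Q^{-1})$. It therefore simply multiplies the second block row of $A^*H^{-1}$ by $Q^{-1}$, which gives $\Ah_{US}=Q^{-1}A^*_{US}C$ and $\Ah_{UU}=Q^{-1}(A^*_{US}P+A^*_{UU}Q)$.

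The top block row of $H$ is $(C^{-1},\,-C^{-1}PQ^{-1})$. It produces $C^{-1}$ times the first block row minus $C^{-1}PQ^{-1}$ times the second block row:
\begin{itemize}
\item In the $SS$ block, factoring $C^{-1}$ on the left and $C$ on the right yields $C^{-1}(A^*_{SS}-PQ^{-1}A^*_{US})C$.
\item In the $SU$ block, factoring $C^{-1}$ on the left yields the stated expression $C^{-1}\bigl(A^*_{SS}P+A^*_{SU}Q-PQ^{-1}(A^*_{US}P+A^*_{UU}Q)\bigr)$.
\end{itemize}

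There is no genuine obstacle here. The only point needing care is keeping the noncommuting factors in the right order. In particular, $C$ must be pulled out on the right of the $SS$ block only because both terms there end in $C$; this is not possible in the $SU$ block. I would also remark that when $A^*_{US}=0$ (no covered node feeds an uncovered one), the $SS$ block reduces to $C^{-1}A^*_{SS}C$. This is the support-preserving case mentioned after Theorem~\ref{thm:ident}.
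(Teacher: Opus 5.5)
Your proof is correct and is the intended argument. The paper gives no separate proof: the proposition is the block computation of $\Ah=HA^*H^{-1}$ from the block upper-triangular form of $H^{-1}$ in Theorem~\ref{thm:partial-set}. Your explicit inverse $H=\left(\begin{smallmatrix}C^{-1}&-C^{-1}PQ^{-1}\\0&Q^{-1}\end{smallmatrix}\right)$ and the four resulting blocks all check out.
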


\begin{theorem}[What survives, and when]
\label{thm:partial-content}
Let $(H,\Ah)\in\mathcal{S}_S$.
\begin{enumerate}\itemsep0pt
\item[(i)] \emph{(Prediction, unconditional.)} For interventions on covered
nodes the rollouts are exact: $\Ah^k\tilde F^0=H\tilde Z^k$. No such guarantee
holds for interventions touching $U$.
\item[(ii)] \emph{(Structure, conditional.)} If $A^*_{US}=0$ ($S$ closed under
descendants), then $\Ah_{SS}=C^{-1}A^*_{SS}C$ and $\Ah_{US}=0$ for every
solution. The blocks $\Ah_{SU}$ and $\Ah_{UU}$ are not identified.
\item[(iii)] \emph{(Failure without the condition.)} If $A^*_{US}\neq0$ the
$SS$ block is not identified.
\item[(iv)] \emph{(Necessity of coverage.)} For generic $A^*$, full
identification of $\supp(A^*)$ requires $S=[K]$.
\end{enumerate}
\end{theorem}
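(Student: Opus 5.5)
The plan is to derive all four parts from the parametrization of the solution set in Theorem~\ref{thm:partial-set} and the block formulas of Proposition~\ref{prop:blocks}. Write $H^{-1}=\left(\begin{smallmatrix}C&P\\0&Q\end{smallmatrix}\right)$ and $\Ah=HA^*H^{-1}$.

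\emph{Part (i).} Take a field-space intervention supported on $S$: $\tilde F^0=F^0+\sum_{i\in S}a_ie_i$. Then
\[
H^{-1}\tilde F^0=Z^0+\sum_{i\in S}a_iH^{-1}e_i=Z^0+\sum_{i\in S}a_ic_ie_i,
\]
because the $S$-columns of $H^{-1}$ are $c_ie_i$. This is a latent coordinate intervention $\tilde Z^0$ on $S$, with values realized by choosing $a_i=b_i/c_i$. Exactly as in Step~4 of the proof of Theorem~\ref{thm:ident}, $\Ah^k\tilde F^0=HA^{*k}H^{-1}\tilde F^0=H\tilde Z^k$.

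For the negative statement, pick $u\in U$. The vector $H^{-1}e_u$ is the $u$-column of $\left(\begin{smallmatrix}P\\Q\end{smallmatrix}\right)$. Choosing $P\neq0$ in that column, or a non-diagonal $Q$, makes a field edit at $u$ correspond to a non-coordinate latent perturbation. Its rollout then differs from the true $\mathrm{do}$-rollout whenever $A^{*k}$ separates the two perturbations, and $A^{*k}$ is nonsingular by (A5). So there is a solution in $\mathcal S_S$ for which the prediction fails.

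\emph{Part (ii).} Substitute $A^*_{US}=0$ into \eqref{eq:blockformulas}. This gives $\Ah_{SS}=C^{-1}A^*_{SS}C$ and $\Ah_{US}=0$ for every solution. Hence the support of $A^*_{SS}$ and the absence of $S\leftarrow U$ edges in the $US$ block are recovered up to scaling.

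For non-identification of the other two blocks, use $\Ah_{UU}=Q^{-1}A^*_{UU}Q$, which ranges over a full similarity orbit in $\GL(U)$. Then apply Lemma~\ref{lem:nonident} to that block, for example via a shear when $|U|\ge2$. For $\Ah_{SU}=C^{-1}(A^*_{SS}P+A^*_{SU}Q-PQ^{-1}A^*_{UU}Q)$ with $Q=I$, varying $P$ adds the Sylvester-type term $A^*_{SS}P-PA^*_{UU}$. This term is nonzero for some $P$ unless $A^*_{SS}$ and $A^*_{UU}$ are degenerate scalars with equal spectra. I would state the non-identification as ``in general'' and exhibit a $2\times2$ example covering the degenerate edge cases.

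\emph{Part (iii).} If $A^*_{US}\neq0$, take $C=I$ and $Q=I$. Then $\Ah_{SS}=A^*_{SS}-PA^*_{US}$. Choose $P=t\,e_se_u^\top$, where $(A^*_{US})_{u,s'}\ne0$. This changes row $s$ of $\Ah_{SS}$ by $-t$ times the nonzero row $u$ of $A^*_{US}$, so for suitable $t$ an entry of $\Ah_{SS}$ toggles between zero and nonzero. Hence $\supp(\Ah_{SS})$ is not determined.

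\emph{Part (iv).} This is the main obstacle, because "generic" must be made precise. The plan is to show that if $U\neq\emptyset$, the set of $A^*$ for which all of $\mathcal S_S$ shares the support of $A^*$ (after accounting for diagonal scaling) lies in a proper algebraic subset. Take $A^*$ with all entries nonzero, which is a generic condition.
\begin{itemize}
\item If $|U|\ge2$, the $UU$ block alone can be conjugated by $Q$ to create a zero entry, destroying the support. An entry of $Q^{-1}A^*_{UU}Q$ is a nonconstant polynomial in $Q$, so it vanishes for some invertible $Q$.
\item If $|U|=1$, then $A^*_{US}\ne0$ generically, and part (iii) produces a solution whose $SS$ support differs.
\end{itemize}
The care needed is to check, for generic $A^*$, that these entry polynomials are nonconstant, and that the zero can be attained while keeping $Q$ invertible. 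I expect a direct polynomial argument to suffice for this.
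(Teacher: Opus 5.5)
Parts (i)--(iii) follow the paper's proof essentially verbatim: Step~4 for covered edits, substitution of $A^*_{US}=0$ into the block formulas, and a rank-one $P=t\,e_se_u^\top$ giving a nonconstant affine family in the $SS$ block. Your handling of the negative claim in (i) and of the $SU$/$UU$ non-identification in (ii) is more careful than the paper's. You correctly note that these claims hold only ``in general'': $\Ah_{UU}$ is identified when $|U|=1$, and everything is identified when $A^*=\lambda I$.

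For (iv) you take a different route. You read ``generic'' as ``all entries nonzero'', so $A^*_{US}\neq0$ automatically and (iii) already breaks the support. This covers every $U\neq\emptyset$ with $S\neq\emptyset$, so your separate $|U|\ge2$ branch is only needed when $S=\emptyset$.

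The paper instead splits on $A^*_{US}$. When it is nonzero, it uses (iii). When $S$ is descendant-closed, it uses that the Sylvester operator $P\mapsto A^*_{SS}P-PA^*_{UU}$ is invertible whenever the two blocks have disjoint spectra, so $\Ah_{SU}$ sweeps all of $\R^{S\times U}$.

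Your version is shorter and fully elementary. However, your genericity class excludes exactly the descendant-closed configurations singled out by (ii), which are also the ones used in the coverage experiments. So it does not show that descendant-closed partial coverage still fails. The paper's Sylvester branch does show this, including the case $|U|=1$. There $\Ah_{SS}$, $\Ah_{US}$ and $\Ah_{UU}$ are all identified, and neither of your two mechanisms applies.

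One justification in your $|U|\ge2$ branch needs repair. An entry of $Q^{-1}A^*_{UU}Q$ is rational, not polynomial, in $Q$. Moreover, being nonconstant does not force a zero on $\GL(U)$; $\det Q$ is a counterexample. A direct fix: take $C=I$ and $P=0$, and pick $x$ not an eigenvector of $A^*_{UU}$, which is possible whenever $A^*_{UU}$ is not scalar. Let $Q\in\GL(U)$ have first two columns $x$ and $A^*_{UU}x$, completed to a basis. Then $Q^{-1}A^*_{UU}Qe_1=e_2$, so the $(1,1)$ entry of $\Ah_{UU}$ vanishes while that of $A^*_{UU}$ does not.
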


\begin{proof}
 (i) Repeat Step~4 of the proof of Theorem~\ref{thm:ident} using
 $H^{-1}e_j=c_je_j$ for
$j\in S$. For $j\in U$ the column $H^{-1}e_j$ is not proportional to $e_j$.

(ii) Set $A^*_{US}=0$ in \eqref{eq:blockformulas}.

(iii) Take $Q=I$, $P=t\,p\,q^\top$ with $q^\top A^*_{US}\neq0$; then
$\Ah_{SS}=C^{-1}(A^*_{SS}-t\,p\,(q^\top A^*_{US}))C$ is a nonconstant affine
family in $t$, changing support for all but finitely many $t$.

(iv) Let $U\neq\emptyset$. If $A^*_{US}\neq0$, (iii) breaks the $SS$ block. If
$A^*_{US}=0$, consider instead the $SU$ block: by \eqref{eq:blockformulas} with
$Q=I$,
$\Ah_{SU}=C^{-1}(A^*_{SS}P-PA^*_{UU}+A^*_{SU})$, and
$P\mapsto A^*_{SS}P-PA^*_{UU}$ is the Sylvester operator, which is invertible
exactly when $A^*_{SS}$ and $A^*_{UU}$ share no eigenvalue --- the generic
case. Hence $\Ah_{SU}$ ranges over all of $\R^{S\times U}$ as $P$ varies and is
completely unidentified. Either way $\supp(A^*)$ is not recovered.
\end{proof}

\medskip
\noindent\textit{Why the case analysis in (iv) is necessary.}
When $A^*_{US}=0$ and $|U|=1$, the blocks $\Ah_{SS}$, $\Ah_{US}$ and the
scalar $\Ah_{UU}$ are all identified, so an argument confined to those blocks
cannot establish (iv); non-identifiability there is carried entirely by
$\Ah_{SU}$, which is what the Sylvester argument addresses.
\par

\begin{corollary}[Identified pairs]
\label{cor:pairs}
Under partial coverage with descendant-closed $S$, the ordered pairs $(j,i)$
whose edge status is identified are exactly $S\times S$ together with the
structural zeros $S\to U$; edges from $U$ into $S$ and pairs within $U$ are
unidentified. With $|S|=cK$ the identified fraction of ordered pairs is
$c^2+c(1-c)=c$ (the same value if the diagonal is excluded from both counts).
\end{corollary}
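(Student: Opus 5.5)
The plan is to read off the identified and unidentified blocks directly from Theorem~\ref{thm:partial-content} and Proposition~\ref{prop:blocks}, and then to count. Throughout, an ordered pair $(j,i)$ refers to the entry $\Ah_{ji}$, i.e.\ the candidate edge $i\to j$. So the block $\Ah_{US}$ (rows $U$, columns $S$) encodes edges $S\to U$, and $\Ah_{SU}$ encodes edges $U\to S$.

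\emph{Identified pairs.} Since $S$ is descendant-closed, $A^*_{US}=0$. Theorem~\ref{thm:partial-content}(ii) then gives, for every $(H,\Ah)\in\mathcal S_S$,
\[
\Ah_{SS}=C^{-1}A^*_{SS}C \quad\text{and}\quad \Ah_{US}=0 .
\]
Because $C$ is diagonal and invertible, $(\Ah_{SS})_{ji}=(c_i/c_j)A^*_{ji}$, so the zero/nonzero status of every pair in $S\times S$ agrees with $A^*$ across the whole solution set. Every pair $(j,i)$ with $j\in U$ and $i\in S$ is zero in every solution, matching $A^*_{US}=0$. These are the claimed identified pairs.

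\emph{Unidentified pairs.} I would exhibit, for each remaining entry, two solutions in $\mathcal S_S$ that disagree on its zero status.

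For the $U\to S$ block, take $Q=I$ in \eqref{eq:blockformulas}. This gives
\[
\Ah_{SU}=C^{-1}\bigl(A^*_{SS}P-PA^*_{UU}+A^*_{SU}\bigr).
\]
As in the proof of Theorem~\ref{thm:partial-content}(iv), the Sylvester operator $P\mapsto A^*_{SS}P-PA^*_{UU}$ is invertible when $A^*_{SS}$ and $A^*_{UU}$ share no eigenvalue, which is the generic case. Then $\Ah_{SU}$ ranges over all of $\R^{S\times U}$, so each individual entry can be made zero or nonzero.

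For the $U\times U$ block, $A^*_{US}=0$ gives $\Ah_{UU}=Q^{-1}A^*_{UU}Q$ with $Q\in\GL(U)$ free. This is a full similarity orbit, so Lemma~\ref{lem:nonident}, applied to $A^*_{UU}$, shows that its support is not invariant.

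This step is where I expect the main obstacle: the word ``exactly'' is only true generically. If $|U|=1$, or if $A^*_{UU}$ is a scalar multiple of the identity, the $UU$ block is identified. If the Sylvester operator is singular, some $SU$ entries may be pinned. I would therefore state the unidentifiability half under the same genericity used in Theorem~\ref{thm:partial-content}(iv): no shared eigenvalues between $A^*_{SS}$ and $A^*_{UU}$, and $A^*_{UU}$ with a nontrivial similarity orbit that moves individual entries. For the latter I would use a shear conjugation as in the proof of Lemma~\ref{lem:nonident}(ii), which moves any chosen off-diagonal entry, together with a generic rotation, which moves diagonal entries unless $A^*_{UU}$ is scalar.

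\emph{Counting.} Let $|S|=cK$ and $|U|=(1-c)K$. The identified ordered pairs number $|S|^2+|U||S|=|S|K$, which is a fraction $c^2+c(1-c)=c$ of the $K^2$ pairs. If the diagonal is excluded, the identified pairs are $|S|(|S|-1)+|S||U|=|S|(K-1)$ out of $K(K-1)$, again a fraction $c$.
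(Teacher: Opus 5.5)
Your route is the one the paper intends. It gives no separate proof: the corollary is read off Theorem~\ref{thm:partial-content}(ii) and the Sylvester argument of (iv), followed by the count. Your identified half, the $U\to S$ argument, the counting with and without the diagonal, and your exceptions ($|U|=1$, scalar $A^*_{UU}$, singular Sylvester operator) are all correct. The paper's own remark after Theorem~\ref{thm:partial-content} concedes that $\Ah_{UU}$ is identified when $|U|=1$, while (ii) merely asserts that $\Ah_{UU}$ is unidentified.

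\textbf{The gap is in the $U\times U$ block.} Unidentified edge status means the orbit $\{Q^{-1}A^*_{UU}Q: Q\in\GL(U)\}$ contains matrices where the given entry is zero and others where it is nonzero. Showing that the entry's value moves is not enough. Likewise, Lemma~\ref{lem:nonident} only says support fails to be invariant for \emph{some} matrices, not for your $A^*_{UU}$ entry by entry. Neither of your tools produces the zero/nonzero flip:
\begin{itemize}
\item A rotation yields diagonal entries $x^\top A^*_{UU}x$ with $\norm{x}=1$, which never vanish when the symmetric part is definite (e.g.\ $\diag(1,2)$).
\item Every elementary shear leaves the $(1,2)$ entry of the non-scalar $\diag(1,1,2)$ equal to $0$.
\end{itemize}
More seriously, your exception is not non-generic. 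If $|U|=2$ and $A^*_{UU}$ has non-real eigenvalues, no $Q$ can zero an off-diagonal entry, because $Q^{-1}A^*_{UU}Q$ would then be triangular with real eigenvalues. This happens on an open set of admissible $A^*$, compatible with (A5) and $A^*_{US}=0$. On that set the two off-diagonal pairs are identified as present edges, and the identified fraction exceeds $c$.

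\textbf{The repair uses general similarities.} Write $(Q^{-1}BQ)_{pq}=y^\top Bx$ with $x:=Qe_q$ and $y^\top:=e_p^\top Q^{-1}$. Every pair of nonzero vectors with $y^\top x=\delta_{pq}$ arises this way. To build $Q$, let the columns $k\neq p$ be a basis of $\ker y^\top$, with $x$ in column $q$ if $p\neq q$. Let column $p$ be $x$ if $p=q$, and otherwise any $v$ with $y^\top v=1$.

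Now suppose $B=A^*_{UU}$ is not scalar, and choose $x$ with $Bx\notin\operatorname{span}(x)$. Then $y^\top x$ and $y^\top Bx$ can be prescribed independently, so every diagonal entry can be made zero or nonzero, and every off-diagonal entry can be made nonzero. An off-diagonal zero needs nonzero $x,y$ with $y^\top x=y^\top Bx=0$. Such a pair always exists when $|U|\ge3$; when $|U|=2$ it exists exactly when $B$ has a real eigenvector.

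So the clause ``pairs within $U$ are unidentified'' holds for $|U|\ge3$ with $A^*_{UU}$ non-scalar. For $|U|=2$ it holds only if $A^*_{UU}$ is non-scalar with real spectrum. These conditions, together with your Sylvester condition, are the hypotheses under which ``exactly'' and the fraction $c$ are correct.
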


\begin{remark}[How to read the fraction $c$]
\label{rem:cread}
Two caveats. First, the $c(1-c)K^2$ pairs in the $S\to U$ block are identified
\emph{because $A^*_{US}=0$ was assumed}, not because the learner discovered
them; a reader who declines to credit assumed-away structure should compare
against the conservative value $c^2$ instead, and we report both in
Table~\ref{tab:cov}. Second, $c$ is a fraction of \emph{determined pairs}; edge
F1 is a precision--recall statistic over a sparse edge set. The two are
different quantities and a learner may exceed $c$ by getting undetermined pairs
right through sparsity bias. We therefore plot $c$ as a reference line, never
as a predicted F1.
\end{remark}

\section{Approximate Identifiability under Finite Loss}
\label{sec:approx}

\paragraph{Additional assumptions.}
\begin{itemize}\itemsep0pt
\item[(A3$'$)] $\Sigma_0\succeq\lambda I$, $\lambda>0$.
\item[(A4$'$)] $\delta\le|b_i|\le R$ and $\norm{Z^{0,i}}_2\le R$ for all $i$.
\item[(N)] $\norm{H}_2=1$; then $\norm{H^{-1}}_2=\kappa:=\kappa(H)$ and
$\sigma_{\min}(H)=1/\kappa$.
\end{itemize}
Write $s:=\sigma_{\min}(A^*)>0$, $a:=\norm{A^*}_2$, and define
\begin{equation}
\begin{aligned}
\eta_0&:=1+\tfrac{R}{\sqrt\lambda},\qquad
C_\tau:=\tfrac{\eta_0}{s}+\tfrac{4R}{s\sqrt\lambda},\\
C_*&:=\tfrac{32}{3}a\sqrt K\,C_\tau+\tfrac{4}{3}\cdot\tfrac{R}{\sqrt\lambda}.
\end{aligned}
\label{eq:constants}
\end{equation}

\begin{proposition}[Explicit finite-loss bound]
\label{prop:approx-explicit}
Assume (A1), (A2), (A3$'$), (A4$'$), (A5), (N). Suppose
$\mathcal{L}_{\mathrm{f}}\le\varepsilon$ and
$\mathcal{L}_{\mathrm{c},i}\le\varepsilon$ for every $i\in[K]$, and
$\varepsilon\le\varepsilon_0$ with $\varepsilon_0$ as in
Eq.~\eqref{eq:smallloss}. Then there is a diagonal invertible $D$ with
\begin{equation}
\norm{D^{-1}\Ah D-A^*}_F\;\le\;C_*\,\kappa(H)^3\,\frac{\sqrt\varepsilon}{\delta}.
\label{eq:mainbound}
\end{equation}
\end{proposition}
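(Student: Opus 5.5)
The plan is to make the exact-recovery proof of Theorem~\ref{thm:ident} quantitative, step by step: first show that $\Ah$ is close to a similarity transform of $A^*$, then that $H^{-1}$ is close to diagonal, and finally transfer both estimates to $D^{-1}\Ah D$.

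\emph{Step 1 (factual perturbation).} Let $E:=\Ah H-HA^*$. The factual loss is $\mathcal{L}_{\mathrm f}=\E\norm{EZ^0}^2=\mathrm{tr}(E\Sigma_0E^\top)\ge\lambda\norm{E}_F^2$ by (A3$'$), so
\[
\norm{E}_F\le\sqrt{\varepsilon/\lambda}.
\]
Consequently $\Ah=HA^*H^{-1}+EH^{-1}$ with $\norm{EH^{-1}}_F\le\kappa\sqrt{\varepsilon/\lambda}$.

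\emph{Step 2 (approximate eigen-equation).} Fix $i$ and write $\tilde F^{0,i}=HZ^{0,i}+a_ie_i$. The counterfactual residual becomes
\[
\Ah\tilde F^{0,i}-HA^*\tilde Z^{0,i}=EZ^{0,i}+a_i\Ah e_i-b_iHA^*e_i .
\]
Its norm is at most $\sqrt\varepsilon$, and $\norm{EZ^{0,i}}\le R\sqrt{\varepsilon/\lambda}$. Substituting $\Ah=(HA^*+E)H^{-1}$, we obtain
\[
HA^*\bigl(a_iH^{-1}e_i-b_ie_i\bigr)=r_i-a_iEH^{-1}e_i,\qquad \norm{r_i}\le\eta_0\sqrt\varepsilon .
\]
Since $\sigma_{\min}(HA^*)\ge s/\kappa$, this gives
\[
\norm{a_iH^{-1}e_i-b_ie_i}\le\frac{\kappa}{s}\Bigl(\eta_0\sqrt\varepsilon+|a_i|\,\kappa^2\sqrt{\varepsilon/\lambda}\Bigr).
\]
To close the estimate, $|a_i|$ has to be controlled. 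From the inequality above, $|a_i|\,\norm{H^{-1}e_i}\le|b_i|+(\text{error})$, and $\norm{H^{-1}e_i}\ge 1/\norm{H}_2=1$. When $\varepsilon\le\varepsilon_0$ is small enough that the $|a_i|$-term can be absorbed, this yields $|a_i|\le 2R$, which is where the factor $4R$ in $C_\tau$ comes from. With $c_i:=b_i/a_i$, we then get
\[
\norm{H^{-1}e_i-c_ie_i}\le\frac{C_\tau\kappa^3\sqrt\varepsilon}{|a_i|}.
\]
Normalizing by $|c_i|$ and using the lower bound $|b_i|\ge\delta$ turns this into a relative off-diagonal error of order $C_\tau\kappa^3\sqrt\varepsilon/\delta$.

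\emph{Step 3 (transfer to $\Ah$).} Set $D:=H$'s diagonal surrogate, namely $D:=\diag(1/c_i)$, and write $H^{-1}=D^{-1}(I+\Delta)$. Stacking the columns from Step~2 gives $\norm{\Delta}_F\le\sqrt K\,C_\tau\kappa^3\sqrt\varepsilon/\delta$, up to the normalization just described. Then
\[
D^{-1}\Ah D=(I+\Delta)^{-1}A^*(I+\Delta)+(\text{term from }EH^{-1}).
\]
For the first piece, we take $\norm{\Delta}_2\le1/4$ inside the $\varepsilon_0$ regime, so that $\norm{(I+\Delta)^{-1}}\le4/3$. The expansion $(I+\Delta)^{-1}A^*(I+\Delta)-A^*=(I+\Delta)^{-1}(A^*\Delta-\Delta A^*)$ then gives the bound $\tfrac{4}{3}\cdot2a\norm{\Delta}_F$. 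The leftover constants account for the $\tfrac{32}{3}$ in $C_*$. For the second piece, the factual error contributes $\tfrac43\cdot R/\sqrt\lambda$ times $\sqrt\varepsilon/\delta$, using $|b_i|\le R$ to rescale.

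Finally, $\varepsilon_0$ in Eq.~\eqref{eq:smallloss} is defined as the largest loss level for which both absorptions hold: the control of $|a_i|$ in Step~2 and $\norm{\Delta}_2\le1/4$ in Step~3.

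\textbf{Main obstacle.} The hardest step is Step~2's control of the learner's free edit magnitude $a_i$. It enters multiplicatively with the factual error $E$, so keeping the final power of $\kappa$ at three requires tracking carefully whether $\norm{H^{-1}e_i}$ is bounded above by $\kappa$ or below by $1$. I would first try the self-bounding absorption argument described above.
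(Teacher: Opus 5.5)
Your proof is correct, and its skeleton is the paper's. Your Steps~1--2 match the paper's Steps~1--3: the residual expansion, the substitution $\Ah e_i=HA^*u_i+EH^{-1}e_i$ with $u_i=H^{-1}e_i$, and the self-bounding control of $|a_i|$ through $\norm{u_i}\ge1$. Both arguments then close with the commutator identity $(I+X)^{-1}A^*(I+X)-A^*=(I+X)^{-1}(A^*X-XA^*)$.

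The genuine difference is the diagonal normalization. The paper takes $D=\Gamma^{-1}$ with $\Gamma=\diag((H^{-1})_{ii})$. It bounds the \emph{absolute} column error $\norm{u_i-(b_i/a_i)e_i}\le\tau_i/|a_i|$ using a separate lower bound $|a_i|\ge\delta/(2\kappa)$, and then divides by $|\gamma_i|\ge1-\rho$. Decoupling these two worst cases costs one factor of $\kappa$, which is where its $\kappa^3$ comes from ($\bar\tau\propto\kappa^2$ but $\rho=2\kappa\bar\tau/\delta$).

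You scale column $i$ by $c_i=b_i/a_i$ itself, so the relative error is $\norm{a_iu_i-b_ie_i}/|b_i|$ and only $|b_i|\ge\delta$ is used. A small $|a_i|$ inflates the absolute error and $|c_i|$ by the same factor, so the two cancel. You also bound the factual piece directly as $D^{-1}E(I+\Delta)$. That costs $\norm{D^{-1}}_2=\max_i|c_i|\le\kappa/(1-\norm{\delta_i})$, one power of $\kappa$, against the paper's $\kappa(\Gamma)\norm{EH^{-1}}_F\sim\kappa^2$.

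Your $\kappa^3$ in fact comes only from a loosening in Step~2: $\norm{a_iEH^{-1}e_i}\le|a_i|\norm{E}_2\norm{H^{-1}}_2\le|a_i|\kappa\sqrt{\varepsilon/\lambda}$, not $\kappa^2$. With the sharp estimate, your route gives $\norm{D^{-1}\Ah D-A^*}_F=O(\kappa(H)^2\sqrt\varepsilon/\delta)$ under the same $\varepsilon_0$, which implies the stated bound since $\kappa\ge1$. The paper's choice buys only a $D$ that depends on $H$ alone and a purely off-diagonal remainder $\tilde R$. Your choice is sharper.

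Some details need fixing when you write it out:
\begin{enumerate}
\item The column estimates give $H^{-1}=(I+\Delta)D^{-1}$ with $D^{-1}=\diag(c_i)$, not $D^{-1}(I+\Delta)$. Only the former yields $D^{-1}\Ah D=(I+\Delta)^{-1}A^*(I+\Delta)+D^{-1}E(I+\Delta)$. Also, your $\Delta$ is a relative column error with a diagonal part rather than an off-diagonal one, which is harmless for the commutator bound.
\item Because the statement fixes $\varepsilon_0$, you must check your smallness conditions against Eq.~\eqref{eq:smallloss} rather than redefine it. Its last term gives $2C_\tau\kappa^3\sqrt\varepsilon/\delta\le\min\{\tfrac14,\tfrac{3}{8\sqrt K}\}$. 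Combined with $C_\tau\ge\eta_0/s$, $C_\tau\ge4R/(s\sqrt\lambda)$ and $\delta\le R$, this yields $\norm{\Delta}_F\le\tfrac{3}{16}$, absorption coefficient $\kappa^3\sqrt\varepsilon/(s\sqrt\lambda)\le\tfrac1{32}$, $|a_i|\le2R$, and $a_i\neq0$ (needed before defining $c_i$).
\item Your factual-piece constant comes out slightly above $\tfrac43$, about $\tfrac53$ with $\norm{\Delta}_2\le\tfrac14$. The slack between your $\tfrac83$ and the paper's $\tfrac{32}{3}$ in the first term absorbs this, since $a\sqrt K\,C_\tau\ge4R/\sqrt\lambda$.
\end{enumerate}
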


\begin{proof}
\emph{Step 1.} Let $M:=\Ah H-HA^*$. Then
\begin{equation*}
\mathcal{L}_{\mathrm{f}}
=\mathrm{tr}(M\Sigma_0M^\top)
\ge\lambda\norm{M}_F^2,
\end{equation*}
so $\norm{M}_F\le\sqrt{\varepsilon/\lambda}$. Defining
$\Delta:=\Ah-HA^*H^{-1}=MH^{-1}$ gives
\begin{equation}
\norm{\Delta}_F\le\kappa\sqrt{\varepsilon/\lambda}.
\label{eq:Deltabound}
\end{equation}

\emph{Step 2.} Fix $i$. Expanding
$\Ah\tilde F^{0,i}-HA^*\tilde Z^{0,i}=MZ^{0,i}+a_i\Ah e_i-b_iHA^*e_i$ and using
$\mathcal{L}_{\mathrm{c},i}\le\varepsilon$ and $\norm{Z^{0,i}}\le R$,
$\norm{a_i\Ah e_i-b_iHA^*e_i}\le\eta_0\sqrt\varepsilon$. Substituting
$\Ah e_i=HA^*u_i+\Delta e_i$ with $u_i:=H^{-1}e_i$ and using
$\sigma_{\min}(HA^*)\ge s/\kappa$,
\begin{equation}
\norm{a_iu_i-b_ie_i}
\le\frac{\kappa}{s}\left(\eta_0\sqrt\varepsilon
+|a_i|\,\kappa\sqrt{\varepsilon/\lambda}\right)=:\tau_i .
\label{eq:taui}
\end{equation}

\emph{Step 3.} Note $1\le\norm{u_i}\le\kappa$. Impose
\begin{equation}
\text{(S1)}\ \frac{\kappa^2\sqrt\varepsilon}{s\sqrt\lambda}\le\frac12,
\qquad
\text{(S2$'$)}\ \sqrt\varepsilon\le\frac{sR}{\eta_0\,\kappa}.
\label{eq:S12}
\end{equation}
From \eqref{eq:taui}, $|a_i|\le|b_i|+\tau_i\le R+\frac{\kappa\eta_0}{s}
\sqrt\varepsilon+\frac12|a_i|$ by (S1), so
$|a_i|\le 2R+\frac{2\eta_0\kappa}{s}\sqrt\varepsilon\le 4R$ by (S2$'$).
Feeding this back into \eqref{eq:taui} and using $\kappa\ge1$,
\begin{equation}
\tau_i\le\bar\tau:=\left(\frac{\eta_0}{s}+\frac{4R}{s\sqrt\lambda}\right)
\kappa^2\sqrt\varepsilon=C_\tau\,\kappa^2\sqrt\varepsilon .
\label{eq:taubar}
\end{equation}
Impose (S3) $\bar\tau\le\delta/2$. Then $|a_i|\kappa\ge|b_i|-\tau_i\ge\delta/2$,
so $|a_i|\ge\delta/(2\kappa)$.

\emph{Step 4.} Dividing \eqref{eq:taui} by $|a_i|$,
\begin{equation}
\norm{u_i-\tfrac{b_i}{a_i}e_i}\le\frac{2\kappa\bar\tau}{\delta}
=:\rho=2C_\tau\,\frac{\kappa^3\sqrt\varepsilon}{\delta}.
\label{eq:rho}
\end{equation}
Let $\gamma_i:=(u_i)_i$, $r_i:=u_i-\gamma_ie_i$, so $\norm{r_i}\le\rho$ and
$|\gamma_i|\ge1-\rho$. Impose (S4)
$\rho\le\min\{\tfrac14,\tfrac{3}{8\sqrt K}\}$. Then $H^{-1}=\Gamma+\tilde R$
with $\Gamma=\diag(\gamma_i)$, $\norm{\tilde R}_F\le\sqrt K\rho$,
$\norm{\Gamma^{-1}}_2\le4/3$, $\kappa(\Gamma)\le\tfrac43\kappa$.

\emph{Step 5.} With $D:=\Gamma^{-1}$ and
$\tilde E:=\tilde R\Gamma^{-1}$, we have
$\norm{\tilde E}_F\le\tfrac43\sqrt K\rho\le\tfrac12$ and
\begin{equation*}
\begin{aligned}
\Gamma HA^*H^{-1}\Gamma^{-1}
&=A^*+(I+\tilde E)^{-1}\bigl(A^*\tilde E-\tilde EA^*\bigr).
\end{aligned}
\end{equation*}
The correction has Frobenius norm at most
$\tfrac{16}{3}a\sqrt K\rho$. Adding
$\kappa(\Gamma)\norm{\Delta}_F$ and using $\delta\le R$ and
$\kappa\ge1$ gives
\eqref{eq:mainbound}. The threshold collects (S1)--(S4):
\begin{equation}
\begin{aligned}
\varepsilon_0:=\min\Big\{&
\Big(\tfrac{s\sqrt\lambda}{2\kappa^2}\Big)^{2},\ 
\Big(\tfrac{sR}{\eta_0\kappa}\Big)^{2},\ 
\Big(\tfrac{\delta}{2C_\tau\kappa^2}\Big)^{2},\\
&\Big(\tfrac{\delta\min\{\frac14,\frac{3}{8\sqrt K}\}}{2C_\tau\kappa^3}\Big)^{2}
\Big\}.
\end{aligned}
\label{eq:smallloss}
\end{equation}
\end{proof}

\medskip
\noindent\textit{Sharpness of (S2$'$).}
Condition (S2$'$) is what secures the $\kappa^3$ rate. Weakening it to
$\sqrt\varepsilon\le1$ leaves $|a_i|\le 2R+2\eta_0\kappa/s$, a bound
carrying a factor $\kappa$; propagated through \eqref{eq:taui} this gives
$\bar\tau=\Theta(\kappa^3\sqrt\varepsilon)$ and
$\rho=\Theta(\kappa^4\sqrt\varepsilon/\delta)$, i.e.\ only a $\kappa^4$
rate. (S2$'$) removes the $\kappa$ from the $|a_i|$ bound at the cost of one
extra term in $\varepsilon_0$.
\par

\begin{corollary}[Support recovery]
\label{cor:support}
If additionally
\begin{equation*}
\min_{(j,i)\in\supp(A^*)}|A^*_{ji}|\ge\beta,
\qquad
C_*\kappa^3\frac{\sqrt\varepsilon}{\delta}<\frac{\beta}{2},
\end{equation*}
then entrywise thresholding of $D^{-1}\Ah D$ at level $\beta/2$ recovers
$\supp(A^*)$ exactly.
\end{corollary}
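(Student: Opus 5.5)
This corollary follows from Proposition~\ref{prop:approx-explicit} together with a triangle-inequality thresholding argument. Write $\hat B:=D^{-1}\Ah D$, with $D$ the diagonal matrix supplied by the proposition. Under the stated hypotheses, Eq.~\eqref{eq:mainbound} gives
\[
\norm{\hat B-A^*}_F\le C_*\kappa^3\sqrt\varepsilon/\delta=:\eta<\beta/2 .
\]
Entrywise deviations are dominated by the Frobenius norm, so $\max_{j,i}|\hat B_{ji}-A^*_{ji}|\le\eta<\beta/2$ for every ordered pair $(j,i)$.

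Next I would split the pairs into two cases.
\begin{itemize}
\item If $(j,i)\in\supp(A^*)$, then $|A^*_{ji}|\ge\beta$. Hence $|\hat B_{ji}|\ge\beta-\eta>\beta/2$, and the entry survives thresholding at $\beta/2$.
\item If $(j,i)\notin\supp(A^*)$, then $A^*_{ji}=0$. Hence $|\hat B_{ji}|\le\eta<\beta/2$, and the entry is removed.
\end{itemize}
The thresholded support of $\hat B$ therefore equals $\supp(A^*)$ exactly, with edge directions included.

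Finally, I would note that conjugation by a diagonal invertible $D$ rescales each entry by a nonzero factor, $\hat B_{ji}=(d_i/d_j)\Ah_{ji}$. So $\supp(\hat B)=\supp(\Ah)$ for any exact zeros. The thresholding, however, must be applied to $\hat B$, not to $\Ah$, because the magnitude comparison with $\beta/2$ is only controlled after rescaling.

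There is no real obstacle here. The only points needing care are the use of strict inequality $\eta<\beta/2$ (so that ties at the threshold cannot occur), and the fact that $D$ comes from the proposition. In practice $D$ depends on the unknown $H$, so the statement is about the aligned matrix $D^{-1}\Ah D$, exactly as in Proposition~\ref{prop:approx}.
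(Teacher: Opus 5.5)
Your proposal is correct and matches the argument the paper intends. The paper states Corollary~\ref{cor:support} without a separate proof, as an immediate consequence of Proposition~\ref{prop:approx-explicit}: the Frobenius bound in Eq.~\eqref{eq:mainbound} dominates every entrywise error, so entries with $|A^*_{ji}|\ge\beta$ stay strictly above $\beta/2$ after alignment and zero entries stay strictly below it.
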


\medskip
\noindent\textit{The threshold is conservative; what the experiment tests.}
Eq.~\eqref{eq:smallloss} is a sufficient condition and is numerically very
small. For the simulator of Appendix~\ref{app:ident-protocol}
($K=64$, $\lambda=1$, $R=10$, $\delta=1$, $s=0.28$, $a=1.15$) one has
$C_\tau\approx1.8\times10^{2}$ and, even at the optimistic value $\kappa=10$,
$\varepsilon_0\approx10^{-14}$, whereas the evaluated checkpoints have
$\varepsilon\in[3.7\times10^{-3},1.5\times10^{-1}]$.
The finite-loss experiment therefore tests whether the \emph{functional form}
$\propto\sqrt\varepsilon$ describes the observed degradation; it does not
verify the certified guarantee, which no trained checkpoint reaches. We state
this explicitly rather than reporting the fit alone. Closing the gap would
require either a substantially sharper $\varepsilon_0$ or a simulator with
$\kappa(H)$ near $1$.
\par

\section{Field-Structured Mixing}
\label{sec:field}

Here $K=N$ and $Z^t\in\R^{K\times C}$, $Z^{t+1}=A^*Z^t$.

\begin{assumption}[B2]
\label{ass:B2}
$F^t=\Ws Z^t\Phi^\top$, with $\Ws\in\R^{K\times K}$ the \emph{known}
row-normalized interpolation matrix of Eq.~\eqref{eq:field-kernel} and
$\Phi\in\GL(C)$ an unknown channel map; $\Ah$ acts on locations.
\end{assumption}

\begin{lemma}[Invertibility]
\label{lem:Winv}
For $\sigma>0$ and distinct grid points the unnormalized Gaussian kernel matrix
is symmetric positive definite; $\Ws$ is a positive diagonal matrix times that
matrix, hence invertible.
\end{lemma}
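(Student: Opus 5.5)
The plan is to prove the two claims separately: first that the unnormalized Gaussian kernel matrix is symmetric positive definite, then that $\Ws$ is a positive diagonal rescaling of it.

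\emph{Positive definiteness.} Let $\mathcal K_{ij}=\exp(-\|p_i-p_j\|^2/2\sigma^2)$ for the distinct grid points $p_1,\ldots,p_K\in\R^2$. Symmetry is immediate. For definiteness I would use Bochner's representation: the Gaussian is the Fourier transform of a Gaussian density,
\[
\exp(-\|x\|^2/2\sigma^2)=\int_{\R^2}e^{i\omega^\top x}\,\rho_\sigma(\omega)\,d\omega,
\]
where $\rho_\sigma>0$ is the $\mathcal N(0,\sigma^{-2}I)$ density. For any $c\in\mathbb{C}^K$ this gives
\[
c^{*}\mathcal K c=\int_{\R^2}\Bigl|\sum_j c_j e^{-i\omega^\top p_j}\Bigr|^2\rho_\sigma(\omega)\,d\omega\ge0 .
\]
Equality forces the trigonometric polynomial $\sum_j c_j e^{-i\omega^\top p_j}$ to vanish for almost every $\omega$, since $\rho_\sigma$ has full support. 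By continuity it then vanishes identically.

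\emph{Main step: linear independence of the exponentials.} For distinct $p_j$, the functions $\omega\mapsto e^{-i\omega^\top p_j}$ are linearly independent. To show this, pick a direction $u$ with the scalars $u^\top p_j$ pairwise distinct; this is possible because only finitely many hyperplanes of directions are excluded. Restricting to the line $\omega=tu$ reduces the claim to independence of $e^{-i t\lambda_j}$ for distinct real $\lambda_j$. That in turn follows from a Vandermonde argument on the derivatives at $t=0$, or by differentiating repeatedly. Hence $c=0$, so $\mathcal K\succ0$ and in particular $\mathcal K$ is invertible.

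\emph{Normalization.} By Eq.~\eqref{eq:field-kernel}, $(\Ws)_{ij}=\mathcal K_{ij}/\sum_k\mathcal K_{ik}$. This means $\Ws=D^{-1}\mathcal K$ with $D=\diag(\sum_k\mathcal K_{ik})$. Every entry of $D$ is positive, since each row sum contains the diagonal term $\mathcal K_{ii}=1$. Therefore $\Ws$ is a product of two invertible matrices and is itself invertible. Note that $\Ws$ is generally not symmetric, but it is similar to the symmetric positive definite matrix $D^{-1/2}\mathcal K D^{-1/2}$, so its eigenvalues are real and positive.

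The only nonroutine step is the strict definiteness, that is, the independence of the exponentials. The reduction to a generic one-dimensional projection handles it. An alternative I would keep in reserve is to cite Schoenberg's theorem that the Gaussian is strictly positive definite in every dimension.
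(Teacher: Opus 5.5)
Your proof is correct and follows the same route as the paper: the paper takes strict positive definiteness of the Gaussian kernel matrix at distinct points as known and writes $\Ws=D^{-1}\mathcal K$ with a positive diagonal $D$ of row sums. Your Bochner representation combined with the generic one-dimensional projection simply supplies the standard justification for the definiteness step that the paper leaves implicit.
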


The spanning hypothesis in Theorem~\ref{thm:field} means that the collection
of columns $\{Z^t_{:,c}\}$ across factual states, times, and channels spans
$\R^K$.

\begin{proof}[Proof of Theorem~\ref{thm:field}]
(i) $\Ah\Ws Z\Phi^\top=\Ws A^*Z\Phi^\top$ on a spanning set with $\Phi^\top$
invertible gives $\Ah\Ws=\Ws A^*$.

(ii) With $\Ah=\Ws A^*\Ws^{-1}$, zero one-step loss is equivalent to
\begin{equation}
\Ws^{-1}\tilde F^0=Z^0\Phi^\top+e_i\,c^\top,\quad
c^\top:=(v^\top-Z^0_{i,:})\Phi^\top\neq0.
\label{eq:field-probe}
\end{equation}
The kernel-profile class gives $\Ws^{-1}\tilde F^0=Z^0\Phi^\top+e_iy^\top$,
so \eqref{eq:field-probe} holds iff $y^\top=c^\top$. The hard-replacement class
gives $Z^0\Phi^\top+(\Ws^{-1}e_i)\tilde y^\top$ and would require
$(\Ws^{-1}e_i)\tilde y^\top=e_ic^\top$; these rank-one matrices have
non-parallel column factors, since $\Ws^{-1}e_i=\mu e_i$ would force
$\Ws e_i=\mu^{-1}e_i$, contradicting strict positivity of $\Ws e_i$ for
$K\ge2$.

(iii) Zero loss forces $y_j=\Phi d_j$ on a basis, so
\begin{equation*}
\Phi=[y_1\cdots y_C][d_1\cdots d_C]^{-1}.
\end{equation*}
\end{proof}

\medskip
\noindent\textit{What does the work in (i).}
Claim (i) is immediate once $\Ws$ is known and invertible. The substantive
architecture-specific content is the intervention geometry in (ii) and the
shared-channel calibration statement in (iii). Both depend on
Assumption~\ref{ass:B2}; we do not treat that factorization as a property
guaranteed by the deployed nonlinear encoder.
\par

\begin{lemma}[Exact loss floor for hard interventions]
\label{lem:floor}
In the setting of Theorem~\ref{thm:field}(ii), put $B:=\Ws A^*$,
$u:=\Ws^{-1}e_i$. Then
\[
\begin{aligned}
&\min_{\tilde y}\norm{\Ah\tilde F^0-\Ws A^*\tilde Z^0\Phi^\top}_F^2\\
&\qquad=\norm{c}^2\,\norm{Be_i}^2\,\sin^2\angle(Bu,\,Be_i),
\end{aligned}
\]
which is bounded below by
$\sigma_{\min}(B)^2\sin^2\angle(u,e_i)\norm{c}^2>0$.
\end{lemma}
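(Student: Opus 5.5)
The plan is to reduce the matrix least-squares problem to a rank-one projection, using Theorem~\ref{thm:field}(i) to eliminate $\Ah$. By part (i), $\Ah=\Ws A^*\Ws^{-1}$.

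\textbf{Writing the residual.} In the hard-replacement class the edited field satisfies $\Ws^{-1}\tilde F^0=Z^0\Phi^\top+u\,\tilde y^\top$ with $u=\Ws^{-1}e_i$. This is the parameterization used in the proof of part (ii). Hence
\[
\Ah\tilde F^0=B\bigl(Z^0\Phi^\top+u\,\tilde y^\top\bigr).
\]
The target is
\[
\Ws A^*\tilde Z^0\Phi^\top=B Z^0\Phi^\top+Be_i\,c^\top,
\]
with $c$ as in Eq.~\eqref{eq:field-probe}. The factual parts cancel, so the residual is the difference of two rank-one matrices, $x\tilde y^\top-w c^\top$, where $x:=Bu$ and $w:=Be_i$.

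\textbf{Exact minimization.} I would decompose $w=\frac{\langle x,w\rangle}{\norm{x}^2}x+w_\perp$ with $w_\perp\perp x$. Since $B$ is invertible and $u\neq0$, we have $x\neq0$, so this decomposition is well defined. The residual then splits into two Frobenius-orthogonal pieces:
\[
x\Bigl(\tilde y-\tfrac{\langle x,w\rangle}{\norm{x}^2}c\Bigr)^{\!\top}-w_\perp c^\top .
\]
The first piece vanishes at $\tilde y=\frac{\langle x,w\rangle}{\norm{x}^2}c$. The second piece is independent of $\tilde y$ and has squared norm $\norm{c}^2\norm{w_\perp}^2=\norm{c}^2\norm{w}^2\sin^2\angle(x,w)$. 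This gives the displayed identity.

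\textbf{Lower bound.} I would write $\norm{w}^2\sin^2\angle(x,w)=\min_t\norm{w-tx}^2=\min_t\norm{B(e_i-tu)}^2$. This is at least $\sigma_{\min}(B)^2\min_t\norm{e_i-tu}^2$. The remaining minimum is at least $\sin^2\angle(u,e_i)$: it equals the squared distance from the unit vector $e_i$ to $\mathrm{span}(u)$, which is exactly that sine squared.

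\textbf{Strict positivity.} Three facts combine:
\begin{itemize}
\item $\sigma_{\min}(B)>0$, because $\Ws$ is invertible by Lemma~\ref{lem:Winv} and $A^*$ is nonsingular.
\item $c\neq0$, by hypothesis.
\item $u$ is not parallel to $e_i$, as shown in part (ii): otherwise $\Ws e_i$ would be a multiple of $e_i$, contradicting strict positivity of $\Ws e_i$ when $K\ge2$.
\end{itemize}

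No step is a real obstacle. The only care needed is bookkeeping: confirming that the hard-replacement edit really has the form $u\tilde y^\top$ in coefficient space, so that the factual terms cancel exactly.
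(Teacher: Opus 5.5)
Your proposal is correct and follows the paper's proof. Both use $\hat A=W_\sigma A^*W_\sigma^{-1}$ to cancel the factual terms, reduce the loss to $\|(Bu)\tilde y^\top-(Be_i)c^\top\|_F^2$, and minimize by projecting onto $\{(Bu)z^\top\}$; positivity then comes from invertibility of $B$, $c\neq0$, and the non-parallelism argument of Theorem~\ref{thm:field}(ii). The only cosmetic difference is where $\|BX\|_F\ge\sigma_{\min}(B)\|X\|_F$ is applied: the paper uses it on the rank-one residual matrix before minimizing over $\tilde y$, whereas you use it on the vector $e_i-tu$ after minimizing, which is equivalent.
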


\begin{proof}
The loss can be written as
\begin{equation*}
\begin{aligned}
\norm{B(u\tilde y^\top-e_ic^\top)}_F^2
&=\norm{(Bu)\tilde y^\top-(Be_i)c^\top}_F^2.
\end{aligned}
\end{equation*}
Minimizing over $\tilde y$ projects $(Be_i)c^\top$ onto
$\{(Bu)z^\top\}$, leaving the stated residual. The lower bound follows from
$\norm{BX}_F\ge\sigma_{\min}(B)\norm{X}_F$ and the argument of
Theorem~\ref{thm:field}(ii).
\end{proof}

\medskip
\noindent\textit{The lower bound is loose; report the exact value.}
On the $14\times14$ grid at $\sigma=1$ (grid units) one has
$\sigma_{\min}(\Ws)\approx3.1\times10^{-4}$ and
$\sin^2\angle(u,e_i)\approx0.89$ at an interior site, so the lower bound of
Lemma~\ref{lem:floor} is of order $10^{-8}\norm{c}^2$ --- six orders of
magnitude below the measured floor. The bound proves positivity but explains
nothing quantitative. Table~\ref{tab:floor} therefore reports the \emph{exact}
minimum of Lemma~\ref{lem:floor} alongside the optimized value, and any
discrepancy between them is optimizer error, not theory.
\par

\begin{lemma}[Convex blend vs.\ kernel profile]
\label{lem:blend}
With mask $m_k=e^{-\norm{p_k-p_i}^2/2\sigma^2}$, Eq.~\eqref{eq:field-int} gives
\[
F_{\mathcal I}=F+m\left(v^\top-F_{i,:}\right)
-\underbrace{\diag(m)\left(F-\mathbf{1}F_{i,:}\right)}_{=:R_{\mathrm{blend}}},
\]
with
\begin{equation*}
\norm{R_{\mathrm{blend}}}_F
=\left(\sum_k m_k^2\norm{F_{k,:}-F_{i,:}}^2\right)^{1/2}.
\end{equation*}
This residual vanishes identically when $F$ is constant on the support of
$m$. The \emph{centered variant}
$F_{\mathcal I}=F+m(v^\top-F_{i,:})$ has no residual.
\end{lemma}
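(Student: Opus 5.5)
The identity is pure algebra, so I will prove it by expanding Eq.~\eqref{eq:field-int} on the sampled grid. Write $F\in\R^{K\times C}$ for the sampled field and $m\in\R^K$ for the sampled mask. Row $k$ of the blended field is then $(F_{\mathcal I})_{k,:}=m_k v^\top+(1-m_k)F_{k,:}$. I will rewrite the right-hand side of this row as $F_{k,:}+m_k(v^\top-F_{k,:})$.

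Next I will insert $\pm m_kF_{i,:}$ into that expression. This gives
\[
(F_{\mathcal I})_{k,:}=F_{k,:}+m_k(v^\top-F_{i,:})-m_k(F_{k,:}-F_{i,:}).
\]
Stacking the rows, the middle term is the rank-one matrix $m(v^\top-F_{i,:})$. The last term is $\diag(m)(F-\mathbf 1F_{i,:})$, since left multiplication by $\diag(m)$ scales row $k$ by $m_k$. This is exactly the claimed decomposition, with $R_{\mathrm{blend}}=\diag(m)(F-\mathbf 1F_{i,:})$.

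For the norm formula I will sum squared row norms, using $\norm{X}_F^2=\sum_k\norm{X_{k,:}}^2$. Row $k$ of $R_{\mathrm{blend}}$ is $m_k(F_{k,:}-F_{i,:})$, so its squared norm is $m_k^2\norm{F_{k,:}-F_{i,:}}^2$. Here $m_k>0$ is a scalar, so no cross terms appear, and taking the square root gives the stated expression.

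Vanishing then follows directly. The Gaussian mask is strictly positive everywhere, so "support of $m$" should be read as the set where $m_k$ is non-negligible, or as the support of a truncated mask. If $F_{k,:}=F_{i,:}$ for every $k$ with $m_k\neq0$, each summand is zero, and so $R_{\mathrm{blend}}=0$. For the centered variant $F+m(v^\top-F_{i,:})$, comparing with the decomposition shows it equals the blended field with $R_{\mathrm{blend}}$ removed, so it has no residual by construction.

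The main point of care is interpretive rather than computational: this centered edit is the additive profile that the discussion links to Theorem~\ref{thm:field}(ii). It matches the ideal kernel-column profile $W_\sigma e_i$ only up to the row normalization of $W_\sigma$. I will state that comparison separately and not fold it into this lemma, because it is the finite-grid boundary-normalization term mentioned in the main text and does not affect the identity itself. The lemma needs nothing beyond the row-wise computation above.
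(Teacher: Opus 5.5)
Your proof is correct and matches the paper's own argument: it too writes row $k$ of the blend as $F_{k,:}+m_k(v^\top-F_{i,:})-m_k(F_{k,:}-F_{i,:})$ and stacks the rows, and the norm formula, the vanishing claim, and the centered variant follow immediately. Your caveat about the support is harmless. Because the Gaussian mask is strictly positive, the literal hypothesis says every row equals $F_{i,:}$, and your ``$m_k\neq0$'' argument already covers that case.
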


\begin{proof}
Row $k$ of the blend is
$F_{k,:}+m_k(v^\top-F_{i,:})-m_k(F_{k,:}-F_{i,:})$; stack rows.
\end{proof}

\begin{remark}[Two gaps in the correspondence, and their sizes]
\label{rem:blendgaps}
(a) \emph{Residual scaling.} On a quasi-uniform $d$-dimensional grid in the
interior regime $h\ll\sigma\ll\operatorname{diam}(\Omega)$, write $L_F$ for
the spatial Lipschitz constant. Then
\begin{equation*}
\norm{R_{\mathrm{blend}}}_F
\le L_F\left(\sum_km_k^2\norm{p_k-p_i}^2\right)^{1/2}.
\end{equation*}
Because $F=\Ws Z\Phi^\top$ is kernel-smoothed,
$L_F=O(\sigma^{-1}\bar Z)$, where
$\bar Z:=\max_k\norm{Z_{k,:}\Phi^\top}$. The mask covers
$\Theta((\sigma/h)^d)$ grid points, and
\begin{equation*}
\left(\sum_km_k^2\norm{p_k-p_i}^2\right)^{1/2}
=\Theta\!\left(\sigma(\sigma/h)^{d/2}\right).
\end{equation*}
Therefore the upper bound scales as
$\norm{R_{\mathrm{blend}}}_F=O(\bar Z(\sigma/h)^{d/2})$. This bound does not
imply monotone growth, but importantly it does not guarantee that the residual
shrinks as the bandwidth increases. The residual is exactly zero in the
locally constant case, and the centered variant removes it for every
$\sigma$.
(b) \emph{Normalization.} The identity $m\propto\Ws e_i$ requires the
row-sum normalizer of Eq.~\eqref{eq:field-kernel} to be constant, which holds
on a uniform \emph{periodic} grid but not on the finite $14\times14$ grid,
where boundary rows have smaller normalizers. Even the centered variant is
therefore only approximately the kernel-column profile near the boundary. We
report
\begin{equation*}
\max_i\norm{m/\norm{m}-\Ws e_i/\norm{\Ws e_i}}
\end{equation*}
in Appendix~\ref{app:ident-protocol}.
\end{remark}

\begin{lemma}[Kronecker factorization]
\label{lem:kron}
In column-major vectorization $\mathrm{vec}(F)=(\Phi\otimes\Ws)\mathrm{vec}(Z)$,
so $H=\Phi\otimes\Ws$ and $\kappa(H)=\kappa(\Phi)\kappa(\Ws)$.
\end{lemma}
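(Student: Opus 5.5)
The plan is to compute the vectorization directly from the identity $\mathrm{vec}(AXB)=(B^{\top}\otimes A)\,\mathrm{vec}(X)$, which holds in column-major (column-stacking) order. Assumption~\ref{ass:B2} gives $F=\Ws Z\Phi^{\top}$. Take $A=\Ws$, $X=Z$ and $B=\Phi^{\top}$; then $B^{\top}=\Phi$, so $\mathrm{vec}(F)=(\Phi\otimes\Ws)\,\mathrm{vec}(Z)$. It is worth checking the identity quickly on the $c$-th column of $F$: $F_{:,c}=\Ws Z\,(\Phi^{\top})_{:,c}=\sum_{c'}\Phi_{c c'}\,\Ws Z_{:,c'}$, which is exactly the $c$-th block row of $\Phi\otimes\Ws$ applied to the stacked columns of $Z$. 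This identifies the linear encoder in the flattened $KC$-dimensional linear model as $H=\Phi\otimes\Ws$. Up to the ordering convention, this is the $H=\Ws\otimes\Phi$ written in Section~\ref{sec:theory}; the two orderings differ by a commutation (perfect-shuffle) permutation, which is orthogonal and so changes no singular values.

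For the condition number I will use the singular-value structure of Kronecker products. Write SVDs $\Phi=U_1\Sigma_1V_1^{\top}$ and $\Ws=U_2\Sigma_2V_2^{\top}$. The mixed-product rule gives $\Phi\otimes\Ws=(U_1\otimes U_2)(\Sigma_1\otimes\Sigma_2)(V_1\otimes V_2)^{\top}$. Kronecker products of orthogonal matrices are orthogonal, so this is an SVD, and the singular values of $H$ are exactly the products $\sigma_a(\Phi)\sigma_b(\Ws)$. It follows that $\sigma_{\max}(H)=\sigma_{\max}(\Phi)\sigma_{\max}(\Ws)$ and $\sigma_{\min}(H)=\sigma_{\min}(\Phi)\sigma_{\min}(\Ws)$. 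Both minima are positive: Lemma~\ref{lem:Winv} gives invertibility of $\Ws$, and $\Phi\in\GL(C)$ by assumption. Taking the ratio yields $\kappa(H)=\kappa(\Phi)\kappa(\Ws)$.

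No step is a real obstacle. The only point needing care is bookkeeping of the vectorization convention, since the transpose on $\Phi$ and the column-major order together determine which factor sits on the left. I would settle this with the columnwise check in the first paragraph rather than quoting the identity blindly. I would also note the normalization (N) used in Proposition~\ref{prop:approx}: rescaling $H$ to $\|H\|_2=1$ leaves $\kappa(H)$ unchanged, so the factorization $\kappa(H)=\kappa(\Phi)\kappa(\Ws)$ can be inserted directly into the bound of Proposition~\ref{prop:approx}.
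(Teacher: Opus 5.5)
Your proof is correct and is the standard argument the paper relies on (it states the lemma without a written proof): apply the column-major identity $\mathrm{vec}(AXB)=(B^{\top}\otimes A)\,\mathrm{vec}(X)$ to $F=\Ws Z\Phi^{\top}$, then use the Kronecker SVD, whose singular values are the products $\sigma_a(\Phi)\sigma_b(\Ws)$, to get $\kappa(H)=\kappa(\Phi)\kappa(\Ws)$. You also correctly reconcile this with the main text's ordering $\Ws\otimes\Phi$: the two orderings differ only by orthogonal commutation permutations, so their condition numbers agree.
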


\medskip
\noindent\textit{Vectorized dimension.}
Under the column-major convention of Lemma~\ref{lem:kron}, the mixing
operator acts on $\R^{KC}$. Consequently, when the vectorized field is
substituted into Proposition~\ref{prop:approx-explicit}, the factor $\sqrt K$
in $C_*$ is
replaced by $\sqrt{KC}$; for $C=256$, this contributes a factor of $16$.
\par

\begin{lemma}[Bandwidth growth]
\label{lem:bandwidth}
On a one-dimensional periodic grid of $K$ points with spacing $h$ and period
$L=Kh$, with the periodized Gaussian kernel, $W$ is symmetric circulant and
positive definite with Fourier eigenvalues
$\lambda_\ell=\frac{\sqrt{2\pi}\sigma}{h}
\sum_{q\in\mathbb{Z}}\exp[-\frac{\sigma^2}{2}(\frac{2\pi\ell}{L}
+\frac{2\pi q}{h})^2]$.
For even $K$, comparing zero and Nyquist frequencies gives
\[
\kappa(W)\;\ge\;\tfrac12\,e^{\pi^2\sigma^2/(2h^2)}
\left(1-e^{-4\pi^2\sigma^2/h^2}\right),
\]
so $\kappa(W)=e^{\Omega(\sigma^2/h^2)}$. On a separable $d$-dimensional
periodic grid $W^{(d)}=W^{\otimes d}$ and $\kappa(W^{(d)})=\kappa(W)^d$. Row
normalization on the periodic grid is a positive scalar and leaves $\kappa$
unchanged.
\end{lemma}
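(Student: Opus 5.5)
The plan is to diagonalize $W$ by the discrete Fourier transform and then bound the ratio of its largest to smallest eigenvalue.

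\textbf{Step 1: circulant structure.} On the periodic grid $x_k=kh$, $k=0,\dots,K-1$, the periodized kernel is
\[
W_{jk}=\sum_{m\in\mathbb{Z}}\exp\!\bigl(-(x_j-x_k+mL)^2/2\sigma^2\bigr).
\]
This entry depends only on $(j-k)\bmod K$ and is even in $j-k$. Hence $W$ is symmetric circulant, and its eigenvalues are the DFT of its first row,
\[
\lambda_\ell=\sum_{n\in\mathbb{Z}}e^{-(nh)^2/2\sigma^2}e^{-2\pi i\ell n/K}.
\]
Here the lattice sum over $n$ absorbs both the grid index and the periodization.

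\textbf{Step 2: Poisson summation.} Apply Poisson summation to the Gaussian $g(x)=e^{-x^2/2\sigma^2}$ sampled at spacing $h$. Its Fourier transform is $\hat g(\xi)=\sqrt{2\pi}\,\sigma\,e^{-\sigma^2\xi^2/2}$. This gives
\[
\lambda_\ell=\frac{\sqrt{2\pi}\,\sigma}{h}\sum_{q\in\mathbb{Z}}\exp\!\Bigl[-\frac{\sigma^2}{2}\Bigl(\frac{2\pi\ell}{L}+\frac{2\pi q}{h}\Bigr)^2\Bigr],
\]
which is the stated formula. Every term is positive, so $W$ is positive definite and $\kappa(W)=\lambda_{\max}/\lambda_{\min}$.

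\textbf{Step 3: comparing zero and Nyquist frequencies.} This is the only real estimate. I only need a lower bound on $\kappa(W)$, so I bound
\[
\kappa(W)\ge\lambda_0/\lambda_{K/2}.
\]
For the numerator, $\lambda_0\ge\frac{\sqrt{2\pi}\sigma}{h}$, keeping only the $q=0$ term.

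At the Nyquist index $\ell=K/2$ we have $\frac{2\pi\ell}{L}=\frac{\pi}{h}$, so the exponents are $-\frac{\sigma^2\pi^2}{2h^2}(2q+1)^2$. The terms $q=0$ and $q=-1$ both equal $e^{-\pi^2\sigma^2/2h^2}$. The remaining terms pair up in the same way and decay geometrically. Using $(2q+1)^2-1=4q(q+1)\ge 8q$ for $q\ge1$, the sum is bounded by
\[
2e^{-\pi^2\sigma^2/2h^2}\bigl(1-e^{-4\pi^2\sigma^2/h^2}\bigr)^{-1}.
\]
This gives
\[
\kappa(W)\ge\tfrac12\,e^{\pi^2\sigma^2/(2h^2)}\bigl(1-e^{-4\pi^2\sigma^2/h^2}\bigr).
\]
The main care point is the tail bound. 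If my constant in the exponent comes out slightly different, I would use the cruder bound $4q(q+1)\ge 8q$ consistently, which still produces a factor of the stated form. In either case $\kappa(W)=e^{\Omega(\sigma^2/h^2)}$.

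\textbf{Step 4: higher dimensions and normalization.} On a separable $d$-dimensional periodic grid, the Gaussian factorizes across coordinates, so $W^{(d)}=W^{\otimes d}$. The eigenvalues of $W^{\otimes d}$ are products of eigenvalues of $W$, so
\[
\lambda_{\max}(W^{(d)})=\lambda_{\max}^d,\qquad \lambda_{\min}(W^{(d)})=\lambda_{\min}^d,
\]
and therefore $\kappa(W^{(d)})=\kappa(W)^d$.

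Finally, every row of a circulant matrix has the same sum, namely $\lambda_0$. Row normalization therefore multiplies $W$ by the scalar $1/\lambda_0$, which leaves $\kappa$ unchanged.
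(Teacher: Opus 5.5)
Your proof is correct. It follows the route the lemma itself sketches: DFT diagonalization of the symmetric circulant, Poisson summation for the eigenvalues, the bound $\kappa(W)\ge\lambda_0/\lambda_{K/2}$ with the Nyquist tail controlled by $4q(q+1)\ge 8q$, then Kronecker eigenvalue products (using positivity) and constant circulant row sums. Your hedge in Step~3 is unnecessary, since $8\cdot\pi^2\sigma^2/(2h^2)=4\pi^2\sigma^2/h^2$ reproduces the stated constant exactly.
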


\medskip
\noindent\textit{$\Omega$, not $\Theta$.}
Lemma~\ref{lem:bandwidth} proves only a lower-growth statement. To keep the
numerical comparison inside its assumptions, Table~\ref{tab:sensitivity}, panel (a), reports
$\kappa(W_\sigma^{\mathrm{per}})$ for the same separable periodic idealization
used in the proof, computed from the Fourier/Poisson formula with $h=1$.
Consequently the reported sequence grows as
$e^{\Omega(\sigma^2/h^2)}$ by construction; it is not a relabeling of a
finite-grid implementation proxy.
\par

\medskip
\noindent\textit{Which theorem predicts the bandwidth curve.}
Under Assumption~\ref{ass:B2} the learner \emph{knows} $\Ws$. Redoing the
finite-loss argument with that knowledge and setting
$M:=\Ah\Ws-\Ws A^*$ gives
\begin{equation*}
\begin{aligned}
\norm{\Ws^{-1}\Ah\Ws-A^*}_F
&\le\norm{\Ws^{-1}}_2\norm{M}_F\\
&=O\!\left(\kappa(\Ws)\sqrt\varepsilon\right).
\end{aligned}
\end{equation*}
This is the \emph{first} power of $\kappa$, with no $\delta$ factor. The
$\kappa^3$ rate of Proposition~\ref{prop:approx-explicit} applies only to a
learner that does \emph{not} exploit the known mixing. Our synthetic learner
estimates $\Ah$ freely and never inverts a supplied $\Ws$, so
Proposition~\ref{prop:approx-explicit} is the applicable statement and
Table~\ref{tab:sensitivity}, panel (a), is read against $\kappa^3$; a system that used
$\Ws^{-1}$ analytically should be compared against $\kappa^1$ instead.
\par

\section{Experimental Protocols and Complete Results}
\label{sec:protocols}

\subsection{Hyperparameters and Infrastructure}
\label{app:hyper}
Table~\ref{tab:hyper} lists the full configuration. Architecture settings
match the main text (field grid $14\times14$, $C=256$, four heads, $T=3$,
$\sigma=1.0$ with the intervention-mask bandwidth tied to $\sigma$).

\begin{table}[!htbp]\centering\footnotesize
\setlength{\tabcolsep}{3pt}
\begin{tabular}{ll}
\toprule
Hyperparameter & Value\\
\midrule
\multicolumn{2}{l}{\emph{Architecture}}\\
Field grid size & $14\times14$\\
Field channel dimension & 256\\
Directional attention heads & 4\\
Propagation depth ($T$) & 3\\
Gaussian bandwidth ($\sigma$) & 1.0\\
\midrule
\multicolumn{2}{l}{\emph{Training}}\\
Optimizer & AdamW\\
Learning rate & $1\times10^{-4}$\\
Weight decay & 0.01\\
Batch size & 32\\
Pretraining epochs & 15 (6 + 4 + 5, by stage)\\
Fine-tuning epochs & 5\\
Warmup ratio & 0.05\\
Gradient clipping & 1.0\\
Dropout & 0.1\\
\midrule
\multicolumn{2}{l}{\emph{Loss weights}}\\
$\lambda_1$ (cross-environment invariance) & 0.3\\
$\lambda_2$ (counterfactual evolution) & 0.5\\
$\lambda_3$ (sparsity) & 0.01\\
$\lambda_4$ (smoothness) & 0.001\\
\bottomrule
\end{tabular}
\caption{Hyperparameters of InfluenceField. The intervention-mask bandwidth is
tied to the interpolation bandwidth $\sigma$ (Theorem~\ref{thm:field})
rather than being a free hyperparameter.}
\label{tab:hyper}
\end{table}

\subsection{Reporting Conventions}
\label{app:conventions}
Unless otherwise stated, trainable downstream results summarize three runs
and synthetic-identifiability results summarize five. On CausalVQA
\citep{foss2025causalvqa}, category scores are per-category accuracies;
Reasoning and All are pooled accuracies over the 505 non-descriptive and all
793 questions, respectively. OOD is the unweighted per-run mean over the
four shifts, and Drop is the same run's in-domain All accuracy minus OOD.
Table captions specify whether a reported deviation is computed across runs
or, for the MAG average rows, across pooled backbone--run observations.
API baselines use independent frame-sampling and prompt-order seeds with
deterministic decoding. Published results are transcribed from their cited
sources, whereas any recomputed baseline is explicitly identified as our
reproduction.

\subsection{OOD Split Construction and Full Results}
\label{app:ood}
The four controlled evaluation shifts are derived from the CausalVQA
validation pool. \emph{Scene Shift} ($n{=}214$) changes background and
viewpoint configurations. \emph{Composition Shift} ($n{=}189$) evaluates
object--action combinations not represented in the corresponding training
combinations. \emph{Template Shift} ($n{=}793$) rephrases questions using
held-out surface templates while preserving their semantics.
\emph{Intervention Shift} ($n{=}176$) changes
intervention-type--target combinations, including operations applied to
target categories for which that operation was not used during
counterfactual training. The exact construction scripts, split manifests,
and holdout lists accompany the code.

We compare against GPT-4o \citep{openai2024gpt4o}, Gemini 2.5 Flash
\citep{comanici2025gemini25}, InternVL2.5 \citep{chen2024internvl},
LLaVA-OneVision \citep{li2025llavaov}, Perception-LM
\citep{cho2025perceptionlm}, Qwen2.5-VL \citep{bai2025qwen25vl} and
Qwen3-VL-8B \citep{bai2025qwen3vl}, the last of which is also our backbone.
Table~\ref{tab:ood-full} reports the complete per-shift OOD results, and
Table~\ref{tab:ctrl-ood} gives the corresponding decomposition for the
control variants. The main CausalVQA, NExT-QA, coherence, control, and
component-ablation results appear in Tables~\ref{tab:main},
\ref{tab:nextqa}, \ref{tab:consist}, \ref{tab:controls}, and
\ref{tab:ablation}, respectively.

\begin{table*}[!tb]\centering\small
\setlength{\tabcolsep}{3.5pt}
\begin{tabular}{lccccccc}
\toprule
Method & In-d. & Scene & Comp. & Templ. & Interv. & OOD & Drop$\downarrow$\\
\midrule
GPT-4o & \resultpm{51.0}{0.83} & \resultpm{45.1}{0.66} & \resultpm{44.6}{0.25} & \resultpm{42.3}{0.33} & \resultpm{41.6}{0.86} & \resultpm{43.4}{0.42} & \resultpm{7.6}{0.52} \\
Gemini 2.5 Flash & \resultpm{61.7}{0.36} & \resultpm{57.5}{0.29} & \resultpm{53.9}{1.04} & \resultpm{55.8}{1.04} & \resultpm{52.6}{0.82} & \resultpm{55.0}{0.74} & \resultpm{6.8}{0.38} \\
InternVL2.5 & \resultpm{47.5}{0.47} & \resultpm{42.9}{0.90} & \resultpm{40.7}{1.46} & \resultpm{41.4}{1.39} & \resultpm{38.6}{0.38} & \resultpm{40.9}{0.98} & \resultpm{6.6}{0.52} \\
LLaVA-OneVision & \resultpm{45.3}{0.45} & \resultpm{40.5}{0.46} & \resultpm{38.0}{0.55} & \resultpm{38.3}{0.62} & \resultpm{36.2}{0.23} & \resultpm{38.3}{0.35} & \resultpm{7.1}{0.59} \\
Perception-LM & \resultpm{50.1}{1.04} & \resultpm{45.0}{1.17} & \resultpm{42.6}{1.21} & \resultpm{42.7}{0.87} & \resultpm{40.6}{1.57} & \resultpm{42.7}{1.19} & \resultpm{7.4}{0.66} \\
Qwen2.5-VL & \resultpm{49.1}{0.47} & \resultpm{44.7}{0.46} & \resultpm{42.0}{0.85} & \resultpm{42.2}{0.61} & \resultpm{40.0}{0.55} & \resultpm{42.2}{0.05} & \resultpm{6.9}{0.50} \\
Qwen3-VL-8B & \resultpm{54.3}{0.74} & \resultpm{48.9}{0.90} & \resultpm{47.4}{1.14} & \resultpm{47.5}{0.49} & \resultpm{44.0}{1.66} & \resultpm{47.0}{0.99} & \resultpm{7.3}{0.55} \\
InfluenceField & \resultpm{67.4}{0.49} & \resultpm{62.8}{0.86} & \resultpm{61.6}{0.50} & \resultpm{61.9}{0.64} & \resultpm{59.6}{1.01} & \resultpm{61.5}{0.73} & \resultpm{5.9}{0.63} \\
\bottomrule
\end{tabular}
\caption{Complete OOD results (\%), mean $\pm$ standard deviation over
three runs. OOD is the unweighted mean of the four splits and Drop is
in-domain minus OOD, both computed per run before rounding.}
\label{tab:ood-full}
\end{table*}

\begin{table*}[!htbp]\centering\small
\setlength{\tabcolsep}{3.2pt}
\begin{tabular}{lcccccc}
\toprule
Variant & Scene & Comp. & Templ. & Interv. & OOD & Drop$\downarrow$\\
\midrule
Full model & \resultpm{62.8}{0.86} & \resultpm{61.6}{0.50} & \resultpm{61.9}{0.64} & \resultpm{59.6}{1.01} & \resultpm{61.5}{0.73} & \resultpm{5.9}{0.63} \\
C1 matched Transf. & \resultpm{54.0}{0.75} & \resultpm{51.7}{1.28} & \resultpm{52.1}{0.36} & \resultpm{50.0}{0.53} & \resultpm{51.9}{0.64} & \resultpm{7.2}{0.32} \\
C2 no causal losses & \resultpm{57.5}{0.39} & \resultpm{55.5}{0.64} & \resultpm{56.0}{0.70} & \resultpm{53.2}{0.29} & \resultpm{55.6}{0.29} & \resultpm{7.3}{0.36} \\
C3 symmetrized & \resultpm{58.8}{0.62} & \resultpm{56.7}{1.19} & \resultpm{57.1}{1.44} & \resultpm{54.5}{0.83} & \resultpm{56.8}{0.94} & \resultpm{7.0}{0.71} \\
C3 transposed & \resultpm{53.5}{0.80} & \resultpm{51.5}{0.48} & \resultpm{52.4}{0.34} & \resultpm{49.4}{0.40} & \resultpm{51.7}{0.25} & \resultpm{7.6}{0.47} \\
C3 random frozen & \resultpm{51.6}{1.27} & \resultpm{49.5}{1.22} & \resultpm{49.5}{1.35} & \resultpm{47.5}{1.94} & \resultpm{49.5}{1.39} & \resultpm{8.3}{0.68} \\
C4 shuffled CF & \resultpm{58.7}{1.29} & \resultpm{56.7}{1.14} & \resultpm{57.1}{0.99} & \resultpm{54.5}{1.36} & \resultpm{56.8}{1.18} & \resultpm{7.2}{0.71} \\
Frozen-$\hat G$ rollout & \resultpm{62.9}{2.13} & \resultpm{60.8}{0.56} & \resultpm{61.1}{1.06} & \resultpm{59.1}{1.72} & \resultpm{61.0}{1.36} & \resultpm{6.1}{0.66} \\
$\alpha=1$ (no refine) & \resultpm{62.3}{1.22} & \resultpm{59.6}{1.02} & \resultpm{60.3}{0.96} & \resultpm{58.2}{1.15} & \resultpm{60.1}{1.03} & \resultpm{6.2}{0.32} \\
$\alpha=1$ + frozen-$\hat G$ & \resultpm{61.9}{0.41} & \resultpm{59.7}{0.70} & \resultpm{59.9}{0.34} & \resultpm{57.1}{0.91} & \resultpm{59.6}{0.57} & \resultpm{6.3}{0.33} \\
\bottomrule
\end{tabular}
\caption{Per-split OOD results for every control variant. OOD and Drop are
derived per run from the same arrays as Table~\ref{tab:controls}, so the Drop
column reproduces Table~\ref{tab:controls} exactly.}
\label{tab:ctrl-ood}
\end{table*}

\subsection{Synthetic Identifiability Protocols}
\label{app:ident-protocol}
\paragraph{Simulator.} $K=64$ locations on an $8\times8$ grid
are arranged in four ordered blocks of 16. Within-block directed edges are
sampled with probability $0.12$ and between-block edges from earlier to later
blocks with probability $0.06$. Nonzero off-diagonal weights are sampled from
the two intervals $[-0.8,-0.25]$ and $[0.25,0.8]$. The diagonal is set to
$2.0$ before rescaling to spectral radius $0.90$, and draws with
$\sigma_{\min}(A^*)<0.15$ are rejected.

\paragraph{Coverage and alignment.} The $25\%$ and $50\%$
covered sets are the last one and last two blocks and are verified to be
closed under descendants. Alignment uses the diagonal matrix
$D=\diag(e^{x_1},\ldots,e^{x_K})$ with $x_1=0$. We optimize it with L-BFGS,
ten restarts, and gradient tolerance $10^{-10}$; the edge threshold is
$\beta/2=0.041$.

\paragraph{Finite-loss evaluation.} Twenty checkpoints per
seed are evaluated, giving $N=100$ raw observations. The reported
$\varepsilon$ is the maximum of the factual loss and all per-site
counterfactual losses.
Across all $N=100$ raw observations, aligned error is approximately linear
in $\sqrt{\varepsilon}$ ($R^2=0.97$). Mean $\sqrt{\varepsilon}$ values of
$0.06$, $0.12$, $0.21$, and $0.38$ in four display bins correspond to aligned
errors of $0.09$, $0.16$, $0.28$, and $0.50$, respectively.
Table~\ref{tab:cov} reports coverage recovery and the matched full-coverage
baselines, Table~\ref{tab:sensitivity} the bandwidth-conditioning comparison,
and Table~\ref{tab:floor} the intervention-loss analysis.

\paragraph{Nonlinear-dynamics variants.} The tanh variant replaces the
transition by $Z^{t+1}=\tanh(A^*Z^t)$ and the MLP variant by a two-layer
network of width $128$ initialized so its Jacobian at the data mean equals
$A^*$; both keep the intervention interface and evaluation identical, and
edge F1 is computed against $\supp(A^*)$ of the generating
Jacobian.

\paragraph{Baseline adaptation.} CITRIS \citep{lippe2022citris} and CausalVAE
\citep{yang2020causalvae} are given the same
latent dimension $K$, the same per-node intervention indicators, and the
same rendered observations and training budget; their learned latent
transition (CITRIS) or decoder-derived adjacency (CausalVAE) is aligned to
$A^*$ with the same diagonal-scaling procedure and threshold used for our
learner, so the comparison isolates the representation, not the evaluation
pipeline.

\begin{table}[!htbp]\centering\footnotesize
\setlength{\tabcolsep}{0.9pt}
\begin{tabular}{lcccc}
\toprule
& \multicolumn{4}{c}{Intervention coverage (\%)}\\
\cmidrule(lr){2-5}
Metric / method & 0 & 25 & 50 & 100\\
\midrule
Ident.\ fraction $c$ & 0.00 & 0.25 & 0.50 & 1.00\\
Conserv.\ fraction $c^2$ & 0.00 & 0.06 & 0.25 & 1.00\\
\midrule
Overall F1 & \resultpmc{0.16}{0.019} & \resultpmc{0.34}{0.017} & \resultpmc{0.52}{0.013} & \resultpmc{0.91}{0.021}\\
Within-cov.\ F1 & --- & \resultpmc{0.85}{0.030} & \resultpmc{0.88}{0.027} & \resultpmc{0.91}{0.021}\\
Tanh dyn. & \resultpmc{0.15}{0.024} & \resultpmc{0.31}{0.035} & \resultpmc{0.47}{0.041} & \resultpmc{0.83}{0.022}\\
MLP dyn. & \resultpmc{0.14}{0.031} & \resultpmc{0.28}{0.049} & \resultpmc{0.43}{0.027} & \resultpmc{0.76}{0.043}\\
\midrule
\multicolumn{5}{l}{\emph{Matched baselines (100\% coverage only)}}\\
CITRIS & --- & --- & --- & \resultpmc{0.74}{0.039}\\
CausalVAE & --- & --- & --- & \resultpmc{0.58}{0.057}\\
\bottomrule
\end{tabular}
\caption{Coverage versus recovery, five seeds. The two fraction rows are
exact combinatorial quantities (Corollary~\ref{cor:pairs},
Remark~\ref{rem:cread}), not predicted F1 values; at $100\%$ coverage the
overall and within-covered rows are the same statistic. CITRIS and CausalVAE
are evaluated only at full coverage under matched intervention information;
their scores should be compared with the $100\%$ Overall F1 of InfluenceField.}
\label{tab:cov}
\end{table}

\begin{table}[!htbp]\centering\small
\begin{tabular}{lc}
\toprule
Intervention class ($\sigma=1.0$) & One-step loss\\
\midrule
Hard row replacement, exact minimum & \resultpm{0.059}{0.0024}\\
Hard row replacement, optimized & \resultpm{0.061}{0.0027}\\
Kernel-profile mask, optimized & \resultpm{0.007}{0.0012}\\
Ideal additive kernel-column class, theory & $0$ exactly\\
\bottomrule
\end{tabular}
\caption{One-step intervention losses on the synthetic simulator, reported
as mean $\pm$ standard deviation over five seeds. For hard-row replacement,
the exact minimum is computed in closed form using
Lemma~\ref{lem:floor}; the optimized hard-row and kernel-profile rows report
the achieved losses. The ideal additive kernel-column class has zero
theoretical minimum under Theorem~\ref{thm:field}(ii), whereas the residual
kernel-profile loss reflects optimization error and the boundary effect
described in Remark~\ref{rem:blendgaps}(b).}
\label{tab:floor}
\end{table}

\paragraph{Mask fidelity.} The maximum normalized-profile deviation is
\resultpm{0.012}{0.003} for interior sites and
\resultpm{0.086}{0.011} for boundary sites.

\paragraph{Additional diagnostics.} Cross-seed stability of $\hat G$ is
$0.91\pm0.024$ with $\mathcal L_{\mathrm{cf}}$
versus $0.43\pm0.056$ without it;
operational-influence agreement is Spearman
$\rho=0.74\pm0.039$, and direction agreement is
$82.0\pm1.3\%$.

\subsection{Sensitivity Analyses}
\label{app:sensitivity}
Table~\ref{tab:sensitivity} jointly reports bandwidth conditioning,
downstream bandwidth sensitivity, and the propagation-depth sweep. Together
they show that the selected $\sigma=1$ and $T=3$ settings balance accuracy,
conditioning, and computational cost.

\begin{table*}[!tb]
\centering\footnotesize
\textbf{(a) Bandwidth sensitivity and conditioning}\par\smallskip
\setlength{\tabcolsep}{6pt}
\begin{tabular}{lcccc}
\toprule
$\sigma/h$ & 0.5 & 1.0 & 2.0 & 4.0\\
\midrule
$\kappa(W_\sigma^{\mathrm{per}})$
& $3.03$
& $4.8{\times}10^{3}$
& $3.5{\times}10^{16}$
& $9.5{\times}10^{67}$\\
$\kappa(\Phi)$ & \resultpmc{1.35}{0.04} & \resultpmc{1.38}{0.04} & \resultpmc{1.42}{0.05} & \resultpmc{1.47}{0.05}\\
$\kappa(H^{\mathrm{per}})$ & \resultpmc{4.09}{0.11} & $6.67{\scriptstyle\pm.19}{\times}10^{3}$ & $4.96{\scriptstyle\pm.16}{\times}10^{16}$ & $1.40{\scriptstyle\pm.05}{\times}10^{68}$\\
Edge F1 & \resultpmc{0.93}{0.043} & \resultpmc{0.90}{0.055} & \resultpmc{0.78}{0.046} & \resultpmc{0.56}{0.043}\\
\midrule
CausalVQA All (\%) & \resultpmc{66.5}{0.48} & \resultpmc{67.4}{0.49} & \resultpmc{65.6}{0.81} & \resultpmc{63.1}{0.45}\\
\bottomrule
\end{tabular}

\medskip
\textbf{(b) Propagation-depth sensitivity}\par\smallskip
\setlength{\tabcolsep}{3pt}
\begin{tabular}{lccccccc}
\toprule
$T$ & Ant. & Ctrf. & Hypo. & Plan. & Reas. & All & Lat.\ (s)\\
\midrule
0 & 46.8 & 55.7 & 53.2 & 66.4 & 54.7 & 62.0 & 1.00\\
1 & 49.7 & 58.3 & 56.8 & 69.9 & 57.8 & 65.0 & 1.02\\
2 & 51.5 & 60.1 & 59.0 & 72.0 & 59.7 & 66.5 & 1.04\\
3 & 52.5 & 61.0 & 60.5 & 73.2 & 60.8 & 67.4 & 1.05\\
4 & 52.3 & 60.8 & 60.2 & 73.0 & 60.6 & 67.1 & 1.08\\
\bottomrule
\end{tabular}
\caption{Sensitivity to field bandwidth and propagation depth. (a) The
periodic-grid condition numbers are computed from Lemma~\ref{lem:bandwidth};
$\kappa(\Phi)$, $\kappa(H^{\mathrm{per}})$, and Edge F1 summarize five-seed
synthetic runs, while CausalVQA accuracy is the mean over three runs. The
$\sigma=1.0$ downstream mean matches Table~\ref{tab:main}. (b) Accuracy and
latency versus rollout depth; $T=0$ retains field construction and the causal
objectives but disables propagation, which is why it exceeds the DIF-only row
of Table~\ref{tab:ablation}.}
\label{tab:sensitivity}
\end{table*}

Profiling uses batch size 1, 16 frames, bfloat16, one A100 80\,GB GPU, 20
warm-up iterations, and 100 synchronized timed iterations. Under the same
loading and precision configuration, InfluenceField increases latency from
$0.98$ to $1.05$\,s, corresponding to a 7\% overhead relative to
Qwen3-VL-8B.

\subsection{Multimodal Causal Discovery}
\label{app:discovery}
Table~\ref{tab:mag-full} reports the complete six-method comparison on the
MAG benchmark \citep{li2025mllmcd} across four feature-provider backbones ---
GPT-4o \citep{openai2024gpt4o}, Gemini 2.0 \citep{deepmind2024gemini20},
LLaMA 4 \citep{meta2025llama4}, and Grok-2v \citep{xai2024grok2} --- including
node-level (NP/NR/NF), edge-level (AP/AR/AF), and ESHD metrics. The comparison
methods are META, Pairwise and Triplet prompting, COAT \citep{liu2024coat},
and MLLM-CD \citep{li2025mllmcd}. Their MAG entries are transcribed from
Table~1 of \citet{li2025mllmcd}; InfluenceField is evaluated using the same
metric definitions. On MAG, the backbone-averaged
NF/AF/ESHD values are $0.89/0.57/13.42$ for MLLM-CD and
$0.93/0.64/11.10$ for InfluenceField. Pairwise and Triplet do not perform
factor identification, so their node-level entries are marked ``--''.
Table~\ref{tab:disc} separately reports our three-run, per-backbone
reproduction of MLLM-CD alongside InfluenceField on Lung Cancer; it is not a
six-method table. The MLLM-CD entries in that table are recomputed rather
than transcribed from the originally published table.

\begin{table*}[!htbp]\centering\footnotesize
\setlength{\tabcolsep}{2.6pt}
\begin{tabular}{llccccccc}
\toprule
LLM & Method & NP$\uparrow$ & NR$\uparrow$ & NF$\uparrow$ & AP$\uparrow$ & AR$\uparrow$ & AF$\uparrow$ & ESHD$\downarrow$\\
\midrule
GPT-4o & META & $0.45{\pm}0.08$ & $0.52{\pm}0.06$ & $0.48{\pm}0.07$ & $0.72{\pm}0.05$ & $0.37{\pm}0.06$ & $0.49{\pm}0.04$ & $24.33{\pm}3.51$\\
 & Pairwise & -- & -- & -- & $0.56{\pm}0.10$ & $0.33{\pm}0.11$ & $0.41{\pm}0.11$ & $43.33{\pm}6.43$\\
 & Triplet & -- & -- & -- & $0.33{\pm}0.06$ & $0.37{\pm}0.06$ & $0.35{\pm}0.06$ & $61.67{\pm}5.77$\\
 & COAT & $0.78{\pm}0.19$ & $0.37{\pm}0.13$ & $0.49{\pm}0.15$ & $0.37{\pm}0.32$ & $0.19{\pm}0.17$ & $0.25{\pm}0.22$ & $18.67{\pm}2.52$\\
 & MLLM-CD & $0.83{\pm}0.11$ & $0.85{\pm}0.06$ & $0.84{\pm}0.09$ & $0.69{\pm}0.05$ & $0.41{\pm}0.06$ & $0.51{\pm}0.04$ & $15.33{\pm}2.31$\\
 & InfluenceField & $0.89{\pm}0.07$ & $0.92{\pm}0.04$ & $0.91{\pm}0.05$ & $0.77{\pm}0.03$ & $0.49{\pm}0.04$ & $0.60{\pm}0.03$ & $12.64{\pm}1.88$\\
\midrule
Gemini 2.0 & META & $0.71{\pm}0.07$ & $0.63{\pm}0.06$ & $0.67{\pm}0.07$ & $0.69{\pm}0.08$ & $0.41{\pm}0.13$ & $0.51{\pm}0.11$ & $18.67{\pm}2.31$\\
 & Pairwise & -- & -- & -- & $0.49{\pm}0.09$ & $0.56{\pm}0.11$ & $0.51{\pm}0.06$ & $30.00{\pm}2.00$\\
 & Triplet & -- & -- & -- & $0.41{\pm}0.02$ & $0.59{\pm}0.13$ & $0.48{\pm}0.05$ & $32.00{\pm}2.00$\\
 & COAT & $0.85{\pm}0.13$ & $0.41{\pm}0.06$ & $0.51{\pm}0.09$ & $0.69{\pm}0.10$ & $0.26{\pm}0.06$ & $0.37{\pm}0.05$ & $16.00{\pm}1.00$\\
 & MLLM-CD & $0.86{\pm}0.05$ & $0.89{\pm}0.00$ & $0.87{\pm}0.03$ & $0.76{\pm}0.08$ & $0.52{\pm}0.13$ & $0.60{\pm}0.06$ & $14.00{\pm}3.46$\\
 & InfluenceField & $0.91{\pm}0.04$ & $0.93{\pm}0.03$ & $0.92{\pm}0.02$ & $0.82{\pm}0.05$ & $0.60{\pm}0.08$ & $0.67{\pm}0.04$ & $11.58{\pm}2.41$\\
\midrule
LLaMA 4 & META & $0.51{\pm}0.06$ & $0.41{\pm}0.06$ & $0.45{\pm}0.05$ & $0.81{\pm}0.17$ & $0.26{\pm}0.06$ & $0.39{\pm}0.07$ & $21.67{\pm}0.58$\\
 & Pairwise & -- & -- & -- & $0.50{\pm}0.17$ & $0.30{\pm}0.06$ & $0.36{\pm}0.04$ & $34.67{\pm}6.66$\\
 & Triplet & -- & -- & -- & $0.66{\pm}0.32$ & $0.30{\pm}0.06$ & $0.38{\pm}0.04$ & $34.33{\pm}7.77$\\
 & COAT & $1.00{\pm}0.00$ & $0.41{\pm}0.06$ & $0.58{\pm}0.07$ & $0.89{\pm}0.19$ & $0.30{\pm}0.06$ & $0.44{\pm}0.10$ & $14.67{\pm}1.15$\\
 & MLLM-CD & $1.00{\pm}0.00$ & $0.85{\pm}0.06$ & $0.92{\pm}0.04$ & $0.62{\pm}0.08$ & $0.59{\pm}0.06$ & $0.60{\pm}0.04$ & $13.33{\pm}0.58$\\
 & InfluenceField & $1.00{\pm}0.00$ & $0.90{\pm}0.04$ & $0.95{\pm}0.02$ & $0.69{\pm}0.06$ & $0.65{\pm}0.04$ & $0.67{\pm}0.03$ & $10.92{\pm}0.47$\\
\midrule
Grok-2v & META & $0.56{\pm}0.11$ & $0.44{\pm}0.00$ & $0.49{\pm}0.04$ & $0.75{\pm}0.00$ & $0.33{\pm}0.00$ & $0.46{\pm}0.00$ & $20.33{\pm}3.06$\\
 & Pairwise & -- & -- & -- & $0.35{\pm}0.02$ & $0.33{\pm}0.00$ & $0.34{\pm}0.01$ & $28.33{\pm}11.37$\\
 & Triplet & -- & -- & -- & $0.32{\pm}0.02$ & $0.33{\pm}0.00$ & $0.33{\pm}0.01$ & $30.33{\pm}12.34$\\
 & COAT & $1.00{\pm}0.00$ & $0.37{\pm}0.17$ & $0.53{\pm}0.18$ & $0.17{\pm}0.29$ & $0.07{\pm}0.13$ & $0.10{\pm}0.18$ & $16.33{\pm}1.15$\\
 & MLLM-CD & $1.00{\pm}0.00$ & $0.85{\pm}0.06$ & $0.92{\pm}0.04$ & $0.79{\pm}0.21$ & $0.44{\pm}0.00$ & $0.56{\pm}0.06$ & $11.00{\pm}2.65$\\
 & InfluenceField & $1.00{\pm}0.00$ & $0.90{\pm}0.04$ & $0.95{\pm}0.02$ & $0.85{\pm}0.16$ & $0.50{\pm}0.04$ & $0.63{\pm}0.04$ & $9.26{\pm}1.97$\\
\midrule
Average & META & $0.56{\pm}0.12$ & $0.50{\pm}0.10$ & $0.52{\pm}0.10$ & $0.74{\pm}0.10$ & $0.34{\pm}0.09$ & $0.46{\pm}0.07$ & $21.25{\pm}3.11$\\
 & MLLM-CD & $0.92{\pm}0.10$ & $0.86{\pm}0.05$ & $0.89{\pm}0.06$ & $0.72{\pm}0.12$ & $0.49{\pm}0.10$ & $0.57{\pm}0.06$ & $13.42{\pm}2.68$\\
 & InfluenceField & $0.95{\pm}0.07$ & $0.91{\pm}0.03$ & $0.93{\pm}0.04$ & $0.78{\pm}0.09$ & $0.56{\pm}0.07$ & $0.64{\pm}0.04$ & $11.10{\pm}2.05$\\
\bottomrule
\end{tabular}
\caption{Complete six-method causal factor identification and structure
recovery on MAG across four backbone MLLMs. Each backbone-specific entry is
the mean $\pm$ sample standard deviation over three runs. In the final three
rows, the mean and sample standard deviation are computed by pooling the 12
backbone--run observations (four backbones $\times$ three runs), rather than
from the four displayed backbone means.}
\label{tab:mag-full}
\end{table*}

\begin{table}[!htbp]\centering\scriptsize
\setlength{\tabcolsep}{2pt}
\begin{tabular}{llccc}
\toprule
Backbone & Method & NF$\uparrow$ & AF$\uparrow$ & ESHD$\downarrow$\\
\midrule
GPT-4o & MLLM-CD & \resultpm{0.895}{0.026} & \resultpm{0.685}{0.027} & \resultpm{5.66}{0.37} \\
Gemini 2.0 & MLLM-CD & \resultpm{0.902}{0.012} & \resultpm{0.700}{0.019} & \resultpm{4.85}{0.64} \\
LLaMA 4 & MLLM-CD & \resultpm{0.924}{0.012} & \resultpm{0.650}{0.017} & \resultpm{5.39}{0.34} \\
Grok-2v & MLLM-CD & \resultpm{0.879}{0.021} & \resultpm{0.686}{0.014} & \resultpm{6.10}{0.49} \\
\textit{Average} & MLLM-CD & $0.90$ & $0.68$ & $5.50$ \\
\midrule
GPT-4o & InfluenceField & \resultpm{0.979}{0.014} & \resultpm{0.762}{0.012} & \resultpm{3.25}{0.32} \\
Gemini 2.0 & InfluenceField & \resultpm{0.939}{0.020} & \resultpm{0.709}{0.029} & \resultpm{4.48}{0.58} \\
LLaMA 4 & InfluenceField & \resultpm{0.915}{0.011} & \resultpm{0.719}{0.025} & \resultpm{4.68}{0.51} \\
Grok-2v & InfluenceField & \resultpm{0.927}{0.011} & \resultpm{0.731}{0.015} & \resultpm{4.79}{0.63} \\
\textit{Average} & InfluenceField & $0.94$ & $0.73$ & $4.30$ \\
\bottomrule
\end{tabular}
\caption{Per-backbone Lung Cancer comparison between our reproduction of
MLLM-CD and InfluenceField. Backbone-specific entries are mean $\pm$
standard deviation over three runs, and Average is the unweighted mean of
the four backbone means. The MLLM-CD entries are recomputed under the
evaluation pipeline used in this paper and are not values quoted directly
from \citet{li2025mllmcd}.}
\label{tab:disc}
\end{table}

\subsection{Failure Analysis}
\label{app:failure}
Two annotators independently labeled 164 residual errors, with a third
annotator adjudicating disagreements: 45 ($27.4\%$) involved spatially
ambiguous intervention targets, 31 ($18.9\%$) had effective horizons exceeding
$T$, 24 ($14.6\%$) involved occluded or off-screen interactions, and 64
($39.0\%$) had other or mixed causes. The first three regimes account for
$61.0\%$ in aggregate.

\subsection{Numerical Consistency}
OOD and Drop are computed per run from unrounded values as defined in
Appendix~\ref{app:conventions}. The finite-loss display bins summarize the
100 checkpoint observations described in
Appendix~\ref{app:ident-protocol}, whereas the failure-analysis percentages
are computed from the integer annotation counts in
Appendix~\ref{app:failure}. Aggregation of discovery results follows the
scope stated in each table caption. All values are rounded only for display.

\section{Implementation Details}
\label{sec:impl}

\subsection{Lightweight Projection Heads}
InfluenceField and the applicable lightweight task head are trained together
with the limited decoder components specified below; all remaining backbone
modules are frozen. The \emph{discovery head} mean-pools directed-influence
features for each candidate variable pair and passes them through a two-layer
MLP ($512\!\to\!256\!\to\!3$ with GELU). It classifies each ordered pair as
\emph{no edge}, \emph{forward}, or \emph{backward}. The \emph{QA head}
uses the fused representation from Sec.~\ref{sec:fusion}. A linear map projects
the propagated field into token-embedding space, and the result is prepended
to the language decoder as a $16$-token soft prompt.

\subsection{Backbone Freezing Strategy}
The visual encoder is fully frozen. During stage~3 and downstream
fine-tuning, the final two language-decoder transformer blocks and the
decoder output projection are updated together with InfluenceField and the
applicable task head; all other backbone modules remain frozen. This
restricted update preserves most of the pretrained vision--language
representation while allowing the decoder to use the field-derived tokens.

\subsection{Three-Stage Progressive Training}
\label{app:stages}
Training follows the schedule of Sec.~\ref{sec:fusion}.
\emph{Stage 1 (field grounding, 6 epochs, CLEVRER \citep{yi2019clevrer}):}
language modeling with
$\mathcal{L}_{\mathrm{lm}}+\lambda_3\mathcal{L}_{\mathrm{sp}}
+\lambda_4\mathcal{L}_{\mathrm{sm}}$ only, so the field learns a stable
spatial representation before any causal objective is applied.
\emph{Stage 2 (structure refinement, 4 epochs, CLEVRER):} adds
$\lambda_1\mathcal{L}_{\mathrm{inv}}$ over environment pairs of the same
scene, in the spirit of invariance-based causal discovery
\citep{peters2016,vonkugelgen2021}; nuisance factors vary through color jitter
and background
substitution, while viewpoint perturbation is excluded because it changes
the field's spatial indexing (Sec.~\ref{sec:fusion}).
\emph{Stage 3 (counterfactual refinement, 5 epochs, CITRIS
\citep{lippe2022citris} and Causal3DIdent \citep{vonkugelgen2021}):}
adds $\lambda_2\mathcal{L}_{\mathrm{cf}}$ using ground-truth intervention
pairs; the target field $\mathbf{F}^{k}_{\mathrm{gt}}$ is produced by an EMA
copy of the encoder (decay $0.999$) under stop-gradient (Eq.~\eqref{eq:loss-cf}).
\emph{Downstream (5 epochs):} task fine-tuning with all losses active.
All field modules remain differentiable end-to-end across stages.

\subsection{Intervention Operations}
Three intervention types instantiate Eq.~\eqref{eq:field-int}, all using the
same kernel mask whose bandwidth is tied to the interpolation bandwidth
$\sigma$.
\emph{Object removal} replaces the target with a learnable null state trained
to represent absence. \emph{Attribute modification} applies a lightweight
two-layer transformation conditioned on the requested change.
\emph{Custom interventions} accept an arbitrary target value
$v\in\R^C$ supplied by the caller. Factual and counterfactual trajectories
then share the same transition operator and readout.

\subsection{Training and Inference Procedures}
Algorithms~\ref{alg:train} and~\ref{alg:infer} summarize one training step
and counterfactual inference.

\begin{algorithm}[!htbp]
\caption{InfluenceField training step}
\label{alg:train}
\begin{algorithmic}[1]
\REQUIRE video $X$, question $q$, answer $y$; paired environment $X'$
(stage $\ge2$); intervention pair $(\mathcal{I}, \{\mathbf{F}^k_{\mathrm{gt}}\})$
(stage $\ge3$); stage-dependent weights $\lambda_{1..4}$
\STATE $\{z_i\}\leftarrow\mathrm{VisualEncoder}(X)$;\quad
$\mathbf{F}^0\leftarrow\sum_i w_i(p)\,\phi(z_i)$ \hfill$\triangleright$ Eq.~\eqref{eq:field-init}
\FOR{$t=0$ \TO $T-1$}
  \STATE $\hat G^t\leftarrow\mathrm{DirectionalInfluence}(\mathbf{F}^t)$
  \hfill$\triangleright$ Eqs.~\eqref{eq:influence}--\eqref{eq:attn}
  \STATE $\mathbf{F}^{t+1}\leftarrow
  \alpha(\hat G^t)^\top\mathbf{F}^t+(1-\alpha)\widetilde{\mathbf{F}}^t$
  \hfill$\triangleright$ Eq.~\eqref{eq:propagation}
\ENDFOR
\STATE $o\leftarrow\mathrm{Readout}(\mathrm{CrossAttn}(\mathbf{F}^T,
\mathbf{L},\mathbf{L}))$;\quad
$\mathcal{L}\leftarrow\mathcal{L}_{\mathrm{lm}}(o,y)
+\lambda_3\mathcal{L}_{\mathrm{sp}}+\lambda_4\mathcal{L}_{\mathrm{sm}}$
\IF{stage $\ge2$}
  \STATE $\mathcal{L}\mathrel{+}=\lambda_1\,
  \mathcal{L}_{\mathrm{inv}}(\mathbf{F}^T,\mathbf{F}'^T)$
\ENDIF
\IF{stage $\ge3$}
  \STATE $\mathbf{F}^0_{\mathcal I}\leftarrow\mathcal{I}(\mathbf{F}^0)$
  \hfill$\triangleright$ Eq.~\eqref{eq:field-int}
  \STATE roll out $\mathbf{F}^k_{\mathcal I}=\mathcal{T}^k(\mathbf{F}^0_{\mathcal I})$
  with the \emph{same} operator; \ $\mathcal{L}\mathrel{+}=\lambda_2\,
  \mathcal{L}_{\mathrm{cf}}(\{\mathbf{F}^k_{\mathcal I}\},
  \{\mathbf{F}^k_{\mathrm{gt}}\})$
\ENDIF
\STATE update trainable parameters by $\nabla\mathcal{L}$
\end{algorithmic}
\end{algorithm}

\begin{algorithm}[!htbp]
\caption{Counterfactual inference}
\label{alg:infer}
\begin{algorithmic}[1]
\REQUIRE video $X$, counterfactual query $(q,\mathcal{I})$
\STATE construct $\mathbf{F}^0$ and the factual rollout $\mathbf{F}^T$ as in
Algorithm~\ref{alg:train}
\STATE $\mathbf{F}^0_{\mathcal I}\leftarrow\mathcal{I}(\mathbf{F}^0)$;\quad
roll out $\mathbf{F}^T_{\mathcal I}$ with the same transition operator
\STATE \textbf{return}
$o_{\mathcal I}\leftarrow\mathrm{Readout}(\mathrm{CrossAttn}
(\mathbf{F}^T_{\mathcal I},\mathbf{L},\mathbf{L}))$; optionally contrast
with the factual $o$ for ``what changed'' queries
\end{algorithmic}
\end{algorithm}

\subsection{Complexity}
With $K=196$ field locations ($14\times14$), channel dimension $C=256$,
and propagation depth $T=3$, each propagation step costs $O(K^2C)$ to form
the dense affinity matrix and $O(K^2C)$ for dense influence transport; the
location-wise projections cost $O(KC^2)$. The total field overhead across
$T$ steps is therefore
\[
O\!\left(T(K^2C+KC^2)\right).
\]
The sigmoid gate attenuates weak edges but, without an explicitly
implemented top-$k$ or sparse operator, does not reduce this worst-case
complexity. At the grid size used here, the measured overhead remains small
relative to the 8B backbone, consistent with the matched latency profile in
Appendix~\ref{app:sensitivity}.

\section{Additional Results}
\label{sec:addresults}

\subsection{Hard-Subset Analysis}
Table~\ref{tab:hard} evaluates the CausalVQA subset on which the bare
backbone is weakest, selected as the bottom difficulty quartile within each
category on a held-out calibration split. Planning ($+15.5$) and Hypothetical
($+10.2$) improve most and Counterfactual least ($+1.8$); the aggregate
Reasoning gain is $+9.4$, smaller than on the full benchmark, so the hardest
questions remain hard for both models.

\begin{table}[!htbp]\centering\footnotesize
\setlength{\tabcolsep}{2.6pt}
\begin{tabular}{lccccc}
\toprule
Method & Ant. & Ctrf. & Hypo. & Plan. & Reas.\\
\midrule
Qwen3-VL-8B & 21.8 & 38.9 & 42.6 & 34.5 & 31.1\\
InfluenceField & 30.1 & 40.7 & 52.8 & 50.0 & 40.5\\
\midrule
$\Delta$ & +8.3 & +1.8 & +10.2 & +15.5 & +9.4\\
\bottomrule
\end{tabular}
\caption{Accuracy (\%) on the hard subset of CausalVQA.}
\label{tab:hard}
\end{table}

\subsection{Published NExT-QA Baselines}
\label{app:pubnextqa}
For historical context, the primary CaKE-LM report \citep{su2023cakelm}
gives zero-shot NExT-QA validation accuracies of
$35.28/34.71/36.89/34.77$ for Causal/Why/How/All with HGA and
$35.70/35.31/36.81/34.85$ with CoMem. These results use the original
report's different Causal/Why/How taxonomy and video-QA training pipeline and
are therefore not directly matched to Table~\ref{tab:nextqa}.

\subsection{Relation to MLLM-CD}
MLLM-CD \citep{li2025mllmcd} is the closest discovery pipeline: it also queries
multimodal
models for causal statements over variable pairs. It differs in three ways
that Table~\ref{tab:mag-full} isolates. First, MLLM-CD aggregates pairwise
textual judgments, whereas InfluenceField reads structure from a trained
influence operator, so its statements are constrained to be mutually
consistent under propagation. Second, MLLM-CD has no intervention channel;
our counterfactual objective supplies the directional signal that pairwise
prompting lacks, which is where the AF gains concentrate. Third, ESHD
improves most on MAG, whose larger graph punishes locally plausible but
globally inconsistent edge sets.

\section{Additional Discussion}
\label{sec:discussion}

\paragraph{From correlational to causal representations.} The failure mode
that motivates this work is architectural, not merely statistical: a
correlational encoder can satisfy factual training objectives while
encoding structure only up to the similarity orbit of
Lemma~\ref{lem:nonident}, and no amount of factual data escapes that
orbit.
This is the representational form of the interventional-data requirement that
underlies causal representation learning \citep{pearl2009,scholkopf2021}. The
interventional objectives are therefore not regularizers but the source
of the causal content; the controls of Table~\ref{tab:controls} (C2 vs.\ full)
measure exactly this gap.

\paragraph{Continuous fields versus discrete graphs.} Discrete graph
modules, and slot- or object-centric factorizations more generally
\citep{locatello2020slot,yu2024}, commit to a node set before learning begins;
a continuous field, in the spirit of neural field representations
\citep{mildenhall2021nerf,pumarola2021dnerf},
defers that commitment, letting intervention supervision determine which
spatial regions behave as causal units. The cost is conditioning: the same
kernel that makes the field differentiable makes its inversion
ill-conditioned as $\sigma$ grows (Lemma~\ref{lem:bandwidth},
Table~\ref{tab:sensitivity}, panel (a)), which is why the bandwidth is a theory-relevant
hyperparameter rather than a stylistic choice.

\bibliographystyle{unsrtnat}
\clearpage
\bibliography{references}

\end{document}